\newcommand{\ideal}{{{\mathrm{ideal}}}}
\newcommand{\fixlist}{\addtolength{\itemsep}{-5pt}}
\newcommand{\capacity}{{{{\mathrm{cap}}}}}
\newcommand{\pos}{{{{\mathrm{pos}}}}}
\newcommand{\bordainc}{{{{\mathrm{B, inc}}}}}
\newcommand{\bordadec}{{{{\mathrm{B, dec}}}}}
\newcommand{\OPT}{{{{\mathrm{OPT}}}}}
\newcommand{\opt}{{{{\mathrm{opt}}}}}
\newcommand{\sat}{{{{\mathrm{sat}}}}}
\newcommand{\E}{\mathop{\mathbb E}}
\newcommand{\w}{{{{\mathrm{w}}}}}
\title{Achieving Fully Proportional Representation: Approximability
  Results\footnote{This paper combines and extends results presented
    at IJCAI-2013 (paper titled ``Fully Proportional Representation as
    Resource Allocation: Approximability Results''; the paper
    contained most of our theoretical results) and at AAMAS-2012
    (paper titled ``Achieving Fully Proportional Representation is
    Easy in Practice''; the paper contained most of our experimental
    results).}}
\date{}
\author{
Piotr Skowron\\
       {University of Warsaw}\\
       {Warsaw, Poland}\\
\and
Piotr Faliszewski\\
       {AGH University}\\
       {Krakow, Poland}\\
% 3rd. author
\and
Arkadii Slinko\\
       {University of Auckland}\\
       {Auckland, New Zealand}\\
}
\newtheorem{theorem}{Theorem}
\newtheorem{definition}{Definition}
\newtheorem{proposition}[theorem]{Proposition}
\newtheorem{lemma}[theorem]{Lemma}
\newcommand{\np}{{\mathrm{NP}}}
\newcommand{\fpt}{{\mathrm{FPT}}}
\newcommand{\wone}{{\mathrm{W[1]}}}
\newcommand{\wtwo}{{\mathrm{W[2]}}}
\newcommand{\p}{{\mathrm{P}}}
\newcommand{\naturals}{{{\mathbb{N}}}}
\newcommand{\calA}{{{\mathcal{A}}}}
\newcommand{\calF}{{{\mathcal{F}}}}
\newcommand{\calS}{{{\mathcal{S}}}}
\begin{document}

\maketitle

\begin{abstract}
  We study the complexity of (approximate) winner determination under
  the Monroe and Chamberlin--Courant multiwinner voting rules,
  which determine the set of representatives by optimizing the total (dis)satisfaction of the voters with their representatives. 
 The total (dis)satisfaction  is calculated either as the sum of individual (dis)satisfactions  (the
  utilitarian case) or as the (dis)satisfaction of the worst off voter
  (the egalitarian case). We provide good approximation algorithms for
  the satisfaction-based utilitarian versions  of the Monroe and Chamberlin--Courant 
  rules, and inapproximability results for the dissatisfaction-based utilitarian versions of them and also for all egalitarian cases.
  Our algorithms are applicable and particularly appealing when voters submit truncated
  ballots.
  We provide experimental evaluation of the algorithms both on
  real-life preference-aggregation data and on synthetic data. These experiments show
  that our simple and fast algorithms can in many cases find  near-perfect solutions.
\end{abstract}

\section{Introduction}\label{sec::introduction}
We study the complexity of (approximate) winner determination under
the Monroe~\cite{monroeElection} and
Chamberlin--Courant~\cite{ccElection} multiwinner voting rules,
which aim at selecting a group of candidates that best represent
the voters. Multiwinner elections are important both for human
societies (e.g., in indirect democracies for electing committees of
representatives like parliaments) and for software multiagent systems (e.g., for
recommendation systems~\cite{budgetSocialChoice}), and thus it is
important to have good multiwinner rules and good algorithms for them.
The Monroe and Chamberlin--Courant rules are particularly appealing
because they create an explicit (and, in some sense, optimal)
connection between the elected committee members and the voters; each
voter knows his or her representative and each committee member knows
to whom he or she is accountable. In the context of recommendation systems this means that every selected item is personalized, i.e., recommended to a particular user. Moreover, the Monroe rule ensures the proportionality of the representation.
 We assume that $m$ candidates participate in the election and that the society consists of $n$ voters, who each rank the
candidates, expressing their preferences about who they would like to see as their representative.

When choosing a $K$-member committee, the Monroe and
Chamberlin--Courant rules work as follows.    For each voter they assign a single candidate as their representative, 
respecting the following rules: 
\begin{itemize}
\item[(a)] 
altogether exactly $K$ candidates are assigned to the voters. For the
Monroe rule, each candidate is assigned either to about
$\frac{n}{K}$ voters or to none; for the Chamberlin--Courant rule there
is no such restriction and each committee member might be representing a different number
  of voters. The committee should take this into account in its
  operation, i.e., by means of weighted voting. 
  \item[(b)] the candidates are selected and assigned to the
voters optimally minimizing the total (societal) dissatisfaction or maximizing the total (societal) satisfaction.
\end{itemize}
The total (dis)satisfaction is calculated on the basis of individual (dis)satisfactions. 
We assume that there is a function $\alpha \colon \naturals \to \naturals$
such that $\alpha(i)$ measures how well a voter is represented by the
candidate that this voter ranks as $i$'th best. The
function $\alpha$ is the same for each voter.
We can view $\alpha$ either as a \emph{satisfaction function} (then it
should be a decreasing one) or as a \emph{dissatisfaction function}
(then it should be an increasing one).  For example, it is typical to
use the Borda count scoring function whose $m$-candidate
dissatisfaction variant is defined as $\alpha^m_{\bordainc} = i-1$, and
whose satisfaction variant is $\alpha^m_{\bordadec} = m-i$.
In the utilitarian variants of the rules, the assignment should
maximize (minimize) the total satisfaction (dissatisfaction) calculated as the sum of the voters' individual satisfactions
(dissatisfactions) with their representatives. In the egalitarian
variants, the assignment should maximize (minimize) the total satisfaction
(dissatisfaction) calculated as the  satisfaction
(dissatisfaction) of the worst-off voter.

The Monroe and Chamberlin--Courant rules create a useful
connection between the voters and their representatives  that
makes it possible to achieve both candidates' accountability to the voters, and proportional representation of
voters' views. Among common voting rules, the Monroe and Chamberlin--Courant
rules seem to be unique in having \emph{both} the accountability and
the proportionality properties simultaneously. %\par
%
% Monroe's rule and the Chamberlin-Courant's rule that have \emph{both}
% these properties.
%
For example, First Past the Post system (where the voters are
partitioned into districts with a separate single-winner Plurality
election in each) can give very disproportionate results (forcing some
of the voters to be represented by candidates they dislike). On the
other side of the spectrum are the party-list systems, which achieve
perfect proportionality.  In those systems the voters vote for the
parties, based on these votes each party receives some number of seats
in the parliament, and then each party distributes the seats among its
members (usually following a publicly available list of the party's
candidates).  This makes the elected candidates feel more accountable
to apparatchiks of their parties than to the voters.  Somewhere
between the First Past the Post system and the party-list systems, we
have the single transferable vote rule (STV), but for STV it is
difficult to tell which candidate represents which voters.

Unfortunately, the Monroe and
Chamberlin--Courant rules have one crucial drawback that makes them
impractical. It is $\np$-hard to tell who the winners are!
Specifically, $\np$-hardness of winner determination under the Monroe
and Chamberlin--Courant rules was shown by Procaccia et
al.~\cite{complexityProportionalRepr} and by Lu and
Boutilier~\cite{budgetSocialChoice}.
%under various (dis)satisfaction scoring functions, and
Worse yet, the hardness holds even if various natural parameters of
the election are small~\cite{fullyProportionalRepr}.  Rare easy cases
include those, where the committee to be elected is small, or we
consider the Chamberlin--Courant rule and the
voters have single-peaked~\cite{fullyProportionalRepr} or
single-crossing preferences~\cite{sko-yu-fal-elk:c:single-crossing-monroe-cc}. 
%(we warn
%the reader that the easiness for the case of single-peaked and
%single-crossing profiles does not hold universally for all variants of
%the rules; for example, winner determination for Monroe is $\np$-hard
%even under single-crossing preferences and it is suspected also to be true
%for single-peaked preferences).
%
%
% (at least for the case of Chamberlin and Courant's rule).
%

Lu and Boutilier~\cite{budgetSocialChoice} proposed to use approximation
algorithms
and have given the first such algorithm for the Chamberlin--Courant
system. Their procedure outputs an assignment that achieves no less
than $1 - \frac{1}{e} \approx 0.63$ fraction of the optimal voter
satisfaction.  
However, the approximation ratio $0.63$ here means that it is possible that, on  average, each agent is
represented by a candidate that this agent prefers to only about
63\% of the candidates, even if there is a perfect solution that
assigns each agent to their most preferred candidate. Such issues, however,
would not occurr if we had a constant-factor approximation algorithm
minimizing the total dissatisfaction. Indeed, if a perfect solution exists, 
then the optimal dissatisfaction is zero 
and a constant-factor approximation algorithm  must also output this perfect solution.

% Yet while there is no difference between finding an
% assignment with the optimal satisfaction or with the optimal
% dissatisfaction, as soon as approximations come into play the
% distinction becomes tricky. For example, under Chamberlin--Courant's
% system with Borda count scoring function, a
% $\frac{1}{2}$-approximation algorithm may assign each voter to a
% candidate somewhere in the middle of this voter's preference order,
% even if there is a feasible solution that assigns each voter to his or
% her most preferred one.  Yet, a $2$-approximation algorithm focusing
% on voter dissatsifaction would give very good results.

%give results of very high quality.

%On the other hand, it seems that a
%$2$-approximation algorithm would give results of very high quality.

The use of approximation algorithms in real-life applications requires some discussion. 
For example, their use is naturally justified in the context of recommendation systems. Here the strive for optimality is not crucial since a good but not optimal recommendation still has useful information and nobody would object if we replaced the exact recommendation with an approximate one (given that the exact one is hard to calculate). For example, Amazon.com may recommend you a book on gardening which may not be the best book for you on this topic, but still full of useful advice. For such situations, Herbert Simon \cite{sim:j:satisficing} used the term `satisficing,' instead of optimizing, to explain the behavior of decision makers under circumstances in which an optimal solution cannot be easily determined.  On page 129 he wrote: ``Evidently, organisms adapt well enough to ÔsatisficeÕ; they do not, in general, `optimize'.''  Effectively, what Simon says is that the use of approximation algorithms fits well with the human nature. 

Still, the use of approximation algorithms in elections requires some care. It is conceivable that the electoral commission finds an allocation of voters to candidates with a certain value of (dis)satisfaction and one of the parties participating in the election finds an allocation with a better value. This can lead to a political deadlock. There are two ways of avoiding this. Firstly, an approximation algorithm can be fixed by law. In such a case, it becomes an acting voting rule and a new way to measure fairness in the society. Secondly, an electoral commission may calculate the allocation, but also publish the raw data and issue a call for submissions. If,  within the period specified by law, nobody can produce a better allocation, then the committee goes ahead and announces the result. If someone produces a better allocation, then the electoral commission uses the latter one. 

 The use of approximation algorithms is even more natural in elections with partial ballots. Indeed, even if we use an exact algorithm to calculate the winners, the results will be approximate anyway since the voters provide us with approximations of their real preferences and not with their exact preferences. 

\subsection{Our Results}\label{sec:our}

In this paper we focus on approximation algorithms for
winner determination under the Monroe and Chamberlin--Courant rules. Our
first goal is to seek algorithms that find assignments for which the 
dissatisfaction of voters is within a fixed bound of the optimal one.
Unfortunately, we have shown that under standard complexity-theoretic
assumptions such algorithms do not exist.  Nonetheless, we found good algorithms that maximize voter's satisfaction.  Specifically, we have obtained the following results:
\begin{enumerate}
\fixlist
\item The Monroe and Chamberlin--Courant rules are hard to
  approximate up to any constant factor for the dissatisfaction-based
  cases (both utilitarian and egalitarian ones; see
  Theorems~\ref{theorem:noApprox1},~\ref{theorem:noApprox2},~\ref{theorem:noApprox3}~and~\ref{theorem:noApprox4})
  and for the satisfaction-based egalitarian cases (see
  Theorems~\ref{theorem:noApprox5} and~\ref{theorem:noApprox6}).

\item For the satisfaction-based utilitarian framework we show the following.  For the Monroe rule with the Borda
  scoring function we give a  $(0.715-\epsilon)$-approximation algorithm (often, the ratio is much
  better; see Section~\ref{sec:algorithms}). In case of an arbitrary positional scoring
  function we give a ($1 - \frac{1}{e}$)-approximation algorithm (Theorem~\ref{thm:gmMonroe}).
  For the Chamberlin--Courant rule with the Borda scoring function
we give a polynomial-time approximation scheme
  (that is, for each $\epsilon$, $0 < \epsilon < 1$, we have a
  polynomial-time $(1-\epsilon)$-approximation algorithm; see
  Theorem~\ref{theorem:ptas}).

\item We provide empirical evaluation of our algorithms for the
  satisfaction-based utilitarian framework, both on synthetic  and 
  real-life data. This evaluation shows that in practice our best
  algorithms achieve at least $0.9$ approximation ratios, and even
  better results are typical (see Section~\ref{sec:experiments}).

\item We show that our algorithms work very well in the setting where
  voters do not necessarily rank all the candidates, but only provide the so-called
  truncated ballots, in which they rank  several most preferred
  candidates (usually at least three). We provide theoretical guarantees on the performance of
  our algorithms (Propositions~\ref{prop:monTruncated}
  and~\ref{lemma:greedyCCTruncated}) as well as empirical evaluation (see
  Section~\ref{sec:truncated}).

\end{enumerate}

Our results show that, as long as one is willing to accept approximate
solutions, it is possible to use the utilitarian variants of the Monroe
and Chamberlin--Courant rules in practice. This view is justified
both from the theoretical and from the empirical point of view. Due to
our negative results, we did not perform empirical evaluation for the
egalitarian variants of the rules, but we believe that this is an
interesting future research direction.

\subsection{Related Work}\label{sec:related}

A large number of papers are related to our research in
terms of methodology (the study of computational complexity and
approximation algorithms for winner determination under various
$\np$-hard election rules), in terms of perspective and motivation
(e.g., due to the resource allocation view of Monroe and
Chamberlin--Courant rules that we take), and in terms of formal
similarity (e.g., winner determination under the Chamberlin--Courant
rule can be seen as a form of the facility location problem). Below we
review this related literature.

There are several single-winner voting rules for which winner
determination is known to be $\np$-hard. These rules include, for
example, Dodgson's
rule~\cite{bar-tov-tri:j:who-won,hem-hem-rot:j:dodgson,bet-guo-nie:j:dodgson-parametrized},
Young's
rule~\cite{rot-spa-vog:j:young,bet-guo-nie:j:dodgson-parametrized},
and Kemeny's
rule~\cite{bar-tov-tri:j:who-won,hem-spa-vog:j:kemeny,bet-fel-guo-nie-ros:j:fpt-kemeny-aaim}.
For the single-transferable
vote rule (STV), the winner determination problem becomes $\np$-hard if we use the
so-called parallel-universes tie-breaking~\cite{con-rog-xia:c:mle}.
Many of these hardness results hold even in the sense of parameterized
complexity theory (however, there also is a number of fixed-parameter
tractability results; see the references above for details).

These hardness results motivated the search for approximation
algorithms. There are now very good approximation algorithms for
Kemeny's rule~\cite{ail-cha-new:j:kemeny-approx,cop-fla-rud:j:kemeny-approx,ken-sch:c:kemeny-few-errors}
and for Dodgson's
rule~\cite{mcc-pri-sli:j:dodgson,hem-hom:j:dodgson-greedy,car-cov-fel-hom-kak-kar-pro-ros:j:dodgson,fal-hem-hem:j:multimode,car-kak-kar-pro:c:dodgson-acceptable}.
In both cases the results are, in essence, optimal. For Kemeny's rule
there is a polynomial-time approximation
scheme~\cite{ken-sch:c:kemeny-few-errors} and for Dodgson's rule the
achieved approximation ratio is optimal under standard
complexity-theoretic
assumptions~\cite{car-cov-fel-hom-kak-kar-pro-ros:j:dodgson}
(unfortunately, the approximation ratio is not constant but depends
logarithmically on the number of candidates). On the other hand, for
Young's rule it is known that no good approximation algorithms
exist~\cite{car-cov-fel-hom-kak-kar-pro-ros:j:dodgson}.

The work of Caragiannis et
al.~\cite{car-kak-kar-pro:c:dodgson-acceptable} and of Faliszewski et
al.~\cite{fal-hem-hem:j:multimode} on approximate winner determination
for Dodgson's rule is particularly interesting from our perspective. In
the former, the authors advocate treating approximation algorithms for
Dodgson's rule  as voting rules in their own right and design them to have
desirable properties.  %and consider the meaning of ``approximate winner determination.'' 
In the latter, the authors show that a
well-established voting rule (so-called Maximin rule) is a reasonable
(though not optimal) approximation of Dodgson's rule. This perspective
is important for anyone interested in using approximation algorithms
for winner determination in elections (as might be the case for our
algorithms for the Monroe and Chamberlin--Courant rules).

The hardness of the winner determination problem for the Monroe and
Chamberlin--Courant rules have been considered in several papers.
Procaccia, Rosenschein and Zohar~\cite{complexityProportionalRepr} were the first to show the hardness of these two rules for the case of a particular approval-style dissatisfaction function.  Their results were
complemented by Lu and Boutilier~\cite{budgetSocialChoice}, Betzler,
Slinko and Uhlmann~\cite{fullyProportionalRepr}, Yu, Chan, and
Elkind~\cite{yu-cha-elk:c:cc-sp-trees}, Skowron et
al.~\cite{sko-yu-fal-elk:c:single-crossing-monroe-cc}, and Skowron and
Faliszewski~\cite{sko-fal:t:max-cover}. These are showing the hardness in case of the Borda
dissatisfaction function,  obtain results on parameterized hardness
of the two rules, and results on hardness (or easiness)
for the cases where the profiles are single-peaked or single-crossing.
Further, Lu and Boutilier~\cite{budgetSocialChoice} initiated the
study of approximability for the Chamberlin--Courant rule (and were the
first to use satisfaction-based framework). Specifically, they gave the
$(1-\frac{1}{e})$-approximation algorithm for the Chamberlin--Courant
rule. The motivation of Lu and Boutilier was coming from the point of
view of recommendation systems and, in that sense, our view of the
rules is quite similar to theirs.

In this paper we take the view that the Monroe and Chamberlin--Courant rules are 
special cases of the following resource allocation problem. The
alternatives are shareable resources, each with a certain capacity defined as the
maximal number of agents that may share this resource. Each agent has preferences
over the resources and is interested in getting exactly one. The goal
is to select a predetermined number $K$ of resources and to find an optimal allocation of these resources (see
Section~\ref{sec:prelim} for details). This provides a unified framework for the two rules and
reveals the connection of proportional representation problem to other resource allocation problems.
In particular, it closely resembles multi-unit resource allocation
with single-unit demand~\cite[Chapter
11]{ley-sho:b:multiagent-systems} (see also the work of Chevaleyre et
al.~\cite{Chevaleyre06issuesin} for a survey of the most
fundamental issues in the multiagent resource allocation theory) and
resource allocation with sharable indivisible
goods~\cite{Chevaleyre06issuesin,AiriauEndrissAAMAS2010}.
Below, we point out other connections of the Monroe and
Chamberlin--Courant rules to several other problems. 

\begin{description}

\item[Facility Location Problems.]  In the facility location problem,
  there are $n$ customers located in some area and an authority, say a
  city council, that wants to establish a fixed number $k$ of
  facilities to serve those customers.  Customers incur certain costs
  (say transportation costs) of using the facilities. Further, setting
  up a facility costs as well (and this cost may depend on the
  facility's location).  The problem is to find $k$ locations for the
  facilities that would minimize the total (societal) cost. If these
  facilities have infinite capacities and can serve any number of
  customers, then each customer would use his/her most preferred
  (i.e., closest) facility and the problem is similar to finding the
  Chamberlin--Courant assignment. If the capacities of the facilities
  are finite and equal, the problem looks like finding an assignment
  in the Monroe rule. An essential difference between the two problems
  are the setup costs and the distance metric.  The parameterized
  complexity of the Facility Location Problem was investigated in
  Fellows and Fornau~\cite{FF11}.  The papers of Procaccia et
  al.~\cite{complexityProportionalRepr} and of Betzler et
  al.~\cite{fullyProportionalRepr} contain a brief discussion of the
  connection between the Facility Location Problem and the winner
  determination problem under the Chamberlin--Courant rule.\medskip

\item[Group Activity Selection Problem.]  In the group activity
  selection problem~\cite{dar-elk-kur-lan-sch-woe:t:group-activity} we
  have a group of agents (say, conference attendees) and a set of
  activities (say, options that they have for a free afternoon such as
  a bus city tour or wine tasting).  The agents express
  preferences  regarding the activities and organisers try to allocate agents to activities 
  to maximise their total satisfaction. If there are $m$ possible activities but only $k$ must 
  be chosen by organisers, then we are in the Chamberline-Courant framework, if all activities can take all agents, 
  and in the Monroe framework, if all activities have the same capacities. The difference is that those 
  capacities may be different and also that in the Group Activity Selection Problem we may allow expression of more complicated preferences. 
For example, an agent may express the following preference
  ``I like wine-tasting best provided that at most
  $10$ people participate in it, and otherwise I prefer a bus city tour provided
  that at least $15$ people participate, and otherwise I prefer to not
  take part in any activity''. The Group Activity Selection Problem is more general than the winner determination in the Monroe and Chamberline-Courant rules. Some hardness and easiness results for this problem were obtained in~\cite{dar-elk-kur-lan-sch-woe:t:group-activity}, but the investigation of this problem has only started.
    
  \iffalse
  \item[Coalition Structure Generation.] It is quite natural to view
  winner determination under the Monroe and Chamberlin--Courant rules
  as coalition structure generation problems in cooperative games.
  Here the voters are the players of a certain cooperative game, and
  the value of each ``coalition'' of voters is the (dis)satisfaction
  they derive from being assigned to a given candidate. The goal is to
  partition the set of the voters into $K$ disjoint coalitions (and
  assign distinct candidates to them) in a way that maximizes the sum
  of coalition values (or, in the egalitarian setting, the value of
  the worst off coalition).  Formally, this is a very special case of
  coalition structure generation with externalities.  Externalities
  come from the fact that no two coalitions can be assigned to the
  same candidate. (However, Monroe~\cite{monroeElection} and Betzler
  et al.~\cite{fullyProportionalRepr} mention variants of Monroe's
  rule where each ``coalition of voters'' can be simply assigned to
  its most preferred candidate; winner determination in this case maps
  to a coalition structure generation problem without externalities.)
  It would be very interesting to compare the efficiency of algorithms
  developed for coalition structure generation with that of (exact)
  algorithms for the Monroe and Chamberlin--Courant rules.\
\fi
\end{description}

The above connections show that, indeed, the complexity of winner
determination under the Monroe and Chamberlin--Courant voting rules are
interesting, can lead to progress in several other directions, and may
have impact on other applications of artificial intelligence.

\section{Preliminaries}\label{sec:prelim}

We first define basic notions such as preference orders and
positional scoring rules. Then we present our Resource Allocation
Problem in full generality and discuss which restrictions of it correspond to the winner determination problem for the Monroe
and Chamberlin--Courant voting rules. Finally, we briefly
recall relevant notions regarding computational complexity. \medskip

\noindent
\textbf{Preferences.}\quad
For each $n \in \naturals$, by $[n]$ we mean $\{1, \ldots, n\}$.
We assume that there is a set $N = [n]$ of \emph{agents} and a set $A
= \{a_{1}, \dots a_{m}\}$ of \emph{alternatives}. Each alternative $a\in A$
has the \emph{capacity} $\capacity_a \in \naturals$, which gives the total number of 
agents that can be assigned to it.  Further, each agent $i$ has a
\emph{preference order} $\succ_i$ over $A$, i.e., a strict linear
order of the form $a_{\pi(1)} \succ_{i} a_{\pi(2)} \succ_{i} \dots
\succ_{i} a_{\pi(m)}$ for some permutation $\pi$ of $[m]$. For an
alternative $a$, by $\pos_i(a)$ we mean the position of $a$ in the $i$'th
agent's preference order. For example, if $a$ is the most preferred
alternative for $i$ then $\pos_i(a) = 1$, and if $a$ is the least
preferred one then $\pos_i(a) = m$.  A collection $V = (\succ_1,
\ldots, \succ_n)$ of agents' preference orders is called a
\emph{preference profile}.

% We write $\calL(A)$ to denote the set of
% all possible preference orders over $A$. Thus, for preference profile
% $V$ of $n$ agents we have $V \in \calL(A)^n$.

We will often include subsets of the alternatives in the descriptions
of preference orders.  For example, if $A$ is the set of alternatives
and $B$ is some nonempty strict subset of $A$, then by $B \succ A-B$
we mean that for the preference order $\succ$ all alternatives in $B$
are preferred to those outside of $B$.

%This notation does not define the orders
%within $B$ or $A-B$, so, unless specified otherwise, we can assume any
%easily computable orders.

A \emph{positional scoring function} (PSF) is a function $\alpha^m \colon
[m] \rightarrow \naturals$. A PSF $\alpha^m$ is an \emph{increasing
  positional scoring function} (IPSF) if for each $i,j \in [m]$, if $i
< j$ then $\alpha^m(i) < \alpha^m(j)$. Analogously, a PSF $\alpha^m$ is
a \emph{decreasing positional scoring function} (DPSF) if for each
$i,j \in [m]$, if $i < j$ then $\alpha^m(i) > \alpha^m(j)$. 

Intuitively, if $\beta^m$ is an IPSF then $\beta^m(i)$ can represent the
\emph{dissatisfaction} that an agent suffers when assigned to an
alternative that is ranked $i$'th in his or her preference
order. Thus, we assume that for each IPSF $\beta^m$ it holds that
$\beta^m(1) = 0$ (an agent is not dissatisfied by her top
alternative).  Similarly, a DPSF $\gamma^m$ measures an agent's
satisfaction and we assume that for each DPSF $\gamma^m$ it holds that
$\gamma^m(m)=0$ (an agent is completely not satisfied being assigned his or her least desired alternative).
% (an agent is not satisfied by his or her least
%desirable alternative).
%
Sometimes we write $\alpha$ instead of $\alpha^m$, when it cannot lead to a confusion.

We will often speak of families $\alpha$ of IPSFs (DPSFs) of the form 
$\alpha=(\alpha^m)_{m \in \naturals}$, where $\alpha^m \mbox{ is a PSF on }[m]$,
such that:
\begin{enumerate}
\fixlist
\item For a family of IPSFs  it holds that $ \alpha^{m+1}(i) = \alpha^m(i)$ for all
  $m \in \naturals$ and $i\in [m]$.
\item For a family of DPSFs  it holds that $ \alpha^{m+1}(i+1) = \alpha^m(i)$ for all
  $m \in \naturals$ and $i\in [m]$.
\end{enumerate}
In other words, we build our families of IPSFs (DPSFs) by appending
(prepending) values to functions with smaller domains.   To simplify notation, we will refer to such
families of IPSFs (DPSFs) as \emph{normal} IPSFs (normal
DPSFs). We assume that each function $\alpha^m$ from a family can be computed in polynomial
time with respect to $m$. 
Indeed, we are particularly interested in the Borda families of IPSFs
and DPSFs defined by $\alpha^{m}_{\bordainc}(i) = i-1$ and 
$\alpha^{m}_{\bordadec}(i) = m - i$, respectively.\medskip

\noindent
\textbf{Assignment functions.}\quad
A $K$-\emph{assignment function} is any function
$\Phi \colon N \rightarrow A$, such that $\|\Phi(N)\| \leq K$ (that
is, it matches agents to at most $K$ alternatives), and such that 
for every alternative $a \in A$ we have that $\|\Phi^{-1}(a)\| \leq \capacity_a$ (i.e.,
the number of agents assigned to $a$ does not exceed $a$'s capacity $\capacity_a$).

We will also consider partial assignment functions. A partial
$K$-assignment function is defined in the same way as a regular one,
except that it may assign a null alternative, $\bot$, to some of the
agents.  It is convenient to think that for each agent $i$ we have
$\pos_i(\bot) = m$.  In general, it might be the case that a partial
$K$-assignment function cannot be extended to a regular one. This may
happen, for example, if the partial assignment function uses $K$
alternatives whose capacities sum to less than the total number of
voters. However, in the context of Chamberlin--Courant and Monroe
rules it is always possible to extend a partial $K$-assignment
function to a regular one.

Given a normal IPSF (DPSF) $\alpha$, we may consider the following three functions, each assigning a positive integer to any assignment $\Phi$:
\begin{align*}
\ell_{1}^{\alpha}(\Phi) &= \sum_{i=1}^{n}\alpha(\pos_{i}(\Phi(i))), \\
\ell_{\infty}^{\alpha}(\Phi) &= \mathrm{max}_{i = 1}^{n}\alpha(\pos_{i}(\Phi(i))), \\
\ell_{\min}^{\alpha}(\Phi) &= \mathrm{min}_{i = 1}^{n}\alpha(\pos_{i}(\Phi(i))) \textrm{.}
\end{align*}
These functions are built from individual dissatisfaction (satisfaction) functions, so that they can measure the quality of the assignment for the whole society. 
In the utilitarian framework the first one can be viewed as a \emph{total (societal) dissatisfaction function} in the IPSF case and a 
\emph{total (societal) satisfaction function} in the DPSF case. The second and the third can be used, respectively, as a total dissatisfaction and satisfaction functions for IPSF and DPSF cases  in the egalitarian framework. We will omit the word total if no confusion may arise.\medskip

%
%Given a normal IPSF (DPSF) $\alpha$, we consider two
%\emph{dissatisfaction functions}, $\ell_{1}^{\alpha}(\Phi)$ and
%$\ell_\infty^\alpha(\Phi)$,  and two \emph{satisfaction functions},
%$\ell_{1}^{\alpha}(\Phi)$ and $\ell_{\min}^\alpha(\Phi)$, measuring the
%quality of the (partial) assignment $\Phi$ as follows:
%\begin{align*}
%\ell_{1}^{\alpha}(\Phi) &= \sum_{i=1}^{n}\alpha(\pos_{i}(\Phi(i))) \\
%\ell_{\infty}^{\alpha}(\Phi) &= \mathrm{max}_{i = 1}^{n}\alpha(\pos_{i}(\Phi(i))) \\
%\ell_{\min}^{\alpha}(\Phi) &= \mathrm{min}_{i = 1}^{n}\alpha(\pos_{i}(\Phi(i))) \textrm{.}
%\end{align*}

%The former one, $\ell_{1}^{\alpha}$, follows the utilitarian approach
%of measuring agents' total dissatisfaction (satisfaction), whereas the
%latter one, $\ell_{\infty}^{\alpha}$ or $\ell_{\min}^{\alpha}$,
%follows the egalitarian approach by considering the most dissatisfied
%(the least satisfied) agent only.

For each subset of the alternatives $S \subseteq A$ such that $\|S\|
\leq K$, we denote as $\Phi^S_{\alpha}$ the partial $K$-assignment
that assigns agents only to the alternatives from $S$ and such that
$\Phi^S_{\alpha}$ maximizes the utilitarian satisfaction
$\ell_{1}^{\alpha}(\Phi^S_{\alpha})$.  (We introduce this notation
only for the utilitarian satisfaction-based setting because it is
useful to express appropriate algorithms for this case; for other
settings we have hardness results only and this notation would not be
useful.)\medskip

\noindent
\textbf{The Resource Allocation Problem.}\quad
Let us now define the resource allocation problem that forms the base of our study.
This problem stipulates finding an optimal $K$-assignment function, where the optimality is relative to one of the total dissatisfaction or satisfaction functions that we have just introduced. The former is to be minimized and the latter is to be maximized.

\begin{definition}\label{def:assignment}
  Let $\alpha$ be a normal IPSF. An instance of
  $\alpha$-\textsc{DU-Assignment} problem (i.e., of the
  disatisfaction-based utilitarian assignment problem) consists of a
  set of agents $N = [n]$, a set of alternatives $A = \{a_1, \ldots
  a_m\}$, a preference profile $V$ of the agents, and a sequence
  $(\capacity_{a_1}, \ldots, \capacity_{a_m})$ of alternatives' capacities. We ask for an assignment function $\Phi$
  such that:
%  \begin{enumerate}
%  \fixlist
%  \item
(1) $\|\Phi(N)\| \leq K$;
%  \item
(2) $ \|\Phi^{-1}(a) \| \leq \capacity_{a}$ for all ${a \in A}$; and
%(2) $\forall_{a \in A} \| \{ i \mid \Phi(i) = a\}\| \leq \capacity_{a}$, and
%\sum_{i: \Phi(i) = a}w_{i} \leq cap_{a}$, and
%
%  \item
(3) $\ell_{1}^\alpha(\Phi)$ is minimized.
%  \end{enumerate}
\end{definition}

% In other words, in $\alpha$-\textsc{DU-Assignment} we ask for a
% feasible assignment that minimizes the total dissatisfaction of the
% agents without exceeding the budget.

Problem $\alpha$-\textsc{SU-Assignment} (the satisfaction-based
utilitarian assignment problem) is defined identically except that
$\alpha$ is a normal DPSF and condition (3) is replaced with ``($3'$) $\ell_{1}^\alpha(\Phi)$ is maximal.''
%
% (that is, in
% $\alpha$-\textsc{SU-Assignment-Dec} our goal is to maximize total
% satisfaction).
%
If we replace $\ell^\alpha_1$ with $\ell_\infty^\alpha$ in
$\alpha$-\textsc{DU-Assignment} then we obtain problem
$\alpha$-\textsc{DE-Assignment}, i.e., the dissatisfaction-based
egalitarian variant.  If we replace $\ell^\alpha_1$ with $\ell_{\min}^\alpha$
in $\alpha$-\textsc{SU-Assignment} then we obtain problem
$\alpha$-\textsc{SE-Assign\-ment}, i.e., the satisfaction-based
egalitarian variant.

Our four problems can be viewed as generalizations of the winner determination problem for the
Monroe~\cite{monroeElection} and
Chamberlin--Courant~\cite{ccElection} multiwinner voting systems
(see the introduction for their definitions). To model the Monroe
system, it suffices to set the capacity of each alternative to be $
\frac{\|N\|}{K}$ (for simplicity, throughout the paper we assume that
$K$ divides $\|N\|$\footnote{In general, this assumption is not as
  innocent as it may seem. Often dealing with cases there $K$ does not
  divide $\|N\|$ requires additional insights and care. However, for
  our algorithms and results, the assumption simiplifies notation and
  does not lead to obscuring any unexpected difficulties.}). We will
refer to thus restricted variants of our problems as the \textsc{Monroe}
variants.  To represent the Chamberlin--Courant system, we set
alternatives' capacities to $\|N\|$.  We will refer to the so-restricted 
variants of our problems as \textsc{CC} variants.\medskip

% The eight above-defined special cases of our resource allocation
% problem were, in various forms and shapes, considered by Procaccia,
% Rosenschein, and Zohar~\cite{complexityProportionalRepr}, Lu and
% Boutilier~\cite{budgetSocialChoice}, and Betzler, Slinko and
% Uhlmann~\cite{fullyProportionalRepr}.\smallskip

\noindent\textbf{Computational Issues}.\quad
For many normal IPSFs $\alpha$ and, in particular, for the Borda IPSF, even the
above-mentioned restricted versions of the Resource Allocation Problem, namely,
$\alpha$-\textsc{DU-Monroe}, $\alpha$-\textsc{DE-Monroe},
$\alpha$-\textsc{DU-CC}, and $\alpha$-\textsc{DE-CC} are
$\np$-complete~\cite{fullyProportionalRepr,complexityProportionalRepr}
(the same holds for the satisfaction-based variants of the
problems). Thus we seek approximate solutions.

\begin{definition}
  Let $r$ be a real number such that $r \geq 1$ ($0 < r  \leq 1$) and let $\alpha$ be a normal IPSF (a normal DPSF).  An
  algorithm is an $r$-approximation algorithm for
  $\alpha$-\textsc{DU-Assignment} problem (for
  $\alpha$-\textsc{SU-Assignment} problem) if on each instance $I$ it
  returns a feasible assignment $\Phi$ 
  %that meets the budget  constraint and 
  such that $\ell_{1}^{\alpha}(\Phi) \leq r \cdot
  \OPT$ (such that $\ell_{1}^{\alpha}(\Phi) \geq r \cdot
  \OPT$), where $\OPT$ is the optimal total dissatisfaction
  (satisfaction) $\ell_{1}^{\alpha}(\Phi_\OPT)$. %of the optimal
%  assignment $\Phi_\OPT$.
\end{definition}
We define $r$-approximation algorithms for the egalitarian
variants
%of our problems
analogously.  Lu and Boutilier~\cite{budgetSocialChoice} gave a $(1 -
\frac{1}{e})$-approximation algorithm for the \textsc{SU-CC} family of
problems.

Throughout this paper, we will consider each of the \textsc{Monroe}
and \textsc{CC} variants of the problem and for each we will either
prove inapproximability with respect to any constant $r$ (under
standard complexity-theoretic assumptions) or we will present an
approximation algorithm. In our
inapproximability proofs, we will use the following two classic
$\np$-complete problems~\cite{gar-joh:b:int}.

\begin{definition}
  An instance $I$ of \textsc{Set-Cover} consists of set $U = [n]$
  (called the ground set), family $\calF = \{F_{1}, F_{2}, \dots,
  F_{m}\}$ of subsets of $U$, and positive integer $K$. We ask if
  there exists a set $I \subseteq [m]$ such that $\|I\|
  \leq K$ and $\bigcup_{i\in I}F_i = U$.
\end{definition}

\begin{definition}
  \textsc{X3C} is a variant of \textsc{Set-Cover} where $\|U\|$
  is divisible by $3$, each member of $\calF$ has exactly three
  elements, and $K = \frac{\|U\|}{3}$.
\end{definition}

\textsc{Set-Cover} remains $\np$-complete even if we restrict each
member of $U$ to be contained in at most two sets from $\calF$ (it
suffices to note that this restriction is satisfied by
\textsc{Vertex-Cover}, which is a special case of \textsc{Set-Cover}).
\textsc{X3C} remains $\np$-complete even if we additionally assume
that $n$ is divisible by $2$ and each member of $U$ appears in at most
$3$ sets from $\calF$~\cite{gar-joh:b:int}.

We will also use results from the theory of parameterized complexity developed by Downey and Fellows~\cite{DF99}. 
This theory allows to single out a particular parameter of the problem, say $k$, and analyze its `contribution' to the overall 
complexity of the problem. An analogue of the class $\p$ here is the class $\fpt$ which is the 
class of problems that can be solved in time $f(k)n^{O(1)}$, where $n$ is the size of the input instance, and 
$f$ is some computable function (for a fixed $k$ everything gets polynomial).  Parameterized complexity theory also operates with classes $\wone
\subseteq \wtwo \subseteq \cdots$ which are believed to form a hierarchy of
classes of \emph{hard} problems (combined, they are analogous to the class
$\np$).  It holds that $\fpt \subseteq \wone$, but it seems unlikely
that $\fpt = \wone$, let alone $\fpt = \wtwo$.  We point the reader to
the books of Niedermeier~\cite{nie:b:invitation-fpt} and Flum and
Grohe~\cite{flu-gro:b:parameterized-complexity} for detailed overviews
of parametrized complexity theory.  Interestingly, while both
\textsc{Set-Cover} and \textsc{Vertex-Cover} are $\np$-complete, the
former is $\wtwo$-complete and the latter belongs to $\fpt$ (see,
e.g., the book of Niedermeier~\cite{nie:b:invitation-fpt} for these
now-standard results and their history).

\section{Hardness of Approximation}\label{sec:approximation}

We now present our inapproximability results for the Monroe and
Chamberlin--Courant rules. Specifically, we show that there are no constant-factor
approximation algorithms for the dissatisfaction-based variants of the
rules (both utilitarian and egalitarian) and for the
satisfaction-based egalitarian ones.

Naturally, these inapproximability results carry over to more general
settings.  For example, unless $\p = \np$, there are no
polynomial-time constant-factor approximation algorithms for the
general dissatisfaction-based Resource Allocation Problem. On the
other hand, our results do not preclude good satisfaction-based
approximation algorithms for the utilitarian case and, indeed, in
Section~\ref{sec:algorithms} we provide such algorithms.

\begin{theorem}\label{theorem:noApprox1}
  For each normal IPSF $\alpha$ and each constant factor 
  $r > 1$, there is no polynomial-time $r$-approximation
  algorithm for $\alpha$-\textsc{DU-Monroe} unless $\p = \np$.
\end{theorem}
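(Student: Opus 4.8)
Here is how I would approach it. The natural source problem is the restricted version of \textsc{X3C} noted above (each element in at most three sets). The first thing to get right, though, is that one should \emph{not} aim for a reduction with $\OPT = 0$ on yes-instances: under the \textsc{Monroe} capacity constraint a cost-$0$ assignment is forced to match every agent to its top alternative, so whether $\OPT = 0$ is decidable in polynomial time directly from the instance, and a genuine multiplicative gap is unavoidable. So the plan is to build, from an \textsc{X3C} instance \emph{and} from $r$, a \textsc{Monroe} instance in which yes-instances admit an assignment of cost at most $B_0 := 3\kappa\,\alpha(3)$ while no-instances have optimum at least some $p > r B_0$. Then an $r$-approximation algorithm run on this instance returns an assignment of cost $\le r B_0$ precisely when we started from a yes-instance, so it decides \textsc{X3C} in polynomial time, giving $\p = \np$.

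For the construction, let the \textsc{X3C} instance be $U$ with $|U| = 3\kappa$, $\calF = \{F_1,\dots,F_m\}$ with each $|F_j| = 3$ and each element in at most three sets; we may freely assume $\kappa$ is large (append disjoint triples as their own sets — this changes neither membership in \textsc{X3C} nor the role of $\kappa$). Put $p := \lceil 3 r \kappa\,\alpha(3)\rceil + 1$ and $K := \kappa + p$. The alternatives are one \emph{set-alternative} $c_j$ for each $F_j$, plus $p$ \emph{dummy alternatives} $d_1,\dots,d_p$. The agents are one \emph{element-agent} $\bar u_i$ for each element, plus three agents \emph{dedicated} to each $d_\ell$; thus the number of agents is $3\kappa + 3p = 3K$, so every alternative's \textsc{Monroe} capacity is exactly $3$ (divisibility is then automatic). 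The preferences: $\bar u_i$ ranks first, in index order, the set-alternatives $c_j$ with $u_i \in F_j$, then all $p$ dummies, then the remaining set-alternatives; an agent dedicated to $d_\ell$ ranks $d_\ell$ first, then the other dummies, then all set-alternatives.

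For correctness, in the yes-case pick an exact cover $I$ with $|I| = \kappa$ and take the committee $\{c_j : j\in I\}\cup\{d_1,\dots,d_p\}$, sending the three elements of each $F_j$ ($j\in I$) to $c_j$ and the three dedicated agents of each $d_\ell$ to $d_\ell$; all capacities are met exactly, dedicated agents pay $0$, and each $\bar u_i$ is matched to one of its covering sets, which lies in its top three positions, so the cost is at most $B_0$. In the no-case, take any feasible committee $S$ and assignment. If $S$ omits some dummy, then, since each of the at most $p-1$ dummies in $S$ has capacity $3$, at least three of the $3p$ dedicated agents are forced onto set-alternatives, which sit at positions $\ge p+1$ in their orders, for a cost of at least $3\,\alpha(p+1)$. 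Otherwise $S$ contains all $p$ dummies and exactly $\kappa$ set-alternatives; these $\kappa$ sets miss some element $u_i$ (no-instance), and the dummies in $S$ are full of their dedicated agents — or else a dedicated agent is displaced onto a set-alternative at position $\ge p+1$ — so $\bar u_i$ too ends up on a set-alternative at position $\ge p+1$, again for a cost of at least $\alpha(p+1)$. Since every normal IPSF is a strictly increasing integer sequence with $\alpha(1)=0$, we have $\alpha(p+1)\ge p > r B_0$, closing the gap.

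The main obstacle, and the place where care is needed, is precisely this no-instance analysis: one must be certain that the capacity constraints genuinely trap an ``uncovered'' element-agent (or a displaced dedicated agent) deep in its preference order with no cheap escape onto a dummy, while simultaneously keeping the agent count divisible by $K$ and keeping the yes-cost bounded by a quantity $B_0$ that $p$ can be made to dwarf. Pinning the capacities to $3$ and padding each dummy with exactly a capacity's worth of dedicated agents is the device that makes all three requirements compatible.
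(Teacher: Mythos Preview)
Your proof is correct. Both you and the paper reduce from (restricted) \textsc{X3C}, make the ground-set elements into agents that rank their covering sets on top, then a block of dummies, then the non-covering sets, and engineer a gap so that any assignment cheaper than $rB_0$ forces an exact cover. The mechanisms for \emph{blocking the dummies} differ, however. The paper keeps the agent set equal to the ground set and introduces $\Theta(n^2 r\,\alpha(3))$ dummy alternatives, carefully rotating their positions so that every dummy lies in the top $nr\alpha(3)$ slots of at most two agents; since Monroe must load each chosen alternative with three agents, any dummy in the committee forces one agent past position $nr\alpha(3)$. Your construction instead uses only $p=\Theta(r\kappa\,\alpha(3))$ dummies but pads each one with three \emph{dedicated} agents (and enlarges $K$ accordingly), so that the capacity-$3$ dummies are saturated by their dedicated triples; either a dedicated agent is pushed onto a set-alternative at position $\ge p+1$, or no element-agent can occupy a dummy and an uncovered element-agent is forced past position $p+1$. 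Your device is more elementary (no combinatorial placement argument) and yields a smaller alternative set, at the cost of enlarging the agent set and the committee; the paper's keeps $K=\|U\|/3$ and $N=U$, which is pleasant conceptually but requires the cyclic position trick. One small wording issue: ``the dummies are full of their dedicated agents'' should really be ``full of dedicated agents'' (not necessarily their own), since swaps among dedicated agents are possible; the pigeonhole step you need is simply that the $p$ dummies have $3p$ slots and there are $3p$ dedicated agents, so any element-agent on a dummy displaces a dedicated agent onto a set-alternative. With that read, your case analysis is airtight.
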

\begin{proof}
  Let us fix a normal IPSF $\alpha$ and let us assume, aiming at getting a
  contradiction, that there is some constant $r > 1$ and
  a polynomial-time $r$-approximation algorithm $\calA$ for
  $\alpha$-\textsc{DU-Monroe}.

  Let $I$ be an instance of \textsc{X3C} with ground set $U = [n]$ and
  family $\calF = \{F_{1}, F_{2}, \dots, F_{m}\}$ of $3$-element
  subsets of $U$. Without loss of generality, we assume that $n$ is divisible by both
  $2$ and $3$ and that each member of $U$ appears in at most 3 sets
  from $\calF$.

  Using $I$, we build instance $I_{M}$ of
  $\alpha$-\textsc{DU-Monroe} as follows.  We set $N = U$
  (that is, the elements of the ground set are the agents) and we set $A
  = A_1 \cup A_2$, where $A_1 = \{a_1, \ldots, a_m\}$ is a set of
  alternatives corresponding to the sets from the family $\calF$ and
  $A_2$ is a
  set of dummy alternatives of cardinality $\|A_2\| = \frac{1}{2}{n^{2}r\cdot \alpha(3)}$, needed for the construction. We let $m' =
  \|A_2\|$ and rename the alternatives in $A_2$ so that $A_2 =
  \{b_1, \ldots, b_{m'}\}$.  We set $K = \frac{n}{3}$.

  We build agents' preference orders using the following algorithm.
  For each $j \in N$, set $M_f(j) = \{ a_i \mid j \in F_i\}$ and $M_l
  = \{ a_i \mid j \not\in F_i \}$. Set $m_f(j) = \|M_f(j)\|$ and
  $m_l(j) = \|M_l(j)\|$. As the frequency of the elements from $U$ is
  bounded by 3, we have $m_f(j) \leq 3$. For each agent $j$ we set his or her
  preference order to be of the form $M_f(j) \succ_j A_2 \succ_j
  M_l(j)$, where the alternatives in $M_f(j)$ and $M_l(j)$ are ranked
  in an arbitrary way and the alternatives from $A_2$ are placed at
  positions $m_{f}(j) + 1, \dots, m_{f}(j) + m'$ in the way described
  below (see Figure \ref{fig:diag1} for a high-level illustration of
  the construction).
  
  \begin{figure}[tb]
    \begin{center}
      \includegraphics[scale=0.75]{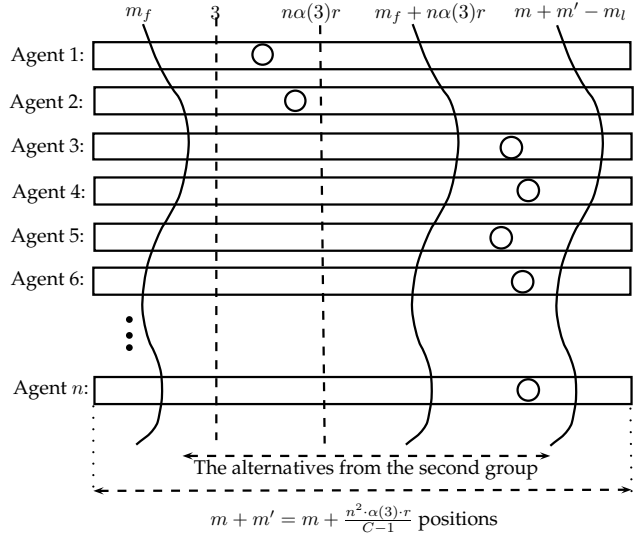}
    \end{center}
   \vspace{-0.75cm}
    \caption{The alignment of the positions in the preference orders
      of the agents. The positions are numbered from the left to the
      right. The left wavy line shows the positions $m_{f}(\cdot)$,
      each no greater than $3$. The right wavy line shows the
      positions $m_{l}(\cdot)$, each higher than $nr \cdot \alpha(3)$. The alternatives from $A_{2}$ (positions of one
      such an alternative is illustrated with the circle) are placed
      only between the peripheral wavy lines. Each alternative from
      $A_{2}$ is placed on the left from the middle wavy line exactly
      2 times, thus each such alternative is placed on the left from
      the right dashed line no more than $2$ times (exactly two times
      at the figure).}
    \label{fig:diag1}
  \end{figure}

  We place the alternatives from $A_{2}$ in the preference orders of
  the agents in such a way that for each alternative $b_i \in A_{2}$
  there are at most two agents that rank $b_i$ among their $nr \cdot
  \alpha(3)$ top alternatives.  The following construction
  achieves this effect.  If $(i + j) \,\mathrm{mod}\, n < 2$, then
  alternative $b_{i}$ is placed at one of the positions $m_{f}(j) + 1,
  \dots, m_{f}(j) + nr \cdot \alpha(3)$ in $j$'s preference
  order. Otherwise, $b_i$ is placed at a position with index higher
  than $m_{f}(j) + nr \cdot \alpha(3)$ (and, thus, at a
  position higher than $nr \cdot \alpha(3)$). This
  construction can be implemented because for each agent $j$ there are
  exactly $m' \cdot \frac{2}{n} = nr \cdot \alpha(3)$
  alternatives $b_{i_{1}}, b_{i_{2}}, b_{i_{n \alpha(3) r}}$ such
  that $(i_{k} + j) \,\mathrm{mod}\, n < 2$.

  Let $\Phi$ be an assignment computed by $\calA$ on $I_{M}$.  We
  will show that $\ell_{1}^{\alpha}(\Phi) \leq n \cdot \alpha(3) \cdot
  r$ if and only if $I$ is a \emph{yes}-instance of \textsc{X3C}.

  ($\Leftarrow$) If there exists a solution for $I$ (i.e., an exact
  cover of $U$ with $\frac{n}{3}$ sets from $\calF$), then we can
  easily show an assignment in which each agent $j$ is assigned to an
  alternative from the top $m_{f}(j)$ positions of his or her
  preference order (namely, one that assigns each agent $j$ to the
  alternative $a_i \in A_1$ that corresponds to the set $F_i$, from
  the exact cover of $U$, that contains $j$). Thus, for the optimal
  assignment $\Phi_\OPT$ it holds that $\ell_{1}^{\alpha}(\Phi_\OPT) \leq
  \alpha(3) \cdot n$. In consequence, $\mathcal{A}$ must return an
  assignment with the total dissatisfaction at most $nr \cdot \alpha(3)$.

  ($\Rightarrow$) Let us now consider the opposite direction. We
  assume that $\calA$ found an assignment $\Phi$ such that
  $\ell_{1}^{\alpha}(\Phi) \leq nr \cdot \alpha(3)$ and we
  will show that $I$ is a \emph{yes}-instance of \textsc{X3C}.  Since
  we require each alternative to be assigned to either $0$ or $3$
  agents, if some alternative $b_{i}$ from $A_2$ were assigned to some
  $3$ agents, at least one of them would rank $b_i$ at a position
  worse than $nr \cdot \alpha(3)$. This would mean that
  $\ell_{1}^{\alpha}(\Phi) \geq nr \cdot \alpha(3)  +
  1$. Analogously, no agent $j$ can be assigned to an alternative that
  is placed at one of the $m_l(j)$ bottom positions of $j$'s
  preference order. Thus, only the alternatives in $A_1$ have agents
  assigned to them and, further, if agents $x$, $y$, $z$, are assigned
  to some $a_i \in A_1$, then it holds that $F_i = \{x,y,z\}$ (we will
  call each set $F_i$ for which alternative $a_i$ is assigned to some
  agents $x,y,z$ \emph{selected}). Since each agent is assigned to
  exactly one alternative, the selected sets are disjoint. Since the
  number of selected sets is $K = \frac{n}{3}$, it must be the case
  that the selected sets form an exact cover of $U$. Thus, $I$ is a
  \emph{yes}-instance of \textsc{X3C}.
\end{proof}

One may wonder if hardness of approximation for
$\alpha$-\textsc{DU-Monroe} is not an artifact of the strict
requirements regarding the number of chosen candidates.  It turns out
that unless $\p = \np$, there is no $r$-$s$-approximation
algorithm that finds an assignment with the following properties: (1)
the aggregated dissatisfaction $\ell_{1}^{\alpha}(\Phi)$ is at most
$r$ times higher than the optimal one, (2) the number of
alternatives to which agents are assigned is at most $s K$ and
(3) each selected alternative (the alternative that has agents
assigned), is assigned to no more than $s \lceil \frac{n}{K}
\rceil$ and no less than $\frac{1}{s} \lceil \frac{n}{K}\rceil$
agents. (The proof is similar to the one used for
Theorem~\ref{theorem:noApprox1}.)  Thus, in our further study we do
not consider such relaxations of the problem.

%Our next result is that the \textsc{Minmax} variant of
%the\textsc{Monroe-Assignment} problem is hard to approximate as well.

\begin{theorem}\label{theorem:noApprox2}
  For each normal IPSF $\alpha$ and each constant $r > 1$, there is no polynomial-time $r$-approximation algorithm for
  $\alpha$-\textsc{DE-Monroe} unless $\p = \np$.
\end{theorem}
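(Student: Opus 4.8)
\noindent\emph{Proof proposal.}\quad
My plan is to recycle, almost verbatim, the reduction from \textsc{X3C} built in the proof of Theorem~\ref{theorem:noApprox1}, and simply re-analyze it for the egalitarian objective $\ell_\infty^\alpha$. Given an \textsc{X3C} instance $I$ with ground set $U=[n]$ ($n$ divisible by $6$, each element in at most three sets of $\calF=\{F_1,\dots,F_m\}$), I would form the same instance $I_M$ of $\alpha$-\textsc{DE-Monroe}: agents $N=U$; alternatives $A=A_1\cup A_2$, where $A_1=\{a_1,\dots,a_m\}$ has one alternative per set and $A_2$ is a block of $m'=\tfrac{1}{2}n^2 r\cdot\alpha(3)$ dummies; all capacities equal to $\tfrac{n}{K}=3$, with $K=\tfrac{n}{3}$; and the same preference orders $M_f(j)\succ_j A_2\succ_j M_l(j)$, the dummies being distributed (via the ``$(i+j)\bmod n<2$'' rule) so that every dummy is ranked among the top $nr\cdot\alpha(3)$ positions by at most two agents. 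As in Theorem~\ref{theorem:noApprox1}, I may assume $r$ is a positive integer, since an $r$-approximation algorithm is also an $\lceil r\rceil$-approximation algorithm.

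The statement I would establish is: $I$ is a \emph{yes}-instance of \textsc{X3C} if and only if there is a feasible $\Phi$ with $\ell_\infty^\alpha(\Phi)\le\alpha(3)$, and moreover, when $I$ is a \emph{no}-instance, every feasible $\Phi$ satisfies $\ell_\infty^\alpha(\Phi)>r\cdot\alpha(3)$. The forward direction is immediate: an exact cover of $U$ induces the assignment sending each agent $j$ to the $A_1$-alternative of the cover set that contains $j$, which sits at one of $j$'s top $m_f(j)\le 3$ positions, so $\ell_\infty^\alpha\le\alpha(3)$.

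The converse is where the real work lies, and I expect it to be the main obstacle. Assume a feasible $\Phi$ has $\ell_\infty^\alpha(\Phi)\le r\cdot\alpha(3)$. Since every normal IPSF is $\naturals$-valued, strictly increasing and satisfies $\alpha(1)=0$, we have $\alpha(p)\ge p-1$, so a position $p$ with $p-1>r\cdot\alpha(3)$ is impossible under $\Phi$; as $n\ge 2$ this already excludes every position $\ge nr\cdot\alpha(3)+1$ and, a fortiori, every position $\ge m'+1$. The capacities force $\Phi$ to use exactly $K$ alternatives, each receiving exactly three agents (a perfect partition of $U$ into triples). Hence (i) no dummy $b_i\in A_2$ is used: a used dummy has three agents, at most two of whom rank it among the top $nr\cdot\alpha(3)$ positions, so the third is placed at position $\ge nr\cdot\alpha(3)+1$; and (ii) no agent $j$ is assigned inside $M_l(j)$, whose members sit at positions $\ge m'+1$. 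Therefore $\Phi$ assigns every agent $j$ to some $a_i\in A_1$ with $j\in F_i$; since $\|F_i\|=3=\capacity_{a_i}$ and $\Phi$ partitions $U$, the $K$ selected sets are pairwise disjoint and cover $U$, i.e.\ $I$ is a \emph{yes}-instance.

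Finally, a hypothetical polynomial-time $r$-approximation algorithm $\calA$ for $\alpha$-\textsc{DE-Monroe}, run on $I_M$, returns a feasible $\Phi$ with $\ell_\infty^\alpha(\Phi)\le r\cdot\OPT$, which by the above is $\le r\cdot\alpha(3)$ when $I$ is a \emph{yes}-instance and is $>r\cdot\alpha(3)$ when it is not; since both $I_M$ and $\ell_\infty^\alpha(\Phi)$ are computable in polynomial time, $\calA$ would decide \textsc{X3C}, forcing $\p=\np$. The one delicate point is calibrating the dummy-block size $\tfrac{1}{2}n^2 r\cdot\alpha(3)$ and the threshold $nr\cdot\alpha(3)$ so that the \emph{no}-case bound provably beats $r$ times the \emph{yes}-case optimum for \emph{every} constant $r$ and \emph{every} normal IPSF\,---\,and this is exactly what the inequality $\alpha(p)\ge p-1$ buys us.
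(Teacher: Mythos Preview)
Your proposal is correct and follows essentially the same approach as the paper: the paper's own proof simply states that the reduction from Theorem~\ref{theorem:noApprox1} applies verbatim to the egalitarian objective. The only additional remark the paper makes is that in the egalitarian case one can get away with a smaller dummy block, namely $m'=\tfrac{1}{2}nr\cdot\alpha(3)$ rather than $\tfrac{1}{2}n^{2}r\cdot\alpha(3)$, since the per-agent bound $\alpha(\pos)\le r\cdot\alpha(3)$ already suffices (no factor of $n$ from summing); your use of the larger $m'$ is overkill but of course still works.
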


\begin{proof}
  The proof of Theorem~\ref{theorem:noApprox1} applies to this case as
  well. In fact, it even suffices to take $m' = \|A_2\| = \frac{1}{2}nr  \cdot \alpha(3)$.
\end{proof}

Results analogous to Theorems~\ref{theorem:noApprox1}
and~\ref{theorem:noApprox2} hold for the \textsc{DU-CC}
family of problems as well. 
%(We omit this section's remaining proofs due to space constraints.)

\begin{theorem}\label{theorem:noApprox3}
  For each normal IPSF $\alpha$ and each constant factor 
  $r > 1$, there is no polynomial-time $r$-approximation
  algorithm for $\alpha$-\textsc{DU-CC} unless $\p = \np$.
\end{theorem}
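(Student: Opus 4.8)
The plan is to adapt the reduction from \textsc{X3C} used in the proof of Theorem~\ref{theorem:noApprox1}, but to account for the fact that in the \textsc{CC} variant the capacities of all alternatives equal $\|N\|$, so there is no constraint forcing an alternative to serve exactly $\frac{n}{K}$ agents. The key change is that the dummy-alternative gadget must now block \emph{individual} agents from being well-represented by a dummy, rather than block a \emph{group of three} agents as in the Monroe case. Concretely, starting from an \textsc{X3C} instance $I$ with ground set $U=[n]$ and family $\calF$ (with each element appearing in at most three sets), I would build an instance $I_C$ of $\alpha$-\textsc{DU-CC} with $N=U$, with $A=A_1\cup A_2$ where $A_1=\{a_1,\dots,a_m\}$ corresponds to the sets in $\calF$, and with a set $A_2$ of dummy alternatives, and set $K=\frac{n}{3}$. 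The preference order of agent $j$ again has the form $M_f(j)\succ_j A_2 \succ_j M_l(j)$, where $M_f(j)=\{a_i : j\in F_i\}$, but now the placement of the dummies within $A_2$ should ensure that \emph{each dummy alternative $b_i$ is ranked among the top $nr\cdot\alpha(3)$ positions by at most one agent} (instead of at most two, as in the Monroe proof). Since $\|A_2\|$ dummies distributed this way need $\|A_2\|\cdot\frac{1}{n}$ slots per agent, I would set $\|A_2\| = n^2 r\cdot\alpha(3)$ so that each agent has exactly $nr\cdot\alpha(3)$ dummy alternatives appearing in its "top block," placed by the rule "$b_i$ goes into agent $j$'s top block iff $(i+j)\bmod n = 0$."

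The correctness argument then mirrors the Monroe case. For the ($\Leftarrow$) direction, if $I$ has an exact cover, assigning each agent $j$ to the $A_1$-alternative corresponding to the covering set containing $j$ uses exactly $\frac{n}{3}=K$ alternatives and gives $\ell_1^\alpha(\Phi_\OPT)\le \alpha(3)\cdot n$, so the $r$-approximation algorithm $\calA$ must return $\Phi$ with $\ell_1^\alpha(\Phi)\le nr\cdot\alpha(3)$. For the ($\Rightarrow$) direction, suppose $\calA$ returns $\Phi$ with $\ell_1^\alpha(\Phi)\le nr\cdot\alpha(3)$. If any dummy $b_i\in A_2$ had \emph{two or more} agents assigned to it, then since $b_i$ is in the top block of at most one agent, at least one of those agents ranks $b_i$ below position $nr\cdot\alpha(3)$, contributing more than $nr\cdot\alpha(3)$ to the total — a contradiction. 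Hence each dummy is used by at most one agent; but the dummies used consume some of the budget of $K$ alternatives, and each dummy "covers" only one agent while each $A_1$-alternative can cover at most $3$. If $t$ dummies are used (each serving one agent), then the remaining $K-t$ alternatives from $A_1$ must serve the other $n-t$ agents, which requires $3(K-t)\ge n-t$, i.e. $n-3t\ge n-t$, forcing $t=0$. So only $A_1$-alternatives are used, each of the $K=\frac{n}{3}$ of them serving exactly three agents whose indices (being in the top block, hence not in $M_l$) lie in the corresponding set $F_i$; the selected sets are disjoint and number $\frac{n}{3}$, hence form an exact cover.

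The main obstacle — and the point requiring the most care — is the dummy-placement counting: I must verify that the rule "$b_i$ appears in agent $j$'s top block iff $(i+j)\bmod n = 0$" both (a) gives each agent exactly $nr\cdot\alpha(3) = \|A_2\|/n$ dummies in its top block, so that the $\succ_j$ orders are well-defined with $A_2$ occupying positions $m_f(j)+1,\dots,m_f(j)+m'$, and (b) gives each dummy $b_i$ exactly one agent (namely $j\equiv -i \pmod n$) that ranks it in the top block — this is what rules out using dummies at all in a low-cost assignment. One also needs $n$ divisible by $3$ (and, as in the cited proof, it is harmless to assume $n$ even), and one should double-check that placing dummies "in the top block" means placing them above position $nr\cdot\alpha(3)+3$, so that $\alpha$ evaluated there is at most $\alpha(nr\cdot\alpha(3))$, which is safely below $nr\cdot\alpha(3)+1$ for normal IPSFs with $\alpha(1)=0$; since $\alpha$ is integer-valued and strictly increasing, $\alpha(p)\ge p-1$ is the only inequality one needs, and it suffices. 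Everything else is a routine transcription of the Monroe argument with "three agents per dummy" replaced by "one agent per dummy."
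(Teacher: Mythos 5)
Your proposal is correct in substance, but it takes a genuinely different route from the paper. The paper does not reuse the \textsc{X3C}/dummy construction at all for $\alpha$-\textsc{DU-CC}: it reduces from \textsc{Vertex-Cover} (i.e., \textsc{Set-Cover} with each ground element in exactly two sets), replacing each set $F_j$ by a large group $A_j$ of $\alpha(2)\cdot r\cdot n$ clone alternatives and giving agent $i$ (contained in $F_j,F_k$) the order $a_j \succ a_k \succ A_k-\{a_k\} \succ \text{rest}$; the middle padding block makes any alternative outside the agent's two groups cost more than $nr\alpha(2)$, so a cheap assignment touches, for every agent, a group of a set containing it, and the budget of $K$ selected alternatives immediately yields a size-$K$ cover---no exactness, no counting, and this exploits precisely the feature (no lower bound on how many agents an alternative serves) that distinguishes CC from Monroe. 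Your construction instead keeps the \textsc{X3C} gadget of Theorem~\ref{theorem:noApprox1}, tightens it to one agent per dummy "top block," and then compensates for the missing Monroe capacities with the counting argument $n \le t + 3(K-t)$ forcing $t=0$; this buys a single unified gadget for both rules at the price of that extra bookkeeping, and it is sound: dummies outside an agent's top block and $A_1$-alternatives not containing the agent are both too expensive under a total budget of $nr\alpha(3)$, so selected $A_1$-alternatives serve only their own three elements and the exact cover falls out. Two boundary points deserve a line in a full write-up, though they are at the same level of informality as the paper's own Monroe proof and are easily patched: the contribution of an agent ranking a dummy just outside its top block is a priori only $\ge nr\alpha(3)$ rather than strictly greater (use the companion agent's $\alpha(2)\ge 1$, or the fact that out-of-block positions exceed $m_f(j)+nr\alpha(3)+1$ once every element lies in at least one set), and one should dispose of elements occurring in no set (a trivial \emph{no}-instance) so that dummies never occupy position~$1$.
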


\begin{proof}
  Let us fix a normal IPSF $\alpha$.  For the sake of contradiction,
  let us assume that there is some constant $r > 1$, and a
  polynomial-time $r$-approximation algorithm $\calA$ for
  $\alpha$-\textsc{DU-CC}. We will show that it is
  possible to use $\calA$ to solve the $\np$-complete
  \textsc{Vertex-Cover} problem.

  Let $I = (U, \calF, K)$ be an instance of \textsc{Vertex-Cover},
  where $U = [n]$ is the ground set, $\calF = \{F_1, \ldots, F_m\}$ is
  a family of subsets of $U$ (where each member of $U$ belongs to exactly
  two sets in $\calF$), and $K$ is a positive integer. 
  
% Without loss of generality, we assume that each element of the ground set appears in
%   exactly two sets from $\calF$ (this assumption is satisfied, for
%   example, by considering \textsc{Vertex-Cover} instances only).

  Given $I$, we construct an instance $I_{CC}$ of
  $\alpha$-\textsc{DU-CC} as follows. The set of
  agents is $N = U$ and the set of alternatives is $A =
  \bigcup_{j=1}^m A_j$, where each $A_j$ contains exactly $\alpha(2)
  \cdot r \cdot n$ (unique) alternatives. Intuitively, for each
  $j\in [m]$, the alternatives in $A_j$ correspond to the set
  $F_j$. For each $A_j$, $1 \leq j \leq m$, we pick one alternative,
  which we denote $a_j$.  For each agent $i \in N$, we set $i$'s
  preference order as follows: Let $F_j$ and $F_k$, $j < k$, be the
  two sets that contain $i$. Agent $i$'s preference order is of the
  form $a_j \succ_i a_k \succ_i A_k - \{a_k\} \succ_i A - (A_k \cup
  \{a_j,a_k\})$ (a particular order of alternatives in the sets
  $A_k-\{a_k\}$ and $A - (A_k \cup \{a_j,a_k\})$ is irrelevant for the
  construction). We ask for an assignment of the agents to at most $K$
  alternatives.

  %
  % Now, for each agent $j$ we are constructing its preference list
  % iteratively in the following way. We are considering subsets $F_{i}$
  % according to their ascending numbers of indexes. If $j \in F_{i}$
  % then the first alternative from $i$-th group is placed at the first
  % not yet occupied position in the $j$'s preference list. Up to this
  % point each alternative has specified either 1 or 2 from its first
  % positions in its preference list. Let $m_{f}(j)$ denotes the last
  % occupied position in $j$'s preference list ($m_{f}(j) \in \{1,
  % 2\}$). For each agent $j$ we complement its preference list in the
  % following way. Let $i$ be the group number that the alternative at
  % position $m_{f}(j) $ comes from. We put at positions $m_{f}(j) + 1,
  % \dots, m_{f}(j) + \alpha(2) \cdot \beta \cdot n - 1$ the remaining
  % alternatives from the $i$-th group. The other alternatives are
  % arbitrarily placed at the remaining positions beyond $m_{f}(j) +
  % \alpha(2) \cdot \beta \cdot n - 1$. 
  % By such construction each agent has at positions with indexes no
  % greater than $\alpha(2) \cdot \beta \cdot n$ the alternatives coming
  % from only 1 or 2 groups.

  Let us consider a solution $\Phi$ returned by $\mathcal{A}$ on input
  $I_{CC}$. We claim that $\ell_{1}^{\alpha}(\Phi) \leq nr \cdot
  \alpha(2)$ if and only if $I$ is a \emph{yes}-instance
  of \textsc{Vertex-Cover}.

  ($\Leftarrow$) If $I$ is a \emph{yes}-instance then, clearly, each
  agent $i$ can be assigned to one of the top two alternatives in his
  or her preference order (if there is a size-$K$ cover, then this
  assignment selects at most $K$ candidates).  Thus the total
  dissatisfaction of an optimal assignment is at most $n\cdot \alpha(2)$. As a result, the solution $\Phi$ returned by $\calA$ has total
  dissatisfaction at most $nr\cdot \alpha(2)$.

  ($\Rightarrow$) If $\mathcal{A}$ returns an assignment with total
  dissatisfaction no greater than $nr\cdot \alpha(2)$,
  then, by the construction of agents preference orders, we see that
  each agent $i$ was assigned to an alternative from a set $A_j$ such
  that $i \in F_j$. Since the assignment can use at most $K$ alternatives,
  this directly implies that there is a size-$K$ cover of $U$ with sets
  from $\calF$.
\end{proof}

\begin{theorem}\label{theorem:noApprox4}
  For each normal IPSF $\alpha$ and each constant factor $r > 1$, there is no polynomial-time $r$-approximation
  algorithm for $\alpha$-\textsc{DE-CC} unless $\p  = \np$.
\end{theorem}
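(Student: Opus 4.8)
The plan is to reuse \emph{verbatim} the polynomial-time reduction from \textsc{Vertex-Cover} constructed in the proof of Theorem~\ref{theorem:noApprox3}, and to re-analyze the resulting instance $I_{CC}$ with respect to the egalitarian objective $\ell_\infty^{\alpha}$ rather than $\ell_1^{\alpha}$. Fix a normal IPSF $\alpha$ and a constant $r>1$, and suppose toward a contradiction that $\calA$ is a polynomial-time $r$-approximation algorithm for $\alpha$-\textsc{DE-CC}. Given a \textsc{Vertex-Cover} instance $I=(U,\calF,K)$ in which every element of $U$ lies in exactly two members of $\calF$, build $I_{CC}$ exactly as in Theorem~\ref{theorem:noApprox3} (agents $N=U$; alternatives $A=\bigcup_{j=1}^{m}A_j$ with $\|A_j\|=\alpha(2)\cdot r\cdot n$ and a distinguished $a_j\in A_j$; for an agent $i$ contained in $F_j,F_k$ with $j<k$, the preference order $a_j\succ_i a_k\succ_i A_k-\{a_k\}\succ_i A-(A_k\cup\{a_j,a_k\})$; ask for an assignment onto at most $K$ alternatives). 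Run $\calA$ on $I_{CC}$ and let $\Phi$ be its output.

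For the ``yes'' direction: if $I$ has a size-$K$ cover, then the assignment exhibited in Theorem~\ref{theorem:noApprox3} places each agent on one of its top two alternatives, so the optimum of $\ell_\infty^{\alpha}$ is at most $\alpha(2)$, whence $\ell_\infty^{\alpha}(\Phi)\leq r\cdot\alpha(2)$. For the ``no'' direction: let $S=\Phi(N)$ (so $\|S\|\leq K$) and $J=\{\,j: S\cap A_j\neq\emptyset\,\}$, so $\|J\|\leq K$. Since $I$ is a no-instance, $\{F_j\}_{j\in J}$ does not cover $U$, so some agent $i$ satisfies $i\notin\bigcup_{j\in J}F_j$; writing $F_{j_0},F_{k_0}$ for the two sets containing $i$ we get $j_0,k_0\notin J$, hence $\Phi(i)\notin A_{j_0}\cup A_{k_0}$. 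By the shape of $i$'s preference order, every alternative outside $A_{j_0}\cup A_{k_0}$ occupies a position $\geq\alpha(2)\cdot r\cdot n+2$, so $\alpha(\pos_i(\Phi(i)))\geq\alpha(\alpha(2)rn+2)$. Because a normal IPSF is strictly increasing with $\alpha(1)=0$, we have $\alpha(p)\geq p-1$ and $\alpha(2)\geq1$; therefore $\alpha(\alpha(2)rn+2)\geq\alpha(2)rn+1>r\cdot\alpha(2)$ (using $n\geq1$), and so $\ell_\infty^{\alpha}(\Phi)>r\cdot\alpha(2)$. Consequently, testing whether $\ell_\infty^{\alpha}(\Phi)\leq\lfloor r\cdot\alpha(2)\rfloor$ decides $I$ in polynomial time, which is impossible unless $\p=\np$.

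The one point deserving care — and essentially the whole content of the argument — is that the bound in the ``no'' direction must hold for \emph{every} feasible assignment and not merely for the output of $\calA$, so that the ``yes'' and ``no'' cases are separated by a single numerical threshold; this is precisely what the (large) block size $\alpha(2)\cdot r\cdot n$ guarantees, since already one misplaced agent forces egalitarian dissatisfaction above $r\cdot\alpha(2)$. Everything else is a direct transcription of the proof of Theorem~\ref{theorem:noApprox3}, and no new construction is needed.
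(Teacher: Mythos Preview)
Your proof is correct and follows essentially the same approach as the paper: reuse the \textsc{Vertex-Cover} reduction from Theorem~\ref{theorem:noApprox3} and re-analyze it for $\ell_\infty^\alpha$. The paper additionally remarks that in the egalitarian case the block sizes may be shrunk from $\alpha(2)\cdot r\cdot n$ to $\alpha(2)\cdot r$ (since a single misplaced agent already breaks the threshold, the factor $n$ is unnecessary), but your larger blocks work just as well and the argument is otherwise identical.
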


\begin{proof}
  The proof of Theorem~\ref{theorem:noApprox3} is applicable in this
  case as well. In fact, it even suffices to take the $m$ groups of
  alternatives, $A_1, \ldots, A_m$, to contain $\alpha(2) \cdot r$
  alternatives each.
\end{proof}

The above results show that approximating algorithms for finding the minimal dissatisfaction
of agents is difficult. On the other hand, if we focus on agents'
total satisfaction then constant-factor approximation exist in many
cases (see, e.g., the work of Lu and
Boutilier~\cite{budgetSocialChoice} and the next section). Yet, if we
focus on the satisfaction of the least satisfied voter, there are no
efficient constant-factor approximation algorithms for the Monroe and
Chamberlin--Courant systems. (However, note that our result for
the Monroe setting is more general than the result for the
Chamberlin--Courant setting; the latter is for the Borda DPSF only.)

\begin{theorem}\label{theorem:noApprox5}
  For each normal DPSF $\alpha$ (where each entry is polynomially
  bounded in the number of alternatives) and each constant factor
  $r$, with $0 < r \leq 1$, there is no $r$-approximation
  algorithm for $\alpha$-\textsc{SE-Monroe} unless
  $\p = \np$.
\end{theorem}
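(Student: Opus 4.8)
The plan is to reduce from an $\np$-complete problem—most naturally \textsc{X3C} or \textsc{Vertex-Cover}—in the same spirit as the proof of Theorem~\ref{theorem:noApprox1}, exploiting the fact that an $r$-approximation algorithm for an egalitarian satisfaction problem (with $0 < r \le 1$) must in particular certify whether the optimal egalitarian satisfaction is strictly positive or zero: if there is an assignment in which every agent gets satisfaction at least $s$ for some $s>0$, then the algorithm must return an assignment whose worst-off agent has satisfaction at least $rs > 0$, hence strictly positive, whereas if every feasible assignment leaves some agent with satisfaction $0$, the algorithm can do no better. So the whole game is to engineer an instance where ``yes-instance'' corresponds to ``some assignment makes every agent strictly satisfied'' and ``no-instance'' corresponds to ``every feasible assignment makes at least one agent completely unsatisfied (assigned his or her bottom alternative).''

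Concretely, I would start from an \textsc{X3C} instance $(U,\calF,K)$ with $\|U\|=n$, $K=n/3$, and (using the standard strengthening cited in the preliminaries) each element appearing in at most $3$ sets of $\calF$. Set $N=U$, and for each $F_i\in\calF$ create one ``real'' alternative $a_i$; additionally introduce a large pool $A_2$ of dummy alternatives. For each agent $j$, put the (at most $3$) alternatives $a_i$ with $j\in F_i$ at the top of $\succ_j$, then the dummies, then the remaining $a_i$ with $j\notin F_i$ at the very bottom. The Monroe capacity is $n/K = 3$. As in Theorem~\ref{theorem:noApprox1}, distribute the dummies cyclically so that each dummy alternative is ranked among the top (few) positions by at most two agents; then no dummy can legitimately be used by a block of $3$ agents without forcing one of those agents to rank it at a bottom position, i.e.\ to get satisfaction $0$ (recall $\gamma^m(m)=0$ for a normal DPSF, and the key hypothesis that all entries are polynomially bounded, so $A_2$ has polynomial size). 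If $I$ is a yes-instance, the exact cover yields an assignment where every agent is assigned one of his or her top-$3$ alternatives, so every agent has satisfaction at least $\gamma(3) > 0$, giving optimal egalitarian satisfaction $\ge \gamma(3)$. If $I$ is a no-instance, I argue that every feasible Monroe assignment must assign some agent to a bottom alternative: only the $a_i$'s can be used without immediately incurring a zero, each $a_i$ used by a full block of $3$ forces $F_i=\{x,y,z\}$, and $K=n/3$ such disjoint blocks would constitute an exact cover, a contradiction—so the optimal egalitarian satisfaction is $0$. Running the hypothetical $r$-approximation algorithm and checking whether the returned worst-off satisfaction is positive then decides \textsc{X3C} in polynomial time, forcing $\p=\np$.

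The main obstacle—and the place requiring the most care—is the no-instance direction together with the ``$K$ divides $n$'' / capacity bookkeeping: I must make sure that in a no-instance \emph{every} feasible assignment (not just the natural ones) is forced to hand some agent a zero. The danger is that the approximation algorithm, being free to use up to $K$ alternatives but not forced to fill each to capacity exactly, might ``waste'' a real alternative on only one or two agents and patch the rest with dummies in a way that keeps everyone off their bottom positions. The cyclic placement of dummies is precisely what rules this out: since each dummy is top-ranked by at most two agents, a dummy assigned to a Monroe-block of three necessarily makes one block member suffer a bottom position, so dummies are useless; and then the real alternatives alone, each usable only on its own $3$-element set $F_i$, cannot cover all $n$ agents with only $K=n/3$ of them unless $\calF$ contains an exact cover. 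I would also double-check the edge case where $\gamma(3)$ might accidentally be forced small—but since $\gamma$ is a \emph{strictly} decreasing normal DPSF with $\gamma(m)=0$, we have $\gamma(3) > \gamma(m-?) \ge \dots > 0$ as long as $m \ge 4$, which holds after padding, so positivity of the yes-instance optimum is guaranteed. Finally, I would remark (as the authors do for Theorems~\ref{theorem:noApprox2} and~\ref{theorem:noApprox4}) that the construction also yields the analogous hardness for the egalitarian satisfaction variant with a more economical dummy pool if desired.
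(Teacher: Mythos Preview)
Your proposal has a genuine gap in the no-instance direction. You claim that in a no-instance of \textsc{X3C} every feasible Monroe assignment drives some agent to satisfaction exactly~$0$. But satisfaction~$0$ under a normal DPSF means being assigned one's \emph{last}-ranked alternative, and in your own preference structure $M_f(j) \succ A_2 \succ M_l(j)$ the last position of agent~$j$ is always occupied by some $a_k \in M_l(j)$, never by a dummy. Hence assigning a dummy $b$ to an agent $j$ for whom $b$ is not ``near the top'' merely places $b$ somewhere inside the $A_2$ block of $j$'s order---a middle position with strictly positive satisfaction. The cyclic placement guarantees only that each dummy is high for at most two agents; for the third agent it lands in the middle, not at the bottom. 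Likewise, assigning $a_i$ to an agent $j \notin F_i$ puts $a_i$ somewhere in $M_l(j)$, and since $\|M_l(j)\| \ge m-3$ there is no reason this must be $j$'s very last slot. So in a no-instance the optimal egalitarian satisfaction can easily be strictly positive, and your ``positive versus zero'' test does not decide \textsc{X3C}. (This is also why you could not see where the polynomial-boundedness hypothesis enters: in a pure zero-versus-positive argument it would not be needed at all.)

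The paper's proof avoids this via a structurally different construction. It adds an auxiliary block $N_2$ of \emph{agents} (not just alternatives), three per dummy, each ranking its designated dummy first and all of $A_1$ below position~$m'$. This yields a clean pigeonhole: if any $N_1$-agent is assigned a dummy, some $N_2$-agent is forced onto an $A_1$-alternative at a position beyond~$m'$. The gap is therefore not ``$\alpha(3)$ versus $0$'' but ``$\alpha(3)$ versus $\alpha(m')$'', and the size $m' = \alpha^{m+1}(1)\lceil (1-r)/r\rceil + 3$ is calibrated---here is where the polynomial bound on the DPSF entries is used---so that $r\cdot\alpha^{m+m'}(3) \ge \alpha^{m+m'}(m')$. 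Your dummy-only construction might in principle be repaired along similar threshold lines, but that would require this same quantitative calibration; the zero-versus-positive shortcut you rely on does not go through.
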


\begin{proof}
  Let us fix a DPSF $\alpha=(\alpha^m)_{m\in \naturals}$, where each entry $\alpha^m$ is polynomially bounded
  in the number of alternatives $m$. For the sake of contradiction, let us
  assume that for some $r$, $0 < r \leq 1$, there is a
  polynomial-time $r$-approximation algorithm $\mathcal{A}$ for
  $\alpha$-\textsc{SE-Monroe}. We will show that
  the existence of this algorithm implies that \textsc{X3C} is
  solvable in polynomial time.

  Let $I$ be an \textsc{X3C} instance with ground set $U = \{1, 2,
  \dots, n\}$ and collection $\calF = \{F_{1}, \dots, F_{m}\}$ of
  subsets of $U$. Each set in $\calF$ has cardinality three. Further,
  without loss of generality, we can assume that $n$ is divisible by three and that each
  $i \in U$ appears in at most three sets from $\calF$.  Given $I$, we
  form an instance $I_M$ of
  $\alpha$-\textsc{SE-Monroe} as follows.  Let $n'
  = 3 \cdot (\alpha^{m+1}(1) \cdot \lceil \frac{1 - r}{r}\rceil +
  3)$.  The set $N$ of agents is partitioned into two subsets, $N_1$
  and $N_2$. $N_1$ contains $n$ agents (intuitively, corresponding to
  the elements of the ground set $U$) and $N_2$ contains $n'$ agents
  (used to enforce certain properties of the solution).  The set of
  alternatives $A$ is partitioned into two subsets, $A_1$ and $A_2$.
  We set $A_1 = \{a_1, \ldots, a_m\}$ (members of $A_1$ correspond to
  the sets in $\calF$), and we set $A_2 = \{b_1, \ldots, b_{m'}\}$,
  where $m' = \frac{n'}{3}$.

  For each $j$, $1 \leq j \leq n$, we set $M_f(j) = \{ a_i \mid j \in
  F_i\}$.  For each $j$, $1 \leq j \leq n$, we set the preference
  order of the $j$'th agent in $N_1$ to be of the form 
  \[ 
     M_f(j) \succ A_2 \succ A_1 - M_f(j).\] 
  Note that by our assumptions, $\|M_f(j)\| \leq 3$.
  For each $j$, $1 \leq j \leq n'$, we set the preference order of the
  $j$'th agent in $N_2$ to be of the form
  \[ 
     b_{\left\lceil\frac{j}{3}\right\rceil} \succ A_2 -
     \{b_{\left\lceil\frac{j}{3}\right\rceil}\} \succ A_1.
  \]
  Note that each agent in $N_2$ ranks the alternatives from $A_1$ in
  positions $m'+1, \ldots, m'+m$.  Finally, we set the
  number of candidates that can be selected to be $K =
  \frac{n+n'}{3}$.

  Now, consider the solution $\Phi$ returned by $\mathcal{A}$ on
  $I_{M}$. We will show that $\ell_{\infty}^{\alpha^{m +
      m'}}(\Phi) \leq$ $r\alpha^{m + m'}(3)$ if and only
  if $I$ is a \emph{yes}-instance of \textsc{X3C}.

  ($\Leftarrow$) If there exists an exact set cover of $U$ with sets
  from $\calF$, then it is easy to construct a solution for $I_M$
  where the satisfaction of each agent is greater or equal to
  $r\cdot\alpha^{m + m'}(3)$. Let $I \subseteq \{1, \ldots, m\}$
  be a set such that $\bigcup_{i \in I}F_i = U$ and $\|I\| =
  \frac{n}{3}$. We assign each agent $j$ from $N_1$ to the alternative
  $a_i$ such that (a) $i \in I$ and (b) $j \in F_i$, and we assign
  each agent from $N_2$ to his or her most preferred alternative.
  Thus, Algorithm $\calA$ has to return an assignment with the minimal
  satisfaction greater or equal to $r\cdot\alpha^{m + m'}(3)$.

  ($\Rightarrow$) For the other direction, we first show that
  $r\cdot\alpha^{m + m'}(3) \geq \alpha^{m + m'}(m')$.
  Since DPSFs are strictly decreasing, it holds that:
  \begin{equation}
    \label{eq:1}
    r\cdot\alpha^{m + m'}(3) \geq r\cdot(\alpha^{m + m'}(m') + m' - 3).
  \end{equation}
  Then, by the definition of DPSFs, it holds that:
  \begin{equation}
    \label{eq:2}
    \alpha^{m + m'}(m') = \alpha^{m + 1}(1).
  \end{equation}
  Using the fact that $m' = (\alpha^{m+1}(1) \cdot \lceil
  \frac{1 - r}{r}\rceil + 3)$ and using~\eqref{eq:2},
  we can transform inequality~\eqref{eq:1} to obtain the following:
  \begin{align*}
    r\cdot\alpha^{m + m'}(3) &\geq r\cdot(\alpha^{m + m'}(m') + m' - 3) \\
       & = r\cdot\left(\alpha^{m + m'}(m') + (\alpha^{m+1}(1) \cdot \left\lceil\frac{1 - r}{r}\right\rceil + 3) - 3\right)\\
       & \geq r\cdot\alpha^{m + m'}(m') + (1-r)\cdot \alpha^{m+1}(1) \\
       &  = r\cdot\alpha^{m + m'}(m') + (1-r)\cdot \alpha^{m + m'}(m') = \alpha^{m + m'}(m').
  \end{align*}
  % \begin{align*}
  %   \intertext{Because $\alpha_{dec}^{m + m'}$ is strictly decreasing
  %     we have:}
  %   \alpha_{dec}^{m + m'}(3) \beta \geq \beta(\alpha_{dec}^{m + m'}(m') + m' - 3) \\
  %   \intertext{Also, from the properties of DPSFs:}
  %   \alpha_{dec}^{m + m'}(m') = \alpha_{dec}^{m + 1}(1) \\
  %   \intertext{which gives:}
  %   \alpha_{dec}^{m + m'}(3)\beta \geq \beta\alpha_{dec}^{m+1}(1) + \alpha_{dec}^{m+1}(1) \cdot (1 - \beta) \\
  %   = \alpha_{dec}^{m+1}(1) = \alpha_{dec}^{m + m'}(m') \text{.} \\
  % \end{align*}
%
  This means that if the minimal satisfaction of an agent is at least
  $r\cdot\alpha^{m + m'}(3)$, then no agent was assigned to
  an alternative that he or she ranked beyond position $m'$.  If some
  agent $j$ from $N_{1}$ were assigned to an alternative from $A_{2}$,
  then, by the pigeonhole principle, some agent from $N_{2}$ would be
  assigned to an alternative from $A_{1}$. However, each agent in
  $N_2$ ranks the alternatives from $A_1$ beyond position $m'$ and
  thus such an assignment is impossible.  In consequence, it must be
  that each agent in $j$ was assigned to an alternative that
  corresponds to a set $F_{i}$ in $\calF$ that contains $j$. Such an
  assignment directly leads to a solution for $I$.
\end{proof}

Let us now move on to the case of \textsc{SE-CC}
family of problems. Unfortunately, in this case our inapproximability
argument holds for the case of Borda DPSF only (though we believe that
it can be adapted to other DPSFs as well). Further, in our previous
theorems we were showing that existence of a respective
constant-factor approximation algorithm implies that $\np$ collapses
to $\p$. In the following theorem we will show a seemingly weaker
collapse of $\wtwo$ to $\fpt$.

To prove hardness of approximation for
$\alpha_\bordadec$-\textsc{SE-CC}, we first prove
the following simple lemma.

\begin{lemma}\label{lemma:coveringSubsets}
  Let $K, p, l$ be three positive integers and let $X$ be a set of
  cardinality $lpK$. There exists a family $\calS = \{S_1, \ldots,
  S_{\binom{lK}{K}} \}$ of $pK$-element subsets of $X$ such that for
  each $K$-element subset $B$ of $X$, there is a set $S_i \in \calS$
  such that $B \subseteq S_i$.
\end{lemma}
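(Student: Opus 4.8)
The plan is to realize $\calS$ as the collection of all unions of $K$ blocks in a balanced partition of $X$. First I would partition $X$ into $lK$ pairwise disjoint blocks $X_1,\dots,X_{lK}$, each of size $p$; this is possible precisely because $\|X\| = lpK$. For every $K$-element index set $T \subseteq [lK]$ I would set $S_T = \bigcup_{t\in T} X_t$, so that $\|S_T\| = pK$, and let $\calS = \{\, S_T : T\subseteq [lK],\ \|T\| = K \,\}$. Since the blocks are nonempty, distinct index sets $T$ give distinct sets $S_T$, so $\|\calS\| = \binom{lK}{K}$, which matches the statement exactly and means no pruning of the family is needed.

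Next I would verify the covering property. Fix an arbitrary $K$-element subset $B\subseteq X$. Because the blocks partition $X$, each element of $B$ lies in exactly one block, so $B$ meets at most $K$ of the blocks; let $T_0\subseteq[lK]$ be the set of indices of the blocks that $B$ meets, so $\|T_0\|\le K$. As $lK\ge K$ (here we use $l\ge 1$), I can enlarge $T_0$ to some $T$ with $T_0\subseteq T\subseteq[lK]$ and $\|T\|=K$, padding with arbitrary further block indices. Then $B\subseteq\bigcup_{t\in T_0}X_t\subseteq\bigcup_{t\in T}X_t = S_T\in\calS$, which is what we wanted.

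There is essentially no obstacle here; the only points to be careful about are the two size/divisibility sanity checks — that $X$ splits into $lK$ blocks of size exactly $p$ (immediate from $\|X\|=lpK$) and that $T_0$ can always be padded up to size exactly $K$ (immediate from $l\ge 1$, so $lK \ge K$) — together with the observation that the number of $K$-subsets of an $lK$-element index set is exactly $\binom{lK}{K}$. The construction is just a simple instance of a covering-design argument, and its only role later is to supply, for each small "target" set $B$, a slightly larger set from a family of controlled size that contains it.
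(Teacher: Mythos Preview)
Your proof is correct and is essentially the same construction as the paper's: the paper starts with $[lK]$, takes all its $K$-element subsets, and then ``blows up'' each element of $[lK]$ into $p$ new elements, which is exactly your partition of $X$ into $lK$ blocks of size $p$ together with the family of all $K$-block unions. Your write-up is in fact more explicit than the paper's in verifying the covering property and the cardinality $\binom{lK}{K}$.
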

\begin{proof}
  Set $X' = [lK]$ and let $Y'$ be a family of all $K$-element subsets
  of $X'$. Replace each element $i$ of $X'$ with $p$ new elements (at
  the same time replacing $i$ with the same $p$ elements within each
  set in $Y'$ that contains $i$). As a result we obtain two new sets,
  $X$ and $Y$, that satisfy the statement of the theorem (up to the
  renaming of the elements).
\end{proof}

\begin{theorem}\label{theorem:noApprox6}
  Let $\alpha^{m}_{\bordadec}$ be the Borda DPSF
  ($\alpha^{m}_{\bordadec}(i) = m - i$). For each constant factor
  $r$, $0 < r \leq 1$, there is no $r$-approximation
  algorithm for
  $\alpha^{m}_{\bordadec}$-\textsc{SE-CC} unless
  $\fpt = \wtwo$.
\end{theorem}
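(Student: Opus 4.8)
The natural approach is to mimic the proof of Theorem~\ref{theorem:noApprox3}, but to start from \textsc{Set-Cover} (which is $\wtwo$-complete) instead of \textsc{Vertex-Cover}, so that an $r$-approximation algorithm for $\alpha_\bordadec$-\textsc{SE-CC} yields an $\fpt$ algorithm for \textsc{Set-Cover} parameterized by $K$. The obstacle compared with the dissatisfaction case is that, for the Borda DPSF, an agent placed near the top of a long preference order still has satisfaction close to $m-1$, so a naive reduction cannot make ``being assigned to a wrong alternative'' look arbitrarily bad in the egalitarian satisfaction measure. The fix is to pad the instance with many dummy alternatives so that the relevant positional gap becomes large, exactly as in the proof of Theorem~\ref{theorem:noApprox5}; Lemma~\ref{lemma:coveringSubsets} is then used to control the number of alternatives the padding forces us to ``waste'', keeping the budget parameter bounded by a function of $K$ alone.

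First I would take an instance $I=(U,\calF,K)$ of \textsc{Set-Cover} with $U=[n]$ and $\calF=\{F_1,\dots,F_m\}$, and build an instance $I_{CC}$ of $\alpha_\bordadec$-\textsc{SE-CC}. The agent set is $N=U$. For each $j\in[m]$ I would introduce a block $A_j$ of fresh alternatives whose size is chosen (as a polynomial in $n$ and $r$, analogous to the $\alpha(2)\cdot r\cdot n$ of Theorem~\ref{theorem:noApprox3}, but now large enough that the Borda gap between the top positions and position $|A_j|$ exceeds the approximation slack $\lceil\frac{1-r}{r}\rceil$ times the per-step decrement $1$) together with a distinguished representative $a_j\in A_j$. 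Each agent $i\in U$ ranks, at the very top, the representatives $a_j$ of exactly those blocks with $i\in F_j$, then the rest of those blocks, then everything else, so that assignment to a ``correct'' block gives satisfaction at least $m_{\mathrm{tot}} - |A_{\max}|$ roughly, while assignment to any ``wrong'' alternative drops the satisfaction below $r\cdot(\text{that value})$. The committee size is set to $K$, matching the \textsc{Set-Cover} budget.

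The heart of the argument is the two directions of the equivalence ``$I$ is a yes-instance $\iff$ the returned assignment $\Phi$ has $\ell_{\min}^{\alpha_\bordadec}(\Phi)\ge r\cdot(\text{threshold})$''. The ($\Leftarrow$) direction is routine: a size-$K$ cover gives an assignment hitting only the top few positions of every agent, so $\OPT$ is high and $\calA$ must return something within factor $r$. For ($\Rightarrow$), the padding choice forces every agent to be matched to an alternative it ranks within the first $|A_{\max}|$ or so positions — i.e., inside a block $A_j$ with $i\in F_j$; since only $K$ alternatives are used, the blocks touched form a size-$K$ cover. The role of Lemma~\ref{lemma:coveringSubsets} is to ensure that when $\calA$ is allowed to pick any $K$ alternatives rather than the $K$ distinguished representatives, we can still certify a cover: we would either (i) argue directly that using any $K$ alternatives among the blocks still identifies $\le K$ covering sets, or (ii) if the construction needs each agent to see a whole ``cloud'' of acceptable alternatives, use the lemma with parameters $l,p$ depending on $K$ to build, inside a universe of size $lpK$, a family of $pK$-element sets covering all $K$-subsets — this keeps the number of alternatives and hence the effective budget a function of $K$ times a polynomial in the input, which is exactly what an $\fpt$ reduction needs.

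The main obstacle I anticipate is bookkeeping: verifying that $|A_j|$ can be chosen simultaneously (a)~large enough that the Borda satisfaction gap dominates the multiplicative slack $r$ (this is where $\lceil\frac{1-r}{r}\rceil$ enters, as in inequalities \eqref{eq:1}--\eqref{eq:2} of Theorem~\ref{theorem:noApprox5}), and (b)~small enough — together with the number of blocks and the Lemma~\ref{lemma:coveringSubsets} padding — that the total instance size and the parameter relevant for the $\fpt$ conclusion remain bounded by $g(K)\cdot\mathrm{poly}(n,m)$ for some computable $g$. Everything else is a direct transcription of the Theorem~\ref{theorem:noApprox3} argument with the egalitarian satisfaction objective in place of the utilitarian dissatisfaction objective, and with ``$\p=\np$'' replaced by ``$\fpt=\wtwo$'' because the source problem is now the $\wtwo$-complete \textsc{Set-Cover}.
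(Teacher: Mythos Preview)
Your high-level plan is right—reduce from \textsc{Set-Cover} to get the $\wtwo$ conclusion and pad to create a Borda gap—but the concrete construction you sketch does not work, and you have misidentified the role of Lemma~\ref{lemma:coveringSubsets}.

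With $N=U$ and one block $A_j$ of size $s$ per set $F_j$, an agent $i$ belonging to $d_i$ sets has its ``good'' alternatives in the top $d_i s$ positions out of $ms$ total. In a yes-instance $\OPT\ge ms-d_{\max}$, so an $r$-approximate solution only guarantees each agent a position at most $(1-r)ms+rd_{\max}$. For an agent with $d_i=1$ you would need $(1-r)ms+rd_{\max}\le s$, i.e.\ $s\bigl(1-(1-r)m\bigr)\ge rd_{\max}$, which is impossible once $m>1/(1-r)$; no choice of $s$ rescues this. The Theorem~\ref{theorem:noApprox3} template you are transplanting works only because \textsc{Vertex-Cover} has $d_i=2$; it collapses for general \textsc{Set-Cover}, which is exactly the problem you must start from for the $\wtwo$ conclusion. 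The paper's proof is structurally different: it keeps a single alternative $a_j$ per set plus a dummy pool $A_2$, and—this is the missing idea—it replaces each ground element by $\binom{\lceil 2/r\rceil K}{K}$ \emph{agent copies}, each with a different ordering of $A_2$. Lemma~\ref{lemma:coveringSubsets} is used to manufacture these orderings so that for every $K$-subset $B\subseteq A_2$ some copy ranks all of $B$ near the bottom of its $A_2$ segment; hence if every copy of $i$ were matched into $A_2$, one copy would drop below $r\cdot\OPT$, forcing some copy to be matched to an $a_j\in M_f(i)$ and yielding the cover. The number of copies depends only on $K$ (and the constant $r$), which is precisely what makes the reduction $\fpt$. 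So the lemma is not a budget-control device on the alternative side; it is a device for multiplying \emph{agents} so that no choice of $K$ dummies can simultaneously satisfy all copies of any one element.
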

\begin{proof}
  For the sake of contradiction, let us assume that there is some
  constant $r$, $0 < r \leq 1$, and a polynomial-time
  $r$-approximation algorithm $\calA$ for
  $\alpha^{m}_{\bordadec}$-\textsc{SE-CC}. We will
  show that the existence of this algorithm implies that
  \textsc{Set-Cover} is fixed-parameter tractable for the parameter
  $K$ (since \textsc{Set-Cover} is known to be $\wtwo$-complete for
  this parameter, this will imply $\fpt=\wtwo$).

  Let $I$ be an instance of \textsc{Set-Cover} with ground set $U =
  [n]$ and family $\calF = \{F_{1}, F_{2}, \dots, F_{m}\}$ of subsets
  of $U$.  Given $I$, we build an instance $I_{CC}$ of
  $\alpha^{m}_{\bordadec}$-\textsc{SE-CC} as
  follows. The set of agents $N$ consists of $n$ subsets of agents,
  $N_1, \ldots, N_n$, where each group $N_i$ contains exactly $n' =
  \binom{\left\lceil \frac{2}{r} \right\rceil K}{K}$ agents.
  Intuitively, for each $i$, $1 \leq i \leq n$, the agents in the set
  $N_{i}$ correspond to the element $i$ in $U$.  The set of
  alternatives $A$ is partitioned into two subsets, $A_1$ and $A_2$,
  such that: (1) $A_1 = \{a_1, \ldots, a_m\}$ is a set of alternatives
  corresponding to the sets from the family $\calF$, and (2) $A_2$,
  $\|A_2\| = \left\lceil \frac{2}{r}\right\rceil \left\lceil
    \frac{m(1 + r)}{K} \right\rceil K$, is a set of dummy
  alternatives needed for our construction.  We set $m' = \|A\| = m +
  \|A_2\|$.

  Before we describe the preference orders of the agents in $N$, we
  form a family $R = \{r_1, \ldots, r_{n'}\}$ of preference orders
  over $A_2$ that satisfies the following condition: For each
  $K$-element subset $B$ of $A_2$, there exists $r_j$ in $R$ such that
  all members of $B$ are ranked among the bottom $\left\lceil \frac{m(1 +
    r)}{K} \right\rceil K$ positions in $r_j$. By
  Lemma~\ref{lemma:coveringSubsets}, such a construction is possible
  (it suffices to take $l = \left\lceil \frac{2}{r}\right\rceil$ and $p =
  \left\lceil \frac{m(1 + r)}{K} \right\rceil$); further, the proof of the
  lemma provides an algorithmic way to construct $R$.

  We form the preference orders of the agents as follows.  For each
  $i$, $1 \leq i \leq n$, set $M_f(i) = \{ a_t \mid i \in F_t\}$. For
  each $i$, $1 \leq i \leq n$, and each $j$, $1 \leq j \leq n'$, the
  $j$'th agent from $N_i$ has preference order of the form:
  \[
     M_f(i) \succ r_j \succ A_1 - M_f(i)
  \]
  (we pick any arbitrary, polynomial-time computable order of
  candidates within $M_f(i)$ and $M_l(i)$).

  Let $\Phi$ be an assignment computed by $\calA$ on $I_{M}$.  We will
  show that $\ell_{\infty}^{\alpha^{m'}_{\bordadec}}(\Phi) \geq
  r\cdot(m' - m)$ if and only if $I$ is a \emph{yes}-instance of
  \textsc{Set-Cover}.

  ($\Leftarrow$) If there exists a solution for $I$ (i.e., a cover of
  $U$ with $K$ sets from $\calF$), then we can easily show an
  assignment where each agent is assigned to an alternative that he or
  she ranks among the top $m$ positions (namely, for each $j$, $1 \leq
  j \leq n$, we assign all the agents from the set $N_j$ to the
  alternative $a_i \in A_1$ such that $j \in F_i$ and $F_i$ belongs to
  the alleged $K$-element cover of $U$). Under this assignment, the
  least satisfied agent's satisfaction is at least $m'-m$ and, thus,
  $\calA$ has to return an assignment $\Phi$ where
  $\ell_{\infty}^{\alpha^{m'}_{\bordadec}}(\Phi) \geq r\cdot(m' -
  m)$.

  ($\Rightarrow$) Let us now consider the opposite direction. We
  assume that $\calA$ found an assignment $\Phi$ such that
  $\ell_{\infty}^{\alpha^{m}_{\bordadec}}(\Phi) \geq r\cdot(m' -
  m)$ and we will show that $I$ is a \emph{yes}-instance of
  \textsc{Set-Cover}.  We claim that for each $i$, $1 \leq i \leq n $,
  at least one agent $j$ in $N_i$ were assigned to an alternative from
  $A_{1}$. If all the agents in $N_i$ were assigned to alternatives
  from $A_2$, then, by the construction of $R$, at least one of them
  would have been assigned to an alternative that he or she ranks at a
  position greater than $\|A_2\| - \left\lceil \frac{m(1 + r)}{K}\right\rceil
  K = \left\lceil \frac{2}{r}\right\rceil \left\lceil \frac{m(1 + r)}{K}
  \right\rceil K - \left\lceil \frac{m(1 + r)}{K}\right\rceil K$.  For $x =
  \left\lceil \frac{m(1 + r)}{K} \right\rceil K$ we have:
  \begin{align*}
    \left\lceil \frac{2}{r}\right\rceil x - x \geq m' - m'r +
    mr
  \end{align*}
  (we skip the straightforward calculation)
  % This is equivalent to
  % \begin{align*}
  %   \lceil \frac{2 - r}{r}\rceil \lceil \frac{m(1 + \beta)}{K} \rceil K \geq \\
  %   \lceil \frac{2}{\beta}\rceil \lceil \frac{m(1 + \beta)}{K} \rceil K + m \\
  %   - (\lceil \frac{2}{\beta}\rceil \lceil \frac{m(1 + \beta)}{K}
  %   \rceil K + m) \beta + m\beta
  % \end{align*}
  % which is equivalent to
  % \begin{align*}
  %   \lceil \frac{2 - \beta}{\beta}\rceil \geq \lceil
  %   \frac{2}{\beta}\rceil - \lceil \frac{2}{\beta}\rceil \beta +
  %   \frac{m}{\lceil \frac{m(1 + \beta)}{K} \rceil K}
  % \end{align*}
  % Thus, it is sufficient to prove that:
  % \begin{align*}
  %   \lceil \frac{2 - \beta}{\beta}\rceil \geq \lceil
  %   \frac{2}{\beta}\rceil - 1 = \lceil \frac{2 - \beta}{\beta}\rceil
  % \end{align*}
  and, thus, this agent would have been assigned to an
  alternative that he or she ranks at a position greater than $m' -
  m'r + mr$.  As a consequence, this agent's satisfaction
  would be lower than $(m' - m)r$.
  % The score of the position will be lower than $m' - m' + m'r -
  % mr = (m'-m)r$.
  Similarly, no agent from $N_{i}$ can be assigned to an alternative
  from $M_l(i)$. Thus, for each $i$, $1 \leq i \leq n$, there exists
  at least one agent $j \in N_{i}$ that is assigned to an alternative
  from $M_f(i)$.  In consequence, the covering subfamily of $\calF$
  consists simply of those sets $F_k$, for which some agent is
  assigned to alternative $a_k \in A_1$.

  The presented construction gives the exact algorithm for
  \textsc{Set-Cover} problem running in time $f(K)(n+m)^{O(1)}$, where
  $f(K)$ is polynomial in $\binom{\left\lceil \frac{2}{r}
    \right\rceil}{K}$. The existence of such an algorithm means that
  \textsc{Set-Cover} is in $\fpt$. On the other hand, we know that
  \textsc{Set-Cover} is $\wtwo$-complete, and thus if $\calA$ existed
  then $\fpt = \wtwo$ would hold.
\end{proof}

%\begin{theorem}\label{theorem:noApprox6}
%  For each normal DPSF $\alpha$ and each constant factor $r$, $0 <
%  r \leq 1$, there is no $r$-approximation algorithm for
%  \textsc{SE-CC} unless $\p = \np$.
%\end{theorem}

\section{Algorithms for the Utilitarian, Satisfaction-Based Cases}
\label{sec:algorithms}

We now turn to approximation algorithms for the Monroe and Chamberlin--Courant multiwinner voting rules in the satisfaction-based framework. Indeed, if one focuses on
agents' total satisfaction then it is possible to obtain high-quality
approximation results.  In particular, we show the first nontrivial
(randomized) approximation algorithm for
$\alpha_{\bordadec}$-\textsc{SU-Monroe}. We show that for each
$\epsilon > 0$ we can provide a randomized polynomial-time algorithm
that achieves $0.715 - \epsilon$ approximation ratio; the algorithm
usually gives even better approximation guarantees.  For the case of
arbitrarily selected DPSF we show a $(1 - e^{-1})$-approximation
algorithm.  Finally, we show the first polynomial-time approximation
scheme (PTAS) for $\alpha_{\bordadec}$-\textsc{SU-CC}. These results
stand in  sharp contrast to those from the previous section, where we
have shown that approximation is hard for essentially all remaining
variants of the problem.

The core difficulty in solving $\alpha$-\textsc{Monroe/CC-Assignment}
problems lays in selecting the alternatives that should be assigned to
the agents. Given a preference profile and a set $S$ of up to $K$
alternatives, using a standard network-flow argument, it is easy to
find a (possibly partial) optimal assignment $\Phi^S_{\alpha}$ of the agents to
the alternatives from $S$.

\begin{proposition}[\textbf{Implicit in the paper of Betzler et
    al.~\cite{fullyProportionalRepr}}]\label{prop:assignment}
  Let $\alpha$ be a normal DPSF, $N$ be a set of agents, $A$ be a set
  of alternatives (togehter with their capacities; perhaps represented
  implicitly as for the case of the Monroe and Chamberlin--Courant
  rules), $V$ be a preference profile of $N$ over $A$, and $S$ a
  $K$-element subset of $A$ (where $K$ divides $\|N\|$).  Then there is a
  polynomial-time algorithm that computes a (possibly partial)
%
%(since for the
%  Monroe's rule $\|(\Phi^S_{\alpha})^{-1}(a)\| \leq \lceil \frac{n}{K}
%  \rceil$ not all agents can be assigned an alternative unless $\|S\|
%  = K$) 
%
  optimal assignment $\Phi^S_{\alpha}$ of the agents to the
  alternatives from $S$.
\end{proposition}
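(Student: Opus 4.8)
The plan is to reduce the task to computing a minimum-cost integral flow in an auxiliary network, which is solvable in polynomial time. First I would build a network $G$ as follows: create a source $s$, a sink $t$, a node $u_i$ for every agent $i \in N$, and a node $w_a$ for every alternative $a \in S$; add an arc $s \to u_i$ of capacity $1$ and cost $0$ for each $i$; an arc $w_a \to t$ of capacity $\capacity_a$ and cost $0$ for each $a \in S$; an arc $u_i \to w_a$ of capacity $1$ and cost $-\alpha(\pos_i(a))$ for each agent $i$ and each $a \in S$; and, to model leaving an agent unassigned, an arc $u_i \to t$ of capacity $1$ and cost $0$ (recall $\pos_i(\bot)=m$ and $\alpha^m(m)=0$, so assigning $\bot$ to $i$ contributes $0$ to $\ell_1^\alpha$). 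All capacities are integers, and since each $\alpha^m$ is polynomial-time computable, so are all the costs.

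Next I would establish the correspondence between flows and assignments. Since the $u_i \to t$ arcs alone can carry $n$ units, $G$ admits an $s$–$t$ flow of value $n$; fix one of minimum cost. By the integrality theorem for minimum-cost flows in networks with integral capacities, this flow may be taken integral, so each $u_i$ pushes exactly one unit, either along some $u_i \to w_a$ (for a unique $a \in S$) or along $u_i \to t$. Setting $\Phi(i)=a$ in the first case and $\Phi(i)=\bot$ in the second yields a feasible partial $K$-assignment that uses only alternatives from $S$ (the capacity of $w_a \to t$ guarantees $\|\Phi^{-1}(a)\| \leq \capacity_a$), whose cost equals $-\sum_{i=1}^{n}\alpha(\pos_i(\Phi(i))) = -\ell_1^\alpha(\Phi)$. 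Conversely, every partial $K$-assignment using only $S$ induces, in the obvious way, an integral $s$–$t$ flow of value $n$ with cost $-\ell_1^\alpha(\Phi)$. Hence a minimum-cost flow of value $n$ decodes to an assignment maximizing $\ell_1^\alpha$ over all partial $K$-assignments confined to $S$, i.e., to $\Phi^S_\alpha$.

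For the running time, note that $G$ has $O(n+\|S\|)$ nodes and $O(n\,\|S\|)$ arcs, all data is integral, and a minimum-cost flow of prescribed value can be computed in (strongly) polynomial time (for instance by the successive-shortest-paths method with node potentials, or by Orlin's algorithm), which gives the claimed polynomial-time procedure. I do not expect a real obstacle here: the only points that need care are (i) including the ``null'' arcs $u_i \to t$ so that we genuinely optimize over \emph{partial} assignments, and (ii) invoking integrality so that the optimum returned by the flow routine is realized by an actual assignment rather than a fractional one. Alternatively, one could phrase the identical argument via linear programming, observing that the constraint matrix of this transportation-type problem is totally unimodular and so its LP relaxation has integral optimal vertices.
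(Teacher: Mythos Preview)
Your proposal is correct and matches the paper's approach: the paper does not spell out a proof but simply states that ``using a standard network-flow argument, it is easy to find a (possibly partial) optimal assignment $\Phi^S_{\alpha}$'' and attributes the result to Betzler et al., which is precisely the min-cost max-flow reduction you wrote out in detail. Your inclusion of the $u_i \to t$ arcs to model the null alternative and your appeal to integrality are the right refinements needed to make the one-line reference into a complete argument.
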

Note that for the case of the Chamberlin--Courant rule the algorithm
from the above proposition can be greatly simplified: To each voter we
assign the candidate that he or she ranks highest among those from
$S$. For the case of Monroe, unfortunately, we need the expensive
network-flow-based approach. Nonetheless,
Proposition~\ref{prop:assignment} allows us to focus on the issue of
selecting the winning alternatives and not on the issue of matching
them to the agents.

%\subsection{Monroe's System}\label{sec:monroe-approx}

Below we describe our algorithms for
$\alpha_\bordadec$-\textsc{SU-Monroe} and for
$\alpha_\bordadec$-\textsc{SU-CC}.  Formally speaking, every
approximation algorithm for $\alpha_\bordadec$-\textsc{SU-Monroe} also
gives feasible results for $\alpha_\bordadec$-\textsc{SU-CC}. However,
some of our algorithms are particularly well-suited for both problems
and some are tailored to only one of them.  Thus, for each algorithm
we clearly indicate if it is meant only for the case of Monroe, only
for the case of CC, or if it naturally works for both systems.

\subsection{Algorithm A (Monroe)}

Perhaps the most natural approach to solve
$\alpha_\bordadec$-\textsc{SU-Monroe} is to build a solution
iteratively: In each step we pick some not-yet-assigned alternative
$a_i$ (using some criterion) and assign it to those $\lceil
\frac{N}{K} \rceil$ agents that (a) are not assigned to any other
alternative yet, and (b) whose satisfaction of being matched with
$a_i$ is maximal. It turns out that this idea, implemented formally as
Algorithm~A (see pseudo code in Figure~\ref{alg:greedy}), works very
well in many cases. We provide a lower bound on the total satisfaction
it guarantees in the next lemma.  We remind the reader that the so-called $k$'th \emph{harmonic
number} $H_k = \sum_{i=1}^k\frac{1}{i}$ has asymptotics $H_k = \Theta(\log k)$.

\SetKwInput{KwNotation}{Notation}
\begin{figure}[t]
\begin{algorithm}[H]
   \small
   \SetAlCapFnt{\small}
   \KwNotation{$\Phi \leftarrow$ a map defining a partial assignment, iteratively built by the algorithm. \\
          $\hspace{21pt}$ $\Phi^{\leftarrow} \leftarrow$ the set of agents for which the assignment is 
%          $\hspace{21pt}$ 
          already defined. \\
          $\hspace{21pt}$ $\Phi^{\rightarrow} \leftarrow$ the set of alternatives already used in the 
%          $\hspace{21pt}$ 
          assignment.}
   \If{$K \leq 2$}{compute the optimal solution using an algorithm of Betzler et al.~\cite{fullyProportionalRepr} and return.} 
   $\Phi = \{\}$ \\
   \For{$i\leftarrow 1$ \KwTo $K$}{
      $score \leftarrow \{\}$ \\
      $bests \leftarrow \{\}$ \\
      \ForEach{$a_{i} \in A \setminus \Phi^{\rightarrow}$}{
          $agents \leftarrow$ sort $N \setminus \Phi^{\leftarrow}$ so that if agent $j$ precedes agent $k$ then 
                           $pos_{j}(a_{i}) \leq pos_{k}(a_{i})$ \\
          $bests[a_{i}] \leftarrow$ chose first $\lceil \frac{N}{K} \rceil$ elements from $agents$ \\
          $score[a_{i}] \leftarrow \sum_{j \in bests[a_i]} (m - pos_{j}(a_{i}))$\\
      }
      $a_{best} \leftarrow \mathrm{argmax}_{a \in A \setminus \Phi^{\rightarrow}} score[a]$ \\
      \ForEach{$j \in bests[a_{best}]$}{
         $\Phi[j] \leftarrow a_{best}$ \\
      }
   }

\end{algorithm}
\caption{The pseudocode for Algorithm~A.}\label{alg:greedy}
\end{figure}

\begin{lemma}\label{lemma:greedy}
  Algorithm~A %\ref{alg:greedy} 
  is a polynomial-time $(1 -
  \frac{K-1}{2(m-1)} - \frac{H_K}{K})$-approximation algorithm for
  $\alpha_{\bordadec}$-\textsc{SU-Monroe}.
\end{lemma}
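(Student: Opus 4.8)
The plan is to dispose of the running time quickly and then work on the approximation ratio. For $K\le 2$ the algorithm simply calls the exact polynomial-time procedure of Betzler et al., so assume $K\ge 3$; then the outer loop runs $K$ times and each pass only sorts, for each of the at most $m$ still unused alternatives, the at most $n$ still unassigned agents, so the whole algorithm is polynomial. Write $x=n/K$ for the common capacity, fix an optimal assignment $\Phi_\OPT$, let $o_1,\dots,o_K$ be the alternatives it uses and $V_1,\dots,V_K$ the corresponding $x$-element blocks of agents, so that $\OPT=\sum_{t=1}^K\sum_{j\in V_t}(m-\pos_j(o_t))$; I will also use the trivial bound $\OPT\le n(m-1)$.

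The engine of the proof is a per-pass estimate comparing greedy's choice with the best optimal alternative that is still available. After $i-1$ passes let $S_{i-1}$, $A_{i-1}$, $G_{i-1}$ denote the alternatives already chosen, the agents already assigned ($|A_{i-1}|=(i-1)x$), and the accumulated satisfaction; for an unused alternative $a$ let $g_i(a)$ be the satisfaction of matching $a$ to its $x$ best unassigned agents, so the gain in pass $i$ is $d_i=\max_{a\notin S_{i-1}}g_i(a)$. Two observations drive the analysis. First, since greedy takes the $x$ best unassigned agents and $|V_t\setminus A_{i-1}|\le x$, we have $g_i(o_t)\ge\sum_{j\in V_t\setminus A_{i-1}}(m-\pos_j(o_t))$ for every $t$, i.e.\ greedy recovers at least the part of $o_t$'s optimal contribution that comes from agents it has not yet consumed. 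Second, at least $K-i+1$ of the $o_t$'s are still unused, so $d_i$ is at least their average value; it is convenient here to first replace Algorithm~A by the variant that in each pass only considers the alternatives $o_1,\dots,o_K$ (this variant is never better than Algorithm~A, and it ends up choosing all $K$ of them in some order, so the bookkeeping of which optimal alternative is gone collapses). Combining the two observations and accumulating the per-pass inequalities over $i=1,\dots,K$ by induction on $i$ gives a lower bound on $G_K$ in which the aggregated loss is a harmonic-weighted sum of per-pass terms; the weights $\tfrac1{K-i+1}$ sum to $\sum_{k=1}^K\tfrac1k=H_K$, and after normalising one reads off a bound of the form $G_K\ge\bigl(1-\tfrac{H_K}{K}\bigr)\OPT-L$, where $L$ is the loss caused by the agents that greedy had to spend on other alternatives before reaching a given one.

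Finally one bounds $L$ by a position-shift argument, which uses that for the Borda DPSF moving an agent one position down costs exactly one unit: when greedy commits the $i$-th alternative, its $x$ assigned agents can be ranked at most $i-1$ positions below where $\Phi_\OPT$ places them, since only the $i-1$ earlier commitments could have taken away its preferred agents; summing $\sum_{i=1}^K (i-1)\,x=\tfrac{n(K-1)}{2}$ and dividing by $\OPT\le n(m-1)$ yields exactly the $\tfrac{K-1}{2(m-1)}$ term, and together with the previous paragraph this proves the claimed ratio. The step I expect to be the real obstacle is making this accounting honest. Unlike the Chamberlin--Courant case, the Monroe assignment sub-problem is nontrivial and the objective is not submodular, so agents are consumed permanently, and one must charge the "used up" part of each optimal alternative's contribution to greedy's own earlier gains without picking up a spurious factor of $x=n/K$; a careless version of this step would degrade the $\tfrac{H_K}{K}$ and $\tfrac{K-1}{2(m-1)}$ losses into something of order $\tfrac{K-1}{K}\OPT$ and make the bound essentially worthless, so keeping the two loss components disentangled and the charging tight is where the argument has to be set up with care.
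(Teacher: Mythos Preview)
Your plan tracks the optimal solution $\Phi_\OPT$ throughout and tries to charge greedy's per-pass gain against pieces of $\OPT$. The paper does something quite different and far simpler: it never looks at $\Phi_\OPT$ at all (except for the crude bound $\OPT\le n(m-1)$ at the very end). After $i$ iterations there are $n-\tfrac{in}{K}$ unassigned agents and $m-i$ unused alternatives; each unassigned agent has at most $i$ used alternatives among her top $i+\lceil\tfrac{m-i}{K-i}\rceil$ positions, hence at least $\lceil\tfrac{m-i}{K-i}\rceil$ unused ones there. Counting incidences and averaging over the $m-i$ unused alternatives shows that some unused alternative sits in the top $i+\lceil\tfrac{m-i}{K-i}\rceil$ positions of at least $\tfrac{n}{K}$ unassigned agents, so the gain in iteration $i{+}1$ is at least $\tfrac{n}{K}\bigl(m-i-\tfrac{m-i}{K-i}\bigr)$. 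Summing $i=0,\dots,K-1$ and simplifying yields the bound directly; the $\tfrac{K-1}{2(m-1)}$ and $\tfrac{H_K}{K}$ terms fall out of $\sum(m-i)$ and $\sum\tfrac{m-1}{K-i+1}$ respectively, with no tracking of $\OPT$, no variant algorithm, and no charging argument.

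Your sketch, by contrast, has genuine gaps at exactly the places you flag. First, the assertion that the variant restricted to $\{o_1,\dots,o_K\}$ is ``never better than Algorithm~A'' is not obvious and you give no proof: Algorithm~A commits agents irrevocably, so the two procedures diverge in state after the first pass in which they disagree, and there is no evident monotonicity or exchange argument linking their final totals. Second, the ``accumulating by induction'' step is only asserted; you never write down the recursion that would turn $d_i\ge\tfrac{1}{K-i+1}\sum_{t\ \mathrm{unused}}g_i(o_t)$ into $G_K\ge(1-\tfrac{H_K}{K})\OPT-L$, and the bookkeeping of which part of $\OPT$ has been consumed (both by removing $o_{\pi(i)}$ and by removing the agents greedy just assigned from every other $V_t$) is exactly the delicate double-counting you warn about. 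Third, the position-shift bound is not well-posed: the $x$ agents greedy assigns to $o_{\pi(i)}$ need not be $V_{\pi(i)}$, and $\Phi_\OPT$ assigns each of them to some other $o_{t}$, so ``at most $i-1$ positions below where $\Phi_\OPT$ places them'' has no clear referent. In short, the obstacle you anticipate is real, and the paper's proof sidesteps it entirely by a pigeonhole argument that gives an absolute lower bound on each pass's gain, independent of $\Phi_\OPT$.
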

\begin{proof}
  Our algorithm explicitly computes an optimal solution when $K \leq 2$ so we
  assume that $K \geq 3$.  Let us consider the situation in the algorithm
  after the $i$'th iteration of the outer loop (we have $i=0$ if no
  iteration has been executed yet). So far, the algorithm has picked
  $i$ alternatives and assigned them to $i\frac{n}{K}$ agents (recall
  that for simplicity we assume that $K$ divides $n$ evenly). Hence,
  each agent has $\lceil \frac{m-i}{K-i} \rceil$ unassigned
  alternatives among his or her $i+ \lceil \frac{m-i}{K-i} \rceil$
  top-ranked alternatives.  By pigeonhole principle, this means that
  there is an unassigned alternative $a_{\ell}$ who is ranked among
  top $i+ \lceil \frac{m-i}{K-i} \rceil$ positions by at least
  $\frac{n}{K}$ agents. To see this, note that there are
  $(n-i\frac{n}{K})\lceil \frac{m-i}{K-i} \rceil$ slots for unassigned
  alternatives among the top $i+ \lceil \frac{m-i}{K-i} \rceil$
  positions in the preference orders of unassigned agents, and that
  there are $m-i$ unassigned alternatives. As a result, there must be
  an alternative $a_\ell$ for whom the number of agents that rank him
  or her among the top $i+ \lceil \frac{m-i}{K-i} \rceil$ positions is
  at least:
  \[
       \frac{1}{m-i}\left((n-i\frac{n}{K})\left\lceil \frac{m-i}{K-i} \right\rceil\right) \geq
       \frac{n}{m-i}\left(\frac{K-i}{K}\right)\left(\frac{m-i}{K-i}\right)  
       =\frac{n}{K}.
  \]
%  Clearly, this average must be implemented by at least one
%  alternative.
%
%
%   Thus the total number of unassigned alternatives ranked among 
%   $(n-i\frac{n}{K})\lceil
%   \frac{m-i}{K-i} \rceil$
% Naturally, after the $i$'th iteration the algorithm
%   has assigned $i$ alternatives.
%   After the $i$-th iteration of the algorithm, exactly $i$
%   alternatives are used in the assignment. Hence, for each agent $j$,
%   on the first $i + \lceil \frac{m-i}{K-i} \rceil$ positions of $j$'s
%   preference order there are at least $\lceil \frac{m-i}{K-i} \rceil$
%   unassigned alternatives. For all unassigned agents it gives totally
%   $\lceil \frac{(n - \frac{n}{K}i)\cdot (m-i)}{K-i} \rceil = \lceil
%   \frac{n \cdot (m-i)}{K} \rceil$ positions (here, for simplicity, we
%   use the assumption that $K$ divides $n$). By pigeonhole principle,
%   at least one alternative is present in the first $i + \lceil
%   \frac{m-i}{K-i} \rceil$ positions of at least $\lceil \frac{n}{K}
%   \rceil$ agents. 
%
  In consequence, the $\lceil \frac{n}{K} \rceil$ agents assigned in
  the next step of the algorithm will have the total satisfaction at
  least $\lceil \frac{n}{K} \rceil \cdot (m - i - \lceil
  \frac{m-i}{K-i} \rceil)$. Thus, summing over the $K$ iterations, the
  total satisfaction guaranteed by the assignment $\Phi$ computed by
  Algorithm~A%\ref{alg:greedy} 
  is at least the following value:
  (to derive the fifth line from the fourth one we
   note that $K(H_K-1) - H_K \geq 0$ when $K \geq 3$):
  \begin{align*}
    \ell_{1}^{\alpha_{b}}(\Phi) & \geq \sum_{i = 0}^{K-1} \frac{n}{K}  \cdot \left(m - i - \lceil \frac{m-i}{K-i} \rceil\right) \\
    & \geq \sum_{i = 0}^{K-1} \frac{n}{K}  \cdot \left( m - i - \frac{m-i}{K-i} -1 \right) \\
    & = \sum_{i = 1}^{K} \frac{n}{K} \cdot \left(m - i - \frac{m-1}{K-i+1} + \frac{i-2}{K-i+1} \right) \\
    & = \frac{n}{K}\left( \frac{K(2m-K-1)}{2} -(m-1) H_K + K(H_K-1) - H_K \right) \\
    & \geq \frac{n}{K}\left( \frac{K(2m-K-1)}{2} -(m-1) H_K \right) \\
    & \geq (m-1)n \left( 1 - \frac{K-1}{2(m-1)} - \frac{H_K}{K} \right)
  \end{align*}
  % The fourth equality holds because:
  % \begin{align*}
  %    K(H_K-1)-H_K 
  %     &=  \sum_{i=1}^{K}\left(\frac{K}{i}-1\right) - H_K 
  %     =  \sum_{i=1}^{K}\left(\frac{K}{K-i+1}-1\right) - H_K \\
  %     &= \sum_{i=1}^{K}\frac{i-1}{K-i+1} - H_K 
  %     = \sum_{i=1}^{K}\frac{i-2}{K-i+1}.
  % \end{align*}
  If each agent were assigned to his or her top alternative, the total
  satisfaction would be equal to $(m-1)n$. Thus we get the following
  bound:
\begin{align*}
  \frac{\ell_{1}^{\alpha_{\bordadec}}(\Phi)}{\OPT} \leq 1 - \frac{K-1}{2(m-1)} - \frac{H_K}{K}. %+ \frac{H_K-1}{m-1}
\end{align*}
This completes the proof.
\end{proof}

Note that in the above proof we measure the quality of our assignment
against, a perhaps-impossible, perfect solution, where each agent is
assigned to his or her top alternative.  This means that for
relatively large $m$ and $K$, and small $\frac{K}{m}$ ratio, the
algorithm can achieve a close-to-ideal solution irrespective of the
voters' preference orders. We believe that this is an argument in
favor of using Monroe's system in multiwinner elections.  On the flip
side, to obtain a better approximation ratio, we would have to use a
more involved bound on the quality of the optimal solution. To see
that this is the case, form an instance $I$ of
$\alpha_\bordadec$-\textsc{SU-Monroe} with $n$ agents and $m$
alternatives, where all the agents have the same preference order, and
where we seek to elect $K$ candidates (and where $K$ divides $n$).  It
is easy to see that each solution that assigns the $K$ universally
top-ranked alternatives to the agents is optimal.  Thus the total
dissatisfaction of the agents in the optimal solution is:
\begin{align*}
  \frac{n}{K}\left( (m-1) + \cdots + (m-K) \right) &= \frac{n}{K}
  \left(\frac{K(2m-K-1)}{2}\right) 
   = n(m-1) \left( 1 - \frac{K-1}{2(m-1)} \right).
\end{align*}
By taking large enough $m$ and $K$ (even for a fixed value of
$\frac{m}{K}$), the fraction $1 - \frac{K-1}{2(m-1)}$ can be
arbitrarily close to the approximation ratio of our algorithm (the
reasoning here is somewhat in the spirit of the idea of identifying
maximally robust elections, as studied by Shiryaev, Yu, and
Elkind~\cite{shi-yu-elk:t:robust-winners}).

For small values of $K$, it is possible that the $\frac{H_K}{K}$ part
of our approximation ratio would dominate the $\frac{K-1}{2(m-1)}$
part.  In such cases we can use the result of Betzler et
al.~\cite{fullyProportionalRepr}, who showed that for each fixed
constant $K$, $\alpha_\bordadec$-\textsc{SU-Monroe} can be solved in
polynomial time. Thus, for the finite number of cases where
$\frac{H_K}{K}$ is too large, we can solve the problem optimally using
their algorithm.
%
% Thus, for small values of $K$ for which the
% fraction $\frac{H_K}{K}$ affects the approximation guarantees of
% Algorithm~\ref{alg:greedy} too much, we can use this polynomial-time
% algorithm to find an optimal solution.  
% This means that we can
% essentially disregard the $\frac{H_K}{K}$ part of
% Algorithm~\ref{alg:greedy}'s approximation ratio.  
%
In consequence, the quality of the solution produced by
Algorithm~A 
%\ref{alg:greedy} 
most strongly depends on the ratio
$\frac{K-1}{2(m-1)}$. In most cases we can expect it to be small (for
example, in Polish parliamentary elections $K = 460$ and $m \approx
6000$; in this case the greedy algorithm's approximation ratio is
about $0.96$).

Our algorithm has one more great advantage: Since it focuses on the
top parts of voters' preference orders, it can achieve very good
results even if the voters submit so-called truncated ballots (that
is, if they rank some of their top alternatives only). Below we
present the formal analysis of the algorithm's approximation ratio for
this case. Unfortunately, we did not obtain a closed form formula and,
instead, we present the guaranteed approximation ratio as a sum, in
Proposition~\ref{prop:monTruncated} below. We also present the
relation between the fraction of the top alternatives ranked by each
of the voters and the approximation ratio for few values of $m$ and
$K$ in Figure~\ref{fig:monroe_truncated}.

\begin{proposition}\label{prop:monTruncated}
  Let $P$ be the number of top positions in the agents' preference
  orders that are known by the algorithm. In this case
  Algorithm~A %\ref{alg:greedy} 
  is a polynomial-time
  $r$-approximation algorithm for
  $\alpha_{\bordadec}$-\textsc{SU-Monroe}, where:
  \begin{align*}
    & r = \sum_{i = 0}^{K-1} \frac{1}{n(m-1)} \max(r(i), 0) \\
  \text{and}\\
    & r(i) = \left\{
	\begin{array}{l l}
          \frac{n}{K}(m - i - \frac{m-i}{K-i}) & \quad \text{if $\left(i + \frac{m-i}{K-i}\right) \leq P$,}\\\\
          \frac{n}{K}\frac{(K-i)(m-i)}{4} & \quad \text{if $\left(i + \frac{m-i}{K-i}\right) > P$ and $(2P-m) \geq i \geq (K-2)$,} \\\\
          \frac{n}{K}\frac{(m-P)(K-i)(P-i)}{m-i} & \quad \text{otherwise.}
	\end{array} \right.
    \end{align*}
  \end{proposition}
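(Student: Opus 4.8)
The plan is to mimic, iteration by iteration, the analysis from the proof of Lemma~\ref{lemma:greedy}, but to replace its pigeonhole step — which worked at the ``critical depth'' $i+\lceil\frac{m-i}{K-i}\rceil$ — by a pigeonhole step that only looks at the top $t$ positions for a suitably chosen integer $t\le P$. Concretely: after $i$ iterations of Algorithm~A's outer loop ($0\le i\le K-1$) there are $n\frac{K-i}{K}$ still-unassigned agents and $m-i$ still-unassigned alternatives, and every unassigned agent ranks at least $\max(t-i,0)$ unassigned alternatives among its top $t$ positions, since at most $i$ alternatives have been used. Hence, for any integer $t$ with $i\le t\le P$, the counting argument of Lemma~\ref{lemma:greedy} shows that some unassigned alternative $a_\ell$ is ranked among the top $t$ positions by at least $Y(t):=\frac{n(K-i)(t-i)}{K(m-i)}$ of the unassigned agents. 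One observes that $Y(t)\le\frac nK$ precisely when $t\le i+\frac{m-i}{K-i}$, and that whenever $Y(t)\le\frac nK$ the step contributes at least $Y(t)(m-t)$ to the total satisfaction: the algorithm's chosen alternative $a_{\mathrm{best}}$ scores (using only the positions it actually knows) no less than $a_\ell$, so at least $Y(t)$ of the $\frac nK$ agents it assigns sit at a known position $\le t$ and thus contribute at least $m-t$ each. Summing over $i=0,\dots,K-1$ and dividing by the ideal satisfaction $n(m-1)\ge\OPT$, it then suffices to show that this per-step guarantee is at least $\max(r(i),0)$ — the truncation at $0$ being needed only because the closed form $r(i)$ can be negative when $P<i$, where the trivial bound (satisfaction $\ge0$) applies.

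What remains is to pick, for each $i$, the threshold $t$ that realizes the claimed $r(i)$. Writing $Y(t)(m-t)=\frac{n(K-i)}{K(m-i)}\,(t-i)(m-t)$, the parabola $(t-i)(m-t)$ is maximized at $t^\star=\frac{m+i}{2}$ with value $\frac{(m-i)^2}{4}$. If $i+\frac{m-i}{K-i}\le P$, I take $t=i+\frac{m-i}{K-i}$ (treating ceilings exactly as in Lemma~\ref{lemma:greedy}); then $Y(t)=\frac nK$ and the guarantee is $\frac nK\bigl(m-i-\frac{m-i}{K-i}\bigr)$ — the first branch. If instead $i+\frac{m-i}{K-i}>P$, then every admissible $t$ satisfies $t\le P<i+\frac{m-i}{K-i}$, so $Y(t)\le\frac nK$, and I may maximize $(t-i)(m-t)$ over $t\in\{i,\dots,P\}$. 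The elementary fact that makes the case split work is that $t^\star\le i+\frac{m-i}{K-i}$ is equivalent to $K-i\le 2$, i.e.\ to $i\ge K-2$. So when $i\ge K-2$ and $t^\star\le P$ (equivalently $i\le 2P-m$) I take $t=t^\star$ and obtain $\frac nK\cdot\frac{(K-i)(m-i)}{4}$ — the second branch; in every remaining sub-case one checks that $t^\star>P$ (when $i<K-2$ this is automatic, since $P<i+\frac{m-i}{K-i}<t^\star$), so the parabola is increasing on $\{i,\dots,P\}$, the optimal choice is $t=P$, and the guarantee becomes $\frac nK\cdot\frac{(m-P)(K-i)(P-i)}{m-i}$ — the third branch.

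The only genuinely delicate point is the bookkeeping that turns ``$a_\ell$ is ranked in the top $t$ by at least $Y(t)$ unassigned agents'' into ``the step contributes at least $Y(t)(m-t)$''. One has to distinguish the sub-case in which the number of unassigned agents ranking the chosen alternative within their top $t$ is already at least $\frac nK$ — then all $\frac nK$ assigned agents sit at a position $\le t$, contributing at least $\frac nK(m-t)\ge Y(t)(m-t)$ — from the sub-case in which that number is some $c$ with $Y(t)\le c\le\frac nK$ — then all $c$ of them are among the assigned agents, contributing at least $c(m-t)\ge Y(t)(m-t)$. The rest is the same case bookkeeping as above together with the routine arithmetic (and the same relaxation of ceilings) already carried out in Lemma~\ref{lemma:greedy}. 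I expect that verifying that the three stated conditions on $i$ partition exactly according to where $\max_{i\le t\le P}(t-i)(m-t)$ is attained — in particular that the constraint ``$i\ge K-2$'' in the middle branch is forced by the requirement $Y(t^\star)\le\frac nK$ — is the part of the argument that needs the most care.
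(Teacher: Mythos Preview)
Your proposal is correct and follows essentially the same approach as the paper: both adapt the per-iteration pigeonhole argument of Lemma~\ref{lemma:greedy} to a variable threshold $t\le P$ (the paper calls it $P_x$), obtain the bound $\min\bigl(\frac nK,Y(t)\bigr)(m-t)$ on the satisfaction gained in iteration $i+1$, and then choose $t$ according to the three-way case split. Your presentation via maximizing the parabola $(t-i)(m-t)$ and the equivalence $t^\star\le i+\frac{m-i}{K-i}\Leftrightarrow i\ge K-2$ is in fact a cleaner motivation for the case split than the paper's, which simply posits the two conditions (a) and (b) and verifies them. One small wording issue: the claim ``at least $Y(t)$ of the $\frac nK$ agents it assigns sit at a known position $\le t$'' is not literally what you need and is not quite justified by your sub-case analysis (the number $c$ of agents ranking $a_{\mathrm{best}}$ in their top $t$ need not be $\ge Y(t)$); the cleaner and sufficient statement is simply that the \emph{score} the algorithm computes for $a_{\mathrm{best}}$ is at least that of $a_\ell$, which is at least $Y(t)(m-t)$, and the true satisfaction is at least the computed score.
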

  \begin{proof}
    We use the same approach as in the proof of
    Lemma~\ref{lemma:greedy}, except that we adjust our estimates of
    voters' satisfaction. Consider a situation after some $i$'th
    iteration of the algorithm's outer loop ($i=0$ if we are before the
    first iteration). If $i + \frac{m-i}{K-i}
    \leq P$, then we can use the same lower bound for the satisfaction 
    of the agents assigned in the $(i+1)$'th iteration as in the 
    proof of Lemma~\ref{lemma:greedy}. That is, the agents assigned in the
    $(i+1)$'th iteration will have satisfaction at least 
    $r_1(i) = \frac{n}{K} \cdot (m - i - \frac{m-i}{K-i})$.

    For the case where $i + \frac{m-i}{K-i} > P$, the bound from
    Lemma~\ref{lemma:greedy} does not hold, but we can use a similar
    approach to find a different one. Let $P_x \leq P$ be some
    positive integer. We are interested in the number $x$ of not-yet
    assigned agents who rank some not-yet-selected alternative among
    their top $P_x$ positions (after the $i$'th iteration). Similarly
    as in the proof of Lemma~\ref{lemma:greedy}, using the pigeonhole principle
    we note that:
    \begin{align*}
      x \geq \frac{1}{m-i} \left(n-i\frac{n}{K}\right)(P_x-i) =
      \frac{n}{K} \cdot \frac{(K-i)(P_x-i)}{m-i} \textrm{.}
    \end{align*}
    Thus, the satisfaction of the agents assigned in the $(i+1)$'th
    iteration is at least:
    \begin{equation}\label{equ:sat-bound}
      \min\left(x, \frac{n}{K}\right)(m - P_x) = \frac{n}{K} \cdot
      (m-P_x) \min\left(\frac{(K-i)(P_x-i)}{m-i}, 1\right) \textrm{.}
    \end{equation}
    The case $\frac{(K-i)(P_x-i)}{m-i} \geq 1$ (or, equivalently, $i +
    \frac{m-i}{K-i} \leq P_x$) implies that $i + \frac{m-i}{K-i} \leq
    P$ and for this case we lower-bound agents' satisfaction by $r_1(i)$.
    For the case where $\frac{(K-i)(P_x-i)}{m-i} \leq 1$,
    i.e. where $i + \frac{m-i}{K-i} \geq P_x$, equation \eqref{equ:sat-bound}
    simplifies to:
    \begin{align}\label{equ:sat-bound-2}
      \frac{n}{K} \cdot (m-P_x) \cdot \frac{(K-i)(P_x-i)}{m-i} \textrm{.}
    \end{align}
    We use this estimate for the satisfaction of the agents assigned in 
    the $(i+1)$'th iteration for the cases where
    (a) $i + \frac{m-i}{K-i} \geq \frac{m+i}{2}$ and (b) $\frac{m+i}{2} \leq P$
    (or, equivalently, $(2P-m) \geq i \geq (K-2)$). In this case we estimate \eqref{equ:sat-bound-2}
    as follows:
    % The above estimation gives the best results if $P_x =
    % \frac{m+i}{2}$. Thus, if $i + \frac{m-i}{K-i} \geq \frac{m+i}{2}$
    % and if $\frac{m+i}{2} \leq P$ (or, equivalently, $(2P-m) \geq i \geq
    % (K-2)$), then we can use the following estimation:
    \begin{align*}
      \frac{n}{K} \cdot (m-P_x) \cdot \frac{(K-i)(P_x-i)}{m-i} 
      & \geq \frac{n}{K} \cdot (m-\frac{m+i}{2}) \cdot \frac{(K-i)(\frac{m+i}{2}-i)}{m-i}\\
      & = \frac{n}{K} \cdot
      \frac{(K-i)(m-i)^2}{4(m-i)} = \frac{n}{K} \cdot
      \frac{(K-i)(m-i)}{4}\textrm{.}
    \end{align*}
    For the remaining cases, we set $P_x = P$ and \eqref{equ:sat-bound-2} becomes:
    \begin{align*}
      \frac{n}{K} \cdot \frac{(m-P)(K-i)(P-i)}{m-i} \textrm{.}  
    \end{align*} 
    Naturally, we replace our estimates by $0$ whenever they become negative.

    To complete the proof, it suffice to, as in the proof of Lemma~\ref{lemma:greedy},
    note that $(m-1)n$ is an upper bound on the satisfaction achieved by the
    optimal solution.
\end{proof}

\begin{figure*}[!t!h]
\begin{minipage}[h]{0.48\linewidth}
  \centering
  \includegraphics[width=\textwidth]{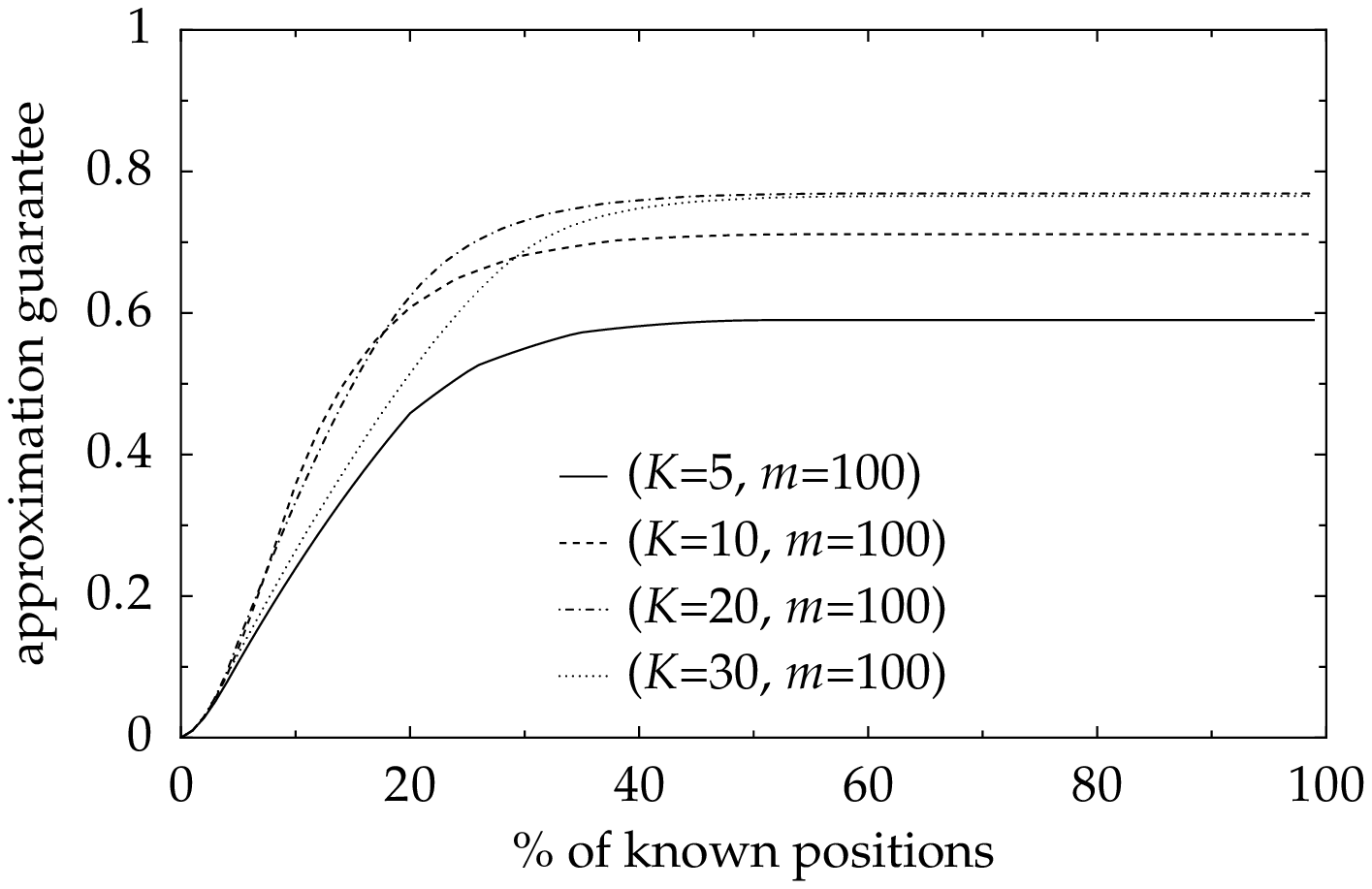}
\end{minipage}
\hspace{0.3cm}
\begin{minipage}[h]{0.48\linewidth}
  \centering
  \includegraphics[width=\textwidth]{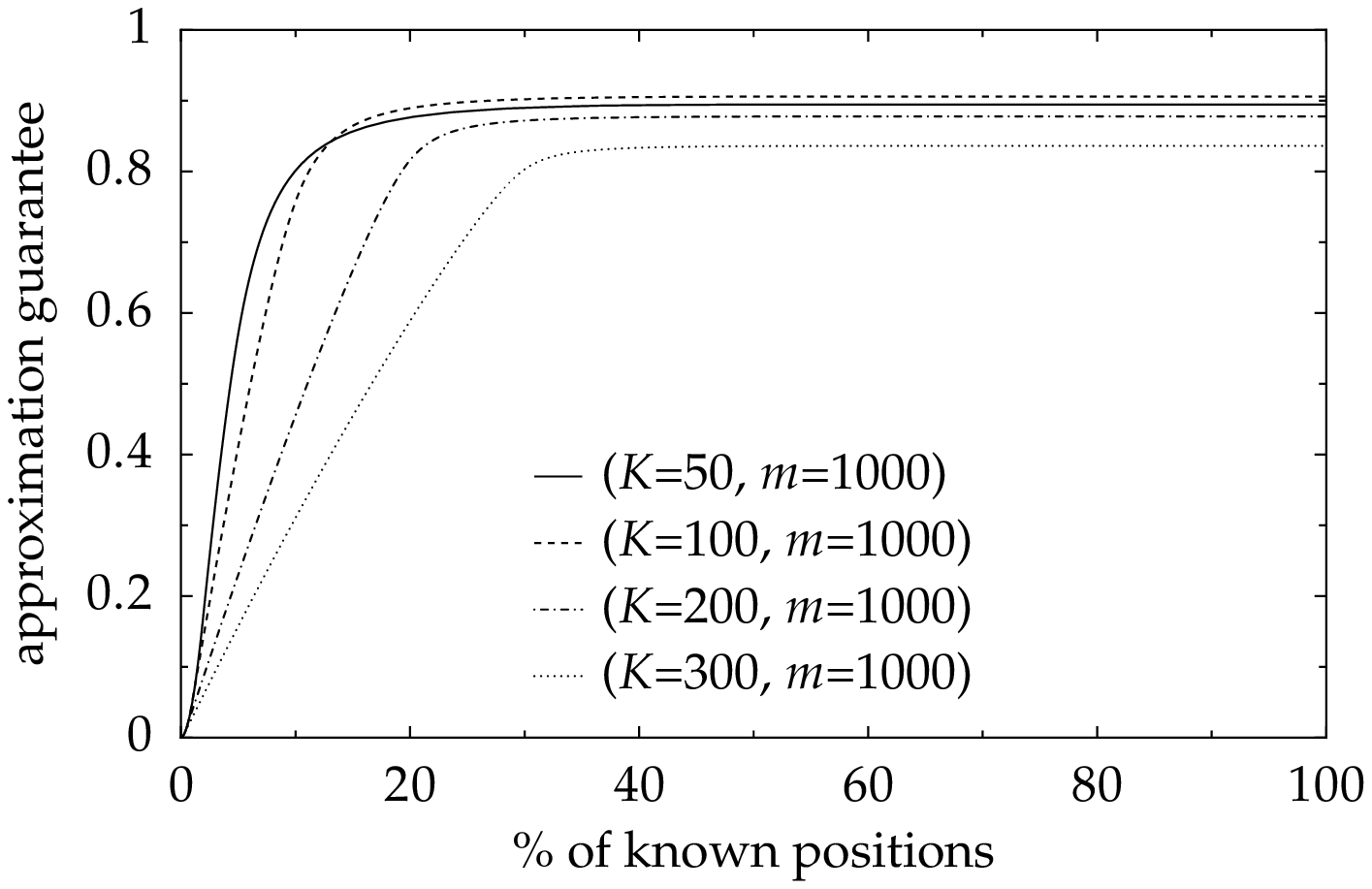}
\end{minipage}
\caption{The relation between the percentage of the known positions and the approximation ratio of Algorithm A for $\alpha_{\bordadec}$-\textsc{SU-Monroe}.}
\label{fig:monroe_truncated}
\end{figure*}

For example, for the case of Polish parliamentary elections ($K=460$
and $m=6000$), to achieve $90\%$ of voters' optimal satisfaction, each
voter would have to rank about $8.7\%$ of the candidates.

Our results show that for most settings there is very little reason to
ask the agents to rank all the alternatives. Using
Proposition~\ref{prop:monTruncated}, election designers can estimate
how many alternatives the agents should rank to obtain a particular
level of satisfaction and, since computing preference orders can be
expensive for the agents, this way can save a large amount of effort.

\subsection{Algorithm B (Monroe)}

There are simple ways in which we can improve the quality of the
assignments produced by Algorithm~A. For example, our Algorithm~B
first runs Algorithm~A and then, using
Proposition~\ref{prop:assignment}, optimally reassigns the
alternatives to the voters. As shown in Section~\ref{sec:experiments},
this very simple trick turns out to noticeably improve the results
of the algorithm in practice (and, of course, the theoretical
approximation guarantees of Algorithm~A carry over to Algorithm~B).

\subsection{Algorithm C (Monroe, CC)}

Algorithm~C is a further heuristic improvement over Algorithm~B.  This
time the idea is that instead of keeping only one partial function
$\Phi$ that is iteratively extended up to the full assignment, we keep
a list of up to $d$ partial assignment functions, where $d$ is a
parameter of the algorithm. At each iteration, for each assignment
function $\Phi$ among the $d$ stored ones and for each alternative $a$
that does not yet have agents assigned to by this $\Phi$, we compute
an optimal extension of this $\Phi$ that assigns agents to $a$. As a
result we obtain possibly more than $d$ (partial) assignment
functions. For the next iteration we keep those $d$ of them that give
highest satisfaction.

We provide pseudocode for Algorithm~C in
Figure~\ref{alg:greedyImpr}. If we take $d=1$, we obtain
Algorithm~B. If we also disregard the last two lines prior to
returning the solution, we obtain Algorithm~A.

Algorithm~C can also be adapted for the Chamberlin--Courant rule.
The only difference concerns creating the assignment functions:
we replace the contents of the first foreach loop with the 
following code:
\begin{flushleft}
  \small
  \hspace{37pt} \ForEach{$a_{i} \in A \setminus \Phi^{\rightarrow}$}{
  \hspace{37pt}   $\Phi' \leftarrow \Phi$\\
  \hspace{37pt}   \ForEach{$j \in N$}{    
  \hspace{37pt}     \If{agent $j$ prefers $a_i$ to $\Phi'(j)$}
    {\hspace{37pt} $\Phi'(j) \leftarrow a_i$}
    }
  \hspace{37pt} $newPar$.push($\Phi'$) \\
  }
\end{flushleft}

\noindent
Note that, for the case of the Chamberlin--Courant rule, Algorithm~C can
also be seen as a generalization of Algorithm GM that we will discuss
later in Section~\ref{alg:gm}.

\begin{figure}[t]
\begin{algorithm}[H]
   \small
   \SetAlCapFnt{\small}
   \KwNotation{We use the same notation as in Algorithm~A;\\
          $\hspace{31pt}$ $Par$ $\leftarrow$ a list of partial representation functions}

   $Par = []$ \\
   $Par$.push($\{\}$) \\
   
   \For{$i\leftarrow 1$ \KwTo $K$}{
      $newPar = []$ \\
      \For{$\Phi \in Par$}{
          $bests \leftarrow \{\}$ \\
          \ForEach{$a_{i} \in A \setminus \Phi^{\rightarrow}$}{ 
              $agents \leftarrow$ sort $N \setminus \Phi^{\leftarrow}$  (agent $j$ precedes agent $k$ implies that $pos_{j}(a_{i}) \leq pos_{k}(a_{i})$) \\
%so that $j \prec k$ in $agents$ \\
%                               $\hspace{37pt}$ $\implies$ $pos_{j}(a_{i}) \leq pos_{k}(a_{i})$ \\
              $bests[a_{i}] \leftarrow$ chose first $\lceil \frac{N}{K} \rceil$ elements of $agents$ \\
              $\Phi' \leftarrow \Phi$ \\
              \ForEach{$j \in bests[a_{i}]$}{
                 $\Phi'[j] \leftarrow a_{i}$ \\
              }
              $newPar$.push($\Phi'$) \\
          }
          sort $newPar$ according to descending order of the total satisfaction of the assigned agents  \\
          $Par \leftarrow$ chose first $d$ elements of $newPar$ \\
      }
   }
   \For{$\Phi \in Par$}{
      $\Phi \leftarrow$ compute the optimal representative function using an algorithm of Betzler et al.~\cite{fullyProportionalRepr} for the set of winners $\Phi^{\rightarrow}$
   }
   \Return{the best representative function from $Par$}
\end{algorithm}
\caption{The pseudocode for Algorithm~C.}\label{alg:greedyImpr}
\end{figure}

\subsection{Algorithm R (Monroe, CC)}

Algorithms~A,~B~and~C achieve very high approximation ratios for the
cases where $K$ is small relative to $m$. For the remaining cases,
where $K$ and $m$ are comparable, we can use a sampling-based
randomized algorithm (denoted as Algorithm~R) described below.  We
focus on the case of Monroe and we will briefly mention the case of CC
at the end.

The idea of this algorithm is to randomly pick $K$ alternatives and
match them optimally to the agents, using
Proposition~\ref{prop:assignment}. Naturally, such an algorithm might
be very unlucky and pick $K$ alternatives that all of the agents rank
low. Yet, if $K$ is comparable to $m$ then it is likely that such a
random sample would include a large chunk of some optimal solution.
In the lemma below, we asses the expected satisfaction obtained with a
single sampling step (relative to the satisfaction given by the
optimal solution) and the probability that a single sampling step
gives satisfaction close to the expected one.  Naturally, in practice
one should try several sampling steps and pick the one with the
highest satisfaction.

% The idea of this randomized algorithm is to simply pick $K$
% alternatives at random and match them optimally to the agents. The
% larger $K$ is, the larger the chance that among the randomly chosen
% candidates there is a large fraction of candidates that are part of
% the optimal solution. With sufficiently many random samples, the
% probability of finding a high-quality solution is very high.

\begin{lemma}\label{lemma:randMonroe}
  A single sampling step of the randomized algorithm for
  $\alpha_\bordadec$-\textsc{SU-Monroe} achieves expected
  approximation ratio of $\frac{1}{2}(1 + \frac{K}{m} - \frac{K^2}{m^2-m} +
  \frac{K^3}{m^3-m^2})$. Let $p_{\epsilon}$ denote the
  probability that the relative deviation between the obtained total
  satisfaction and the expected total satisfaction is higher than
  $\epsilon$. Then for $K \geq 8$ we have $p_{\epsilon} \leq \exp \left(-
    \frac{K\epsilon^2}{128} \right)$.
\end{lemma}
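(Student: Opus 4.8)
The plan is to analyze a single sampling step by computing, for each agent, the expected satisfaction contributed, and then to use a concentration inequality to bound the deviation. Let me organize this.

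**Setting up the expectation computation.** I would fix an optimal assignment $\Phi_{\OPT}$ and its winner set $S_{\OPT}$ (of size $K$). For the sampled set $S$ of $K$ alternatives chosen uniformly at random, the optimal matching $\Phi^S_\alpha$ is at least as good as any particular matching using alternatives in $S$. A natural lower bound on $\ell_1^{\alpha_\bordadec}(\Phi^S_\alpha)$ would be obtained by considering, for each agent $i$, the best alternative in $S$ that agent ranks — but the Monroe capacity constraints complicate "each agent takes their favorite in $S$". So instead I'd compare against a cruder but clean benchmark: for each agent independently, what is the expected Borda score of the best alternative among the sampled $K$? Actually, to respect the capacity constraint of Monroe one must be more careful; I suspect the intended argument compares $S$ to the optimal solution's alternatives in a way that reuses $\Phi_{\OPT}$'s matching structure when $S \cap S_{\OPT}$ is large. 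Alternatively — and I think this is the cleaner route — one shows that the optimal Monroe matching on a random $K$-set has expected value at least $\sum_{i=1}^n \E[\text{score of best-of-}K\text{ for agent }i]$ divided by a small factor, or more likely that the expectation bound comes from a direct position-statistics argument: the probability that agent $i$'s top-$j$ alternative is the best one sampled, summed with weights $m-j$. I would compute $\E[\max_{a \in S}(m - \pos_i(a))]$ using the identity $\E[X] = \sum_{t \geq 1}\Pr[X \geq t]$ and the hypergeometric probability that none of agent $i$'s top $t$ alternatives is sampled, namely $\binom{m-t}{K}/\binom{m}{K}$. Summing $\sum_{t=1}^{m}\binom{m-t}{K}/\binom{m}{K}$ and simplifying (a telescoping/combinatorial-identity computation) should yield exactly $\frac{1}{2}(1 + \frac{K}{m} - \frac{K^2}{m^2-m} + \frac{K^3}{m^3-m^2})$ after dividing by the ideal value $m-1$ — this gives the claimed expected approximation ratio. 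The subtlety of why the Monroe matching actually attains this per-agent "best of $S$" bound in expectation (capacities!) is, I expect, the main gap to fill; one resolution is that by symmetry a uniformly random bijection among feasible assignments achieves the per-agent marginal, or one argues the loss from capacities is absorbed.

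**The concentration bound.** For the tail bound $p_\epsilon \leq \exp(-K\epsilon^2/128)$ when $K \geq 8$, I would express the sampled satisfaction as a function of the random set $S$ and apply a bounded-differences (McDiarmid-type) inequality. Model the sample as drawing $K$ alternatives sequentially without replacement, or equivalently use a permutation model: let $\sigma$ be a uniform random permutation of the $m$ alternatives and $S = \{\sigma(1),\dots,\sigma(K)\}$. The total optimal satisfaction $f(\sigma) = \ell_1^{\alpha_\bordadec}(\Phi^S_\alpha)$ is a function of $\sigma$; swapping two elements of $\sigma$ changes $S$ by at most one alternative in/one out, and replacing a single alternative in the winner set changes the optimal Monroe assignment's value by at most $\frac{n}{K}\cdot(m-1) \le \frac{n}{K}(m-1)$ — actually by at most $O(\frac{n}{K} m)$ since at most $n/K$ agents get reassigned and each loses at most $m-1$ in score, but one also needs that the reshuffle cascade is bounded, which for Monroe it is: removing one alternative frees $n/K$ agents, adding one absorbs $n/K$ agents. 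So the bounded difference per swap is $c \le \frac{2(m-1)n}{K}$ (factor two for the in/out). McDiarmid for sampling without replacement (Hoeffding–Serfling, or the permutation version of McDiarmid) then gives $\Pr[|f - \E f| \geq \delta] \leq 2\exp(-2\delta^2/(K c^2))$. Setting $\delta = \epsilon\, \E f$ with $\E f \approx \frac{1}{2}(m-1)n$ (valid for $K\ge 8$, where the ratio is at least around $9/16$), one gets an exponent of order $-\epsilon^2 K \cdot \frac{(m-1)^2 n^2/4}{K \cdot 4(m-1)^2n^2/K^2} = -\epsilon^2 K /16 \cdot (\text{constants})$; tracking the constants carefully (and using $\E f \ge \frac12 (m-1)n$) should land at $1/128$ after the factor-of-2 losses in the bounded difference and in $\E f$.

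**Main obstacle.** The hardest part is not the concentration step (that is routine bounded-differences) but rigorously pinning down the per-agent expected-satisfaction bound \emph{in the presence of Monroe's hard capacity constraints}: the optimal matching $\Phi^S_\alpha$ need not give each agent their favorite member of $S$, so one cannot naively sum $\E[\text{best-of-}S]$ over agents. I would handle this by exhibiting an explicit (possibly suboptimal but feasible) randomized matching whose per-agent marginal distribution of the assigned alternative's rank stochastically dominates — or at least matches in expectation — the quantity being computed, for instance by a symmetry/exchangeability argument over which $n/K$ agents are allocated to each sampled alternative; then optimality of $\Phi^S_\alpha$ only helps. Getting this domination exactly right, so that the closed-form $\frac{1}{2}(1 + \frac{K}{m} - \frac{K^2}{m^2-m} + \frac{K^3}{m^3-m^2})$ drops out rather than something weaker, is where I expect to spend the most care; it may require the slightly different device of first proving the bound for the Chamberlin–Courant relaxation (where each agent genuinely takes their favorite in $S$) and then arguing the Monroe optimum loses nothing in expectation because a random feasible assignment already achieves the CC marginals.
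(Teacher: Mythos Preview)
Your proposed route for the expectation bound does not lead to the stated formula, and the obstacle you flag (Monroe capacities) is real and not resolved by the symmetry/exchangeability hand-wave you suggest. The per-agent ``best of $K$ sampled'' computation you outline is the Chamberlin--Courant bound (it gives $\frac{mK-1}{(K+1)(m-1)}$, essentially Oren's ratio), not the expression $\frac{1}{2}\bigl(1+\frac{K}{m}-\frac{K^2}{m^2-m}+\frac{K^3}{m^3-m^2}\bigr)$ in the lemma. That specific closed form arises from a different decomposition that you did not hit on: fix an optimal Monroe solution $\Phi_{\opt}$ with winner set $A_{\opt}$, condition on $|B\cap A_{\opt}|=i$, and lower-bound $\ell_1^{\alpha}(\Phi_B)$ by a \emph{feasible} Monroe assignment built in two pieces --- on $B\cap A_{\opt}$ reuse $\Phi_{\opt}$'s matching (contributing $\frac{i}{K}\,\OPT$ in expectation over which $i$ optimal winners are hit), and on $B\setminus A_{\opt}$ assign the remaining $(1-\frac{i}{K})n$ agents uniformly at random among the $K-i$ fresh alternatives (contributing at least $\frac{m-i-1}{2}\cdot(1-\frac{i}{K})n$). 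Both pieces automatically respect the $\frac{n}{K}$ capacities, which is exactly what your approach struggled with. Uncondition using $\sum_i p_i\, i = K^2/m$ and $\sum_i p_i\, i^2 \ge (K^2/m)^2$, together with $\OPT\le (m-1)n$, and the stated expression drops out.

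For the tail bound the paper takes a shorter path than your McDiarmid argument: it writes $\ell_1^{\alpha}(\Phi_B)=\sum_{j=1}^K X_j$ where $X_j$ is the satisfaction contributed by the $j$th sampled alternative, notes $X_j\in[0,\frac{(m-1)n}{K}]$, and applies Hoeffding directly; then $\E[\sum X_j]\ge \OPT/2$ together with $\OPT\ge \frac{(m-1)n}{8}$ for $K\ge 8$ (from the earlier greedy lemma) yields the $\exp(-K\epsilon^2/128)$. Your bounded-differences route would also work and is arguably cleaner about dependence, but it is not what the paper does.
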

\begin{proof}
  Let $N = [n]$ be the set of agents, $A = \{a_1, \ldots, a_m\}$ be
  the set of alternatives, and $V$ be the preference profile of the
  agents.  Let us fix some optimal solution $\Phi_\opt$ and let
  $A_\opt$ be the set of alternatives assigned to the agents in this
  solution.  For each $a_{i} \in A_{\opt}$, we write $\sat(a_{i})$ to
  denote the total satisfaction of the agents assigned to $a_{i}$ in
  $\Phi_\opt$. Naturally, we have $\sum_{a \in A_{\opt}} \sat(a) =
  \OPT$.  In a single sampling step, we choose uniformly at random a
  $K$-element subset $B$ of $A$.  Then, we form a solution $\Phi_B$ by
  matching the alternatives in $B$ optimally to the agents (via
  Proposition~\ref{prop:assignment}).  We write $K_{\opt}$ to denote
  the random variable equal to $\|A_\opt \cap B\|$, the number of
  sampled alternatives that belong to $A_{\opt}$. We define $p_{i} =
  \Pr(K_{\opt} = i)$. For each $j \in \{1, \ldots, K\}$, we write
  $X_j$ to denote the random variable equal to the total satisfaction
  of the agents assigned to the $j$'th alternative from the sample.
  We claim that for each $i$, $0 \leq i \leq K$, it holds that:
  \begin{align*}
    \E\left(\sum_{j=1}^{K}X_{j} \,\Bigg|\, K_\opt = i\right) \geq \frac{i}{K}\OPT + \frac{m-i-1}{2}
    \cdot \left(n - i\frac{n}{K}\right).
  \end{align*}
  Why is this so? Given a sample $B$ that contains $i$ members of
  $A_\opt$, our algorithm's solution is at least as good as a solution
  that matches the alternatives from $B \cap A_\opt$ in the same way
  as $\Phi_\opt$, and the alternatives from $B - A_\opt$ in a random
  manner.  Since $K_\opt = i$ and each $a_j \in A_\opt$ has equal
  probability of being in the sample, it is easy to see that the
  expected value of $\sum_{a_j \in B \cap A_\opt}\sat(a_j)$ is $
  \frac{i}{K}\OPT$.
  %
  % We obtained the equation above in the following way. The expected
  % value $E(\sum_{i=1}^{K}X_{j})$ is at least as high as if we did
  % not
  % allocate the agents to the alternatives optimally, but instead
  % firstly allocated the agents to the $i$ alternatives from
  % $A_{\opt}$
  % in exactly the same way as they are assigned in the optimal
  % solution
  % and than allocated the remaining agents to the remaining $(K-i)$
  % alternatives randomly. Each combination of $i$ alternatives from
  % $A_{\opt}$ is equally probable, which gives the summand
  % $\frac{i}{K}\sum_{a \in A_{\opt}}sat(a) = \frac{i}{K}\OPT$.
  %
  After we allocate the agents from $B \cap A_\opt$, each of the
  remaining, unassigned agents has $m-i$ positions in his or her
  preference order where he ranks the agents from $A - A_\opt$.  For
  each unassigned agents, the average score value associated with
  these positions is at least $\frac{m-i-1}{2}$ (this is so, because
  in the worst case the agent could rank the alternatives from $B \cap
  A_\opt$ in the top $i$ positions). There are $(n - i\frac{n}{K})$
  such not yet assigned agents and so the expected total satisfaction
  from assigning them randomly to the alternatives is $\frac{m-i-1}{2}
  \cdot (n - i\frac{n}{K})$. This proves our bound on the expected
  satisfaction of a solution yielded by optimally matching a random
  sample of $K$ alternatives.

  % Next, assuming that the part of the agents is allocated to the $i$
  % alternatives from $A_{\opt}$, each from not yet assigned agent has
  % $m-i$ positions on which there are alternatives which are not yet
  % assigned. In the worst case these positions are: $i+1, \dots m$
  % ---
  % the average score of such positions is $\frac{m-i}{2}$. Also,
  % there
  % are $(n - i\frac{n}{K})$ such not yet assigned agents, which leads
  % to the summand $\frac{m-i}{2} \cdot (m-i)$.

  Since $\OPT$ is upper bounded by $(m-1)n$ (consider a
  possibly-nonexistent solution where every agent is assigned to his
  or her top preference), we get that:
  \[
    \E\left(\sum_{j=1}^{K}X_{j} | K_\opt = i\right) \geq  \frac{i}{K}\OPT + \frac{m-i-1}{2(m-1)} \cdot \left(1 - \frac{i}{K}\right)\OPT.
  \]
  We can compute the unconditional expected satisfaction of $\Phi_B$
  as follows:
  \begin{align*}
    \E\left(\sum_{j=1}^{K}X_{j}\right) & = \sum_{i=0}^{K}p_{i}\E\left(\sum_{j=1}^{K}X_{j} | K_\opt = i\right) \\
    & \geq \sum_{i=0}^{K}p_{i}\left(\frac{i}{K}\OPT + \frac{m-i-1}{2(m-1)}
    \cdot \left(1 - \frac{i}{K}\right)\OPT\right).
  \end{align*}
  % $
  %   E(\sum_{j=1}^{K}X_{j}) = \sum_{i=0}^{K}p_{i}E(\sum_{j=1}^{K}X_{j} | i) 
  %   \geq \sum_{i=0}^{K}p_{i}(\frac{i}{K}OPT + \frac{m-i}{2m} \cdot
  %   (1 - \frac{i}{K})OPT).
  % $
  Since $\sum_{i=1}^{K}p_{i} \cdot i$ is the expected number of the
  alternatives in $A_{\opt}$, we have that $ \sum_{i=1}^{K}p_{i} \cdot
  i = \frac{K^2}{m}$ (one can think of summing the expected values
  of $K$ indicator random variables; one for each element of $A_\opt$,
  taking the value $1$ if a given alternative is selected and taking
  the value $0$ otherwise).  Further, from the generalized mean
  inequality we obtain $\sum_{i=1}^{K}p_{i} \cdot i^{2} \geq
  \left(\frac{K^2}{m}\right)^{2}.$ In consequence, through routine
  calculation, we get that:
  \begin{align*}
     \E\left(\sum_{j=1}^{K}X_{j}\right) 
     & \geq \left(\frac{K}{m}\OPT + \frac{m^2 - K^2 -m}{2m(m-1)} \cdot \left(1 - \frac{K}{m}\right)\OPT\right) \\
     & = \frac{\OPT}{2}\left(1 + \frac{K}{m} - \frac{K^2}{m^2-m} +
     \frac{K^3}{m^3-m^2}\right).
  \end{align*}

  It remains to assess the probability that the total satisfaction
  obtained through $\Phi_B$ is close to its expected value.  Since
  $X_{j} \in \langle 0, \frac{(m-1)n}{K} \rangle$, from Hoeffding's
  inequality we get:
  \begin{align*}
    p_{\epsilon} & = \Pr\left(\left|\sum_{j=1}^{K}X_{j} - \E(\sum_{j=1}^{K}X_{j})\right| \geq \epsilon \E(\sum_{j=1}^{K}X_{j})\right) \\
    & \leq \exp \left(- \frac{2\epsilon^2 (\E(\sum_{j=1}^{K}X_{j}))^{2}}{K(\frac{(m-1)n}{K})^2}
    \right) = \exp \left(- \frac{K\epsilon^2 (\E(\sum_{j=1}^{K}X_{j}))^{2}}{((m-1)n)^2} \right)
  \end{align*}
  % $   p_{\epsilon} = \Pr(|\sum_{j=1}^{K}X_{j} < E(\sum_{j=1}^{K}X_{j})| - \epsilon E(\sum_{j=1}^{K}X_{j}))
  %     \leq \exp \left(- \frac{2\epsilon^2 (E(\sum_{j=1}^{K}X_{j}))^{2}}{K(\frac{mN}{K})^2}
  %    \right) = \exp \left(- \frac{K\epsilon^2 (E(\sum_{j=1}^{K}X_{j}))^{2}}{(mN)^2} \right)
  % $
  We note that since $\frac{K}{m}-\frac{K^2}{m^2-m} \geq 0$, our
  previous calculations show that $\E(\sum_{j=1}^{K}X_{j}) \geq
  \frac{\OPT}{2}$. Further, for $K \geq 8$, Lemma~\ref{lemma:greedy}
  (and the fact that in its proof we upper-bound $\OPT$ to be $(m-1)n$)
  gives that $\OPT \geq \frac{mn}{8}$. Thus $p_{\epsilon} \leq \exp
  \left(- \frac{K\epsilon^2}{128} \right)$.  This completes the proof.
\end{proof}

In the next theorem we will see that to have a high chance of
obtaining a high quality assignment, we need to repeat the sampling
step many times. Thus, for practical purposes, by Algorithm~R we
mean an algorithm that repreats the sampling process a given number of
times (this parameter is given as input) and returns the best solution
found (the assignment is created using
Proposition~\ref{prop:assignment}).

The threshold for $\frac{K}{m}$, where the sampling step is (in
expectation) better than the greedy algorithm is about 0.57.  Thus, by
combining the two algorithms, we can guarantee an expected
approximation ratio of $0.715 - \epsilon$, for each fixed constant
$\epsilon$.  The pseudo-code of the combination of the two algorithms
(Algorithm~AR) is presented in Figure~\ref{alg:combination}.

\begin{theorem}\label{thm:combMonroe}
  For each fixed $\epsilon$, Algorithm~AR provides
  a $(0.715-\epsilon)$-approximate solution for the problem
  $\alpha_\bordadec$-\textsc{SU-Monroe} with probability
  $\lambda$ in time polynomial with respect to the input instance
  size and $-\log(1-\lambda)$.
\end{theorem}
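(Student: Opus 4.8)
The plan is to let Algorithm~AR run the two building blocks and return the better output: Algorithm~A once, and Algorithm~R for $t$ sampling steps, each assignment being completed optimally via Proposition~\ref{prop:assignment} (this is precisely the procedure of Figure~\ref{alg:combination}). Fix $\epsilon>0$. Let $x_0$ be the unique root in $(0,1)$ of $1-\tfrac{x}{2}=\tfrac12(1+x-x^2+x^3)$, i.e.\ of $x^3-x^2+2x-1=0$ (the cubic has positive derivative everywhere, so the root is unique); a short computation gives $x_0\approx 0.5699$, hence $1-\tfrac{x_0}{2}>0.715$. Choose a constant $c=c(\epsilon)$ large enough that (i) $\tfrac{H_K}{K}<\tfrac{\epsilon}{3}$ for all $K>c$ (possible since $H_K=\Theta(\log K)$), (ii) $c\ge 8$, and (iii) $\tfrac{1}{c-1}<\tfrac{\epsilon}{3}$. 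If $K\le c$, Algorithm~AR instead solves the instance exactly using the algorithm of Betzler et al.~\cite{fullyProportionalRepr}, which runs in polynomial time for every fixed $K$; so from now on assume $K>c$ (and recall $K\le m$, whence $m>c$ as well), and write $x=\tfrac{K}{m}$.

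\emph{Case $x\le x_0$ (Algorithm~A suffices).} By Lemma~\ref{lemma:greedy}, Algorithm~A achieves approximation ratio at least $1-\tfrac{K-1}{2(m-1)}-\tfrac{H_K}{K}$. Since $K\le m$ we have $\tfrac{K-1}{2(m-1)}\le\tfrac{K}{2m}=\tfrac{x}{2}\le\tfrac{x_0}{2}$, so by (i) the ratio is at least $1-\tfrac{x_0}{2}-\tfrac{\epsilon}{3}\ge 0.715-\epsilon$. This bound is deterministic, hence holds with probability $1\ge\lambda$.

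\emph{Case $x>x_0$ (Algorithm~R).} Rewriting the expected one-step approximation ratio of Lemma~\ref{lemma:randMonroe} in terms of $x$ gives $\tfrac12(1+x-\tfrac{x^2-x^3}{1-1/m})$, which falls short of $\tfrac12(1+x-x^2+x^3)$ by $\tfrac{x^2-x^3}{2(m-1)}<\tfrac{1}{m-1}<\tfrac{\epsilon}{3}$ using (iii). The polynomial $1+x-x^2+x^3$ has derivative $1-2x+3x^2=3(x-\tfrac13)^2+\tfrac23>0$, so it is increasing on $[0,1]$, and for $x\ge x_0$ half of it is at least $\tfrac12(1+x_0-x_0^2+x_0^3)=1-\tfrac{x_0}{2}>0.715$. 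Hence a single sampling step has expected ratio at least $0.715-\tfrac{\epsilon}{3}$. Pick $\epsilon'>0$ with $(1-\epsilon')(0.715-\tfrac{\epsilon}{3})\ge 0.715-\epsilon$. Since $K\ge 8$ by (ii), Lemma~\ref{lemma:randMonroe} guarantees that a single sampling step attains total satisfaction at least $(1-\epsilon')$ times its expectation — thus at least $(0.715-\epsilon)\OPT$ — except with probability at most $q:=\exp(-\tfrac{K\epsilon'^2}{128})\le\exp(-\tfrac{c\epsilon'^2}{128})<1$, a constant bounded away from $1$. Running $t$ independent sampling steps and returning the best one, the chance that all fail is at most $q^t$, and taking $t=\lceil\tfrac{-\ln(1-\lambda)}{-\ln q}\rceil=O(-\log(1-\lambda))$ makes this at most $1-\lambda$; so with probability at least $\lambda$ the returned assignment is $(0.715-\epsilon)$-approximate. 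Since each sampling step and each call to Proposition~\ref{prop:assignment} is polynomial, the exact routine is polynomial for $K\le c$, and $t=O(-\log(1-\lambda))$, Algorithm~AR runs in time polynomial in the instance size and in $-\log(1-\lambda)$.

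The only genuine work is the crossover bookkeeping: verifying that the lower bounds of Lemmas~\ref{lemma:greedy} and~\ref{lemma:randMonroe}, viewed as functions of $x=K/m$, really do meet at (just above) $0.715$, and absorbing the three error contributions — the gap between $\tfrac{K-1}{2(m-1)}$ and $\tfrac{x}{2}$, the $\tfrac{H_K}{K}$ tail, and the $O(\tfrac1m)$ discrepancy in the sampling expectation — into a single choice of $c(\epsilon)$, while exploiting monotonicity of $1+x-x^2+x^3$ so that the sampling side is always controlled by its value at the threshold $x_0$.
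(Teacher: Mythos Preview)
Your proof is correct and follows essentially the same approach as the paper: run Algorithm~A and several independent sampling rounds, use the exact algorithm of Betzler et al.\ for bounded $K$, and locate the crossover $x_0\approx 0.57$ of the two guarantees $1-\tfrac{x}{2}$ and $\tfrac12(1+x-x^2+x^3)$, both equal to just above $0.715$ there. The only cosmetic differences are that the paper separates out the small-$m$ case via brute force (you absorb it into $K>c$ together with $K\le m$), and the paper lets the number of sampling rounds scale as $\Theta(-\log(1-\lambda)/(K\epsilon^2))$ whereas you upper-bound $q$ by a constant independent of $K$; neither changes the argument.
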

\begin{proof}
  Let $\epsilon$ be a fixed constant. We are given an instance $I$ of
  $\alpha_\bordadec$-\textsc{SU-Monroe}. If $m \leq 1 +
  \frac{2}{\epsilon}$, we solve $I$ using a brute-force algorithm
  (note that in this case the number of alternatives is at most a
  fixed constant). Similarly, if $\frac{H_K}{K} \geq
  \frac{\epsilon}{2}$ then we use the exact algorithm of Betzler et
  al.~\cite{fullyProportionalRepr} for a fixed value of $K$ (note that
  in this case $K$ is no greater than a certain fixed constant). We do
  the same if $K \leq 8$.

  On the other hand, if neither of the above conditions hold, we try
  both Algorithm~A
  %\ref{alg:greedy} 
  and a number of runs of the
  sampling-based algorithm. It is easy to check through routine
  calculation that if $\frac{H_K}{K} \leq \frac{\epsilon}{2}$ and $m >
  1 + \frac{2}{\epsilon}$ then Algorithm~A
   %\ref{alg:greedy} 
  achieves
  approximation ratio no worse than $(1 -\frac{K}{2m} - \epsilon)$.
  %
  %
  % % \begin{align*}
  % %   \frac{\log K}{K} <
  % %   \frac{-\epsilon}{2w(-\frac{\epsilon}{2})}\cdot \log\left(
  % %     \frac{-2w(-\frac{\epsilon}{2})}{\epsilon} \right) =
  % %   \frac{-\epsilon}{2w(-\frac{\epsilon}{2})}\cdot
  % %   -w(-\frac{\epsilon}{2}) = \frac{\epsilon}{2}
  % % \end{align*}
  % and $m > 1 + \frac{2}{\epsilon}$ leads to $\frac{K-1}{m-1} >
  % \frac{K}{m} - \frac{\epsilon}{2}$
  % % As $(m-1)\epsilon > 2$, we have $(m-1)m\epsilon > 2m - 2K$, which
  % % leads to:
  % % \begin{align*}
  % %   \frac{K-1}{m-1} > \frac{K}{m} - \frac{\epsilon}{2}
  % % \end{align*}
  % (we skip the straightforward calculations) the
  % Algorithm~\ref{alg:greedy} achieves the approximation rate better
  % than $(1 -\frac{K}{2m} - \epsilon)$.
  %
  % 
  We run the sampling-based algorithm $\frac{-512 \log (1 -
    \lambda)}{K\epsilon^2}$ times. The probability that a single run
  fails to find a solution with approximation ratio at least
  $\frac{1}{2}(1 + \frac{K}{m} - \frac{K^2}{m^2-m} +
  \frac{K^3}{m^3-m^2}) - \frac{\epsilon}{2}$ is
  $p_{\frac{\epsilon}{2}} \leq \exp \left(- \frac{K\epsilon^2}{4 \cdot
      128} \right)$. Thus, the probability that at least one run will
  find a solution with at least this approximation ratio is at least:
  \begin{align*}
    1 - p_{\frac{\epsilon}{2}}^{\frac{-512 \log (1 - \lambda)}{K\epsilon^2}} = 1
    - \exp \left(-\frac{K\epsilon^2}{4\cdot128} \cdot \frac{-512 \log
        (1 - \lambda)}{K\epsilon^2} \right) = \lambda.
  \end{align*}
  Since $m \leq 1 + \frac{2}{\epsilon}$, by routine calculation we see
  that the sampling-based algorithm with probability $\lambda$ finds a
  solution with approximation ratio at least $\frac{1}{2}(1 +
  \frac{K}{m} - \frac{K^2}{m^2} + \frac{K^3}{m^3}) - \epsilon$.  By
  solving the equality:
  \begin{align*}
    \frac{1}{2}\left(1 + \frac{K}{m} - \frac{K^2}{m^2} + \frac{K^3}{m^3}\right) =
    1 -\frac{K}{2m}
  \end{align*}
  we can find the value of $\frac{K}{m}$ for which the two algorithms
  give the same approximation ratio.  By substituting $x =
  \frac{K}{m}$ we get equality $1 + x - x^2 + x^3 = 2 - x$.  One can
  calculate that this equality has a single solution within $\langle
  0,1 \rangle$ and that this solution is $x \approx 0.57$. For this
  $x$ both algorithms guarantee approximation ratio of $0.715 -
  \epsilon$. For $x < 0.57$ the deterministic algorithm guarantees a
  better approximation ratio and for $x > 0.57$, the randomized
  algorithm does better.
\end{proof}

\begin{figure}[t]
\begin{algorithm}[H]
   \small
   \SetAlCapFnt{\small}
   \SetKwInOut{Parameters}{Parameters}
   \KwNotation{We use the same notation as in Algorithm~\ref{alg:greedy}; $\w(\cdot)$ denotes Lambert's W-Function.\\}
   \Parameters{$\lambda$ $\leftarrow$ required probability of achieving the approximation ratio equal $0.715 - \epsilon$}
%;
%\\
%     $\epsilon$ $\leftarrow$ required approximation quality (the algorithm achieves ratio $0.715-\epsilon$}
   \If{$\frac{H_K}{K} \geq
  \frac{\epsilon}{2}$ 
  or $K \leq 8$}{compute the optimal solution using an algorithm of Betzler et al.~\cite{fullyProportionalRepr} and return.}
   \If{$m \leq 1 + \frac{2}{\epsilon}$}{compute the optimal solution using a simple brute force algorithm and return.} 
   $\Phi_1 \leftarrow$ solution returned by Algorithm~A\\ %\ref{alg:greedy} \\
   $\Phi_2 \leftarrow$ run the sampling-based algorithm $\frac{-512 \log (1 - \lambda)}{K\epsilon^2}$ times; select the assignment of the best quality \\
   return the better assignment among $\Phi_1$ and $\Phi_2$
\end{algorithm}
\caption{Algorithm~AR---combination of Algorithms A and R.}\label{alg:combination}
\end{figure}

Let us now consider the case of CC. It is just as natural to try a
sampling-based approach for solving $\alpha_\bordadec$-\textsc{SU-CC},
as we did for the Monroe variant. Indeed, as recently (and
independently) observed by Oren~\cite{ore:p:cc}, this leads to a
randomized algorithm with expected approximation ratio of $(1 -
\frac{1}{K+1})(1+\frac{1}{m})$. However, since we will later see an
effective, deterministic, polynomial-time approximation scheme for
$\alpha_\bordadec$-\textsc{SU-CC}, there is little reason to explore
the sampling based approach.

\subsection{Algorithm GM (Monroe, CC)}\label{alg:gm}

Algorithm~GM (greedy marginal improvement) was introduced by Lu and
Boutilier for the case of the Chamberlin--Courant rule.  Here we
generalize it to apply to Monroe's rule as well, and we show that it
is a $1-\frac{1}{e}$ approximation algorithm for
$\alpha$-\textsc{SU-Monroe}.  We point out that this approximation
result for Monroe rule applies to all non-decreasing PSFs
$\alpha$. For the Monroe rule, the algorithm can be viewed as an
extension of Algorithm~B.

The algorithm proceeds as
follows. We start with an emtpy set $S$.  Then we execute $K$
iterations. In each iteration we find an alternative $a$ that is not
assigned to agents yet, and that maximizes the value $\Phi^{S \cup
  \{a\}}_{\alpha}$. (A certain disadvantage of this algorithm for the case of
Monroe is that it requires a large number of computations of
$\Phi^S_{\alpha}$; since in Monroe's rule each alternative can be assigned at most $\frac{n}{K}$ agents in the
partial assignment $\Phi^S_{\alpha}$, computation of $\Phi^S_{\alpha}$ is a slow process based on min-cost/max-flow
algorithm.) We provide the pseudocode for Algorithm~GM in
Figure~\ref{alg:greedyOptImpr}.

\begin{figure}[t]
\begin{algorithm}[H]
  \small \SetAlCapFnt{\small}
  \KwNotation{$\Phi^{S}_{\alpha}$---the partial assignement that assigns a single alternative to at most $\lceil \frac{n}{K} \rceil$ agents, that assigns to the agents only the alternatives from $S$, and that maximizes the utilitarian satisfaction $\ell_{1}^{\alpha}(\Phi^S_{\alpha})$.\\}
  $S \leftarrow \emptyset$ \\
  \For{$i\leftarrow 1$ \KwTo $K$}{
    $a \leftarrow \mathrm{argmax}_{a \in A \setminus S} \ell_1^{\alpha}(\Phi^{S \cup \{a\}}_{\alpha})$ \\
    $S \leftarrow S \cup \{a\}$ \\
  } \Return{$\Phi^{S}_{\alpha}$}
\end{algorithm}
\caption{Pseudocode for Algorithm~GM.}\label{alg:greedyOptImpr}
\end{figure}

\begin{theorem}\label{thm:gmMonroe}
  For any non-decreasing positional scoring function $\alpha$
  Algorithm~GM is an $(1 - \frac{1}{e})$-approximation algorithm for $\alpha$-\textsc{SU-Monroe}.
\end{theorem}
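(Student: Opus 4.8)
The plan is to recognize Algorithm~GM as exactly the classical greedy algorithm for maximizing a monotone submodular set function subject to a cardinality constraint, and then to invoke the Nemhauser--Wolsey--Fisher guarantee. (For Chamberlin--Courant this recovers Lu and Boutilier's analysis; the new content is the Monroe case.)

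First I would introduce the set function $f \colon 2^A \to \naturals$ defined by letting $f(S)$ be the maximum value of $\ell_1^\alpha(\Phi)$ over all (possibly partial) assignments $\Phi$ that use only alternatives from $S$ and assign at most $\lceil \tfrac{n}{K}\rceil$ agents to each alternative; for $\|S\|\le K$ this is exactly $\ell_1^\alpha(\Phi^S_\alpha)$, the quantity Proposition~\ref{prop:assignment} computes in polynomial time, and it is precisely the objective that Algorithm~GM increases. The function $f$ is clearly monotone, and since $K$ alternatives each of capacity $\lceil\tfrac{n}{K}\rceil$ have enough total capacity for all $n$ agents, $\max_{\|S\|=K} f(S) = \max_{\|S\|\le K} f(S)$ equals the optimum $\OPT$ of $\alpha$-\textsc{SU-Monroe}. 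Algorithm~GM starts from $S=\emptyset$ and in each of its $K$ iterations adds the alternative with the largest marginal increase of $f$, returning the assignment $\Phi^S_\alpha$ (of value $f(S)$) for the final $S$ with $\|S\|=K$; thus it is literally the greedy procedure for $\max_{\|S\|\le K} f(S)$.

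The crux is the \emph{Key Lemma}: $f$ is nonnegative, monotone, and submodular. Nonnegativity and monotonicity are immediate. For the Chamberlin--Courant capacities submodularity is easy, since there $f(S) = \sum_{i=1}^{n}\max_{b\in S}\alpha(\pos_i(b))$ is a sum of monotone submodular coverage-type functions. The Monroe case is where the work lies: $f(S)$ is the optimal value of a maximum-weight assignment of agents to the alternatives of $S$ under the uniform capacity bound $\lceil\tfrac{n}{K}\rceil$, i.e., the value of a maximum-weight capacitated bipartite matching with $S$ as the available side, and monotonicity and submodularity of such a value as a function of the available side is a classical combinatorial fact. I would prove $f(S\cup\{a\}) - f(S) \ge f(T\cup\{a\}) - f(T)$ for $S\subseteq T$, $a\notin T$, by a standard alternating-path exchange argument: take optimal assignments witnessing $f(S)$ and $f(T\cup\{a\})$, decompose the union of the two (as a multigraph on agents and alternatives) into paths and cycles alternating between them, and reroute agents along these paths so as to obtain a feasible assignment that avoids $a$ and uses only alternatives of $T$ together with a feasible assignment using $S\cup\{a\}$, whose weights sum to at least $f(S)+f(T\cup\{a\})$; rearranging yields the submodularity inequality. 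Carrying out this exchange while respecting the capacity bounds is the main obstacle.

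Given the Key Lemma, the Nemhauser--Wolsey--Fisher theorem gives that the greedy solution $S$ satisfies $f(S) \ge \bigl(1 - (1 - \tfrac{1}{K})^K\bigr)\OPT \ge (1 - \tfrac{1}{e})\OPT$ (and since $f(\emptyset)\ge 0$ the bound holds regardless of whether $f(\emptyset)=0$; in the intended satisfaction setting $f(\emptyset)=0$ anyway). Since Algorithm~GM outputs an assignment of value $f(S)$, it is a $(1-\tfrac{1}{e})$-approximation algorithm. Finally, it runs in polynomial time: there are $K$ iterations, each evaluating $f(S\cup\{a\})$ for at most $m$ alternatives $a$, and each such evaluation is one polynomial-time call to the assignment procedure of Proposition~\ref{prop:assignment}.
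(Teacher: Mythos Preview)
Your high-level strategy is exactly the paper's: show that the function $f(S)=\ell_1^\alpha(\Phi^S_\alpha)$ is monotone and submodular, and then invoke the Nemhauser--Wolsey--Fisher bound for greedy maximization under a cardinality constraint. The only substantive difference is in how submodularity is established. You propose an alternating-path exchange argument on the two optimal capacitated matchings witnessing $f(S)$ and $f(T\cup\{a\})$; this is a legitimate route (after splitting each alternative into $\lceil n/K\rceil$ copies so that the union of the two assignments really decomposes into vertex-disjoint paths and cycles), and it ties the result to the classical fact that the optimum of a one-sided capacitated bipartite matching is submodular in the available side. The paper instead generalizes from sets $S$ to capacity vectors $s\colon A\to\naturals$ and proves the stronger inequality $z_N(s\cup\{a\})-z_N(s)\ge z_N(t\cup\{a\})-z_N(t)$ for $s\le t$ by induction on the agent set $N$: one removes an agent $i$ with $\Phi^{t\cup\{a\}}_\alpha(i)=a$ and chains three inequalities (the inductive hypothesis on $N\setminus\{i\}$, optimality of $\Phi^t_\alpha$, and a trivial bound on $z_N(s\cup\{a\})$). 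Your exchange argument is more conceptual and connects to well-known matching theory, but as you note the capacity bookkeeping is where the work hides; the paper's inductive proof is elementary and self-contained, at the cost of introducing the auxiliary capacity-function formulation. Either way the conclusion, running-time analysis, and application of the $(1-1/e)$ bound are identical.
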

\begin{proof}
  The proof follows by applying the powerful result of Nemhauser et
  al.~\cite{submodular}, which says that greedy algorithms achieve
  $1-\frac{1}{e}$ approximation ratio when used to optimize
  nondecreasing submodular functions (we explain these notions
  formally below). The main challenge in the proof is to define a
  function that, on one hand, satisfies the conditions of Nemhauser et
  al.'s result, and, on the other, models solutions for
  $\alpha$-\textsc{SU-Monroe}.

  Let $A$ be a set of alternatives, $N = [n]$ be a set of agents with
  preferences over $A$, $\alpha$ be an $\|A\|$-candidate DPSF, and $K
  \leq \|A\|$ be the number of representatives that we want to elect.
  We consider function $z: 2^{A} \rightarrow \naturals$ defined, for
  each set $S$, $S \subseteq A$ and $\|S\| \leq K$, as $z(S) =
  \ell_{1}^{\alpha}(\Phi_{\alpha}^{S})$.  Clearly, $z(S)$ is
  nondecreasing (that is, for each two sets $A$ and $B$, if $A
  \subseteq B$ and $\|B\| \leq K$ then $z(A) \leq z(B)$. 
  Since $\mathrm{argmax}_{S \subset A, \|S\| = K} z(S)$ is the set of
  winners under $\alpha$-Monroe and since Algorithm~GM builds the
  solution iteratively by greedily extending initially empty set $S$
  so that each iteration increases the value of $z(S)$ maximally, if
  $z$ were submodular then by the results of Nemhauser et
  al.~\cite{submodular} we would get that Algorithm~GM is a
  $(1-\frac{1}{e})$-approximation algorithm. Thus, our goal is to show
  that $z$ is submodular.

  Formally, our goal is to show that for each two sets $S$ and $T$, $S
  \subset T$, and each alternative $a \notin T$ it holds that $z(S
  \cup \{a\}) - z(S) \geq z(T \cup \{a\}) - z(T)$ (this is the formal
  definition of submodularity). First, we introduce a notion that
  generalizes the notion of a partial set of winners $S$. Let $s: A
  \rightarrow \naturals$ denote a function that assigns a capacity to
  each alternative (i.e., $s$ gives a bound on the number of agents
  that a given alternative can represent). Intuitively, each set $S$,
  $S \subseteq A$, corresponds to the capacity function that assigns
  $\lceil \frac{n}{k} \rceil$ to each alternative $a \in S$ and 0 to
  each $a \notin S$.  Given a capacity function $s$, we define a
  partial solution $\Phi_{\alpha}^{s}$ to be one that maximizes the
  total satisfaction of the agents and that satisfies the new capacity
  constraints: $\forall_{a \in S} \|(\Phi_{\alpha}^{s})^{-1}(a)\| \leq
  s(a)$. To simplify notation, we write $s \cup \{a\}$ to denote the
  function such that $(s \cup \{a\})(a) = s(a) + 1$ and $\forall_{a'
    \in S \setminus\{a\}} (s \cup \{a\})(a') = s(a')$. (Analogously,
  we interpret $s \setminus \{a\}$ as subtracting one from the
  capacity for $a$; provided it is nonzero.)  Also, by $s \leq t$ we
  mean that $\forall_{a \in A} s(a) \leq t(a)$. We extend our function
  $z$ to allow us to consider a subset of the agents only.  For each
  subset $N'$ of the agents and each capacity function $s$, we define
  $z_{N'}(s)$ to be the satisfaction of the agents in $N'$ obtained
  under $\Phi_{\alpha}^{s}$. We will now prove a stronger variant of
  submodularity for our extended $z$. That is, we will show that for
  each two capacity functions $s$ and $t$ it holds that:
  \begin{equation}\label{eq:submodularity}
    s \leq t \Rightarrow z_{N}(s \cup \{a\}) -  z_{N}(s) \geq z_{N}(t \cup \{a\}) -  z_{N}(t).
  \end{equation}
  Our proof is by induction on $N$.  Clearly,
  Equation~\eqref{eq:submodularity} holds for $N' = \emptyset$. Now,
  assuming that Equation~\eqref{eq:submodularity} holds for every $N'
  \subset N$ we will prove its correctness for $N$.  Let $i$ denote an
  agent such that $\Phi_{\alpha}^{t \cup \{a\}}(i) = a$ (if there is
  no such agent then clearly the equation holds). Let $a_{s} =
  \Phi_{\alpha}^{s}(i)$ and $a_{t} = \Phi_{\alpha}^{t}(i)$. We have:
  \begin{align*}
    z_{N}(t \cup \{a\}) -  z_{N}(t) =   \alpha(\pos_i(a)) +  z_{N \setminus \{i\}}(t) - \alpha(\pos_i(a_{t})) - z_{N \setminus \{i\}}(t \setminus \{a_{t}\}). 
  \end{align*}
  We also have:
  \begin{align*}
    z_{N}(s \cup \{a\}) -  z_{N}(s) \geq \alpha(\pos_i(a)) + z_{N \setminus \{i\}}(s) - \alpha(\pos_i(a_{s})) - z_{N \setminus \{i\}}(s \setminus \{a_{s}\}).
  \end{align*}
  Since $\Phi_{\alpha}^{t}$ describes an optimal representation function under
  the capacity restrictions $t$, we have that:
  \begin{align*}
    \alpha(\pos_i(a_t)) + z_{N \setminus \{i\}}(t \setminus a_{t})
    \geq \alpha(\pos_i(a_s)) + z_{N \setminus \{i\}}(t \setminus \{a_{s}\}).
  \end{align*}
  Finally, from the inductive hypothesis for $N' = N \setminus \{i\}$ we have:
  \begin{align*}
    z_{N \setminus \{i\}}(s) - z_{N \setminus \{i\}}(s \setminus \{a_{s}\}) \geq
    z_{N \setminus \{i\}}(t) - z_{N \setminus \{i\}}(t \setminus \{a_{s}\}).
  \end{align*}
  By combining these inequalities we get:
  \begin{align*}
    z_{N}(s \cup \{a\}) -  z_{N}(s) & \geq \alpha(\pos_i(a)) + z_{N \setminus \{i\}}(s) - (\alpha(\pos_i(a_{s})) + z_{N \setminus \{i\}}(s \setminus \{a_{s}\})) \\
    & \geq \alpha(\pos_i(a)) - \alpha(\pos_i(a_{s})) +  z_{N \setminus \{i\}}(t) - z_{N \setminus \{i\}}(t \setminus \{a_{s}\}) \\
    & \geq \alpha(\pos_i(a)) + z_{N \setminus \{i\}}(t) - \alpha(\pos_i(a_t)) -  z_{N \setminus \{i\}}(t \setminus \{a_{t}\}) \\
    & = z_{N}(t \cup \{a\}) - z_{N}(t).
  \end{align*}
  This completes the proof.
\end{proof}

Formally speaking, Algorithm~GM is never worse than Algorithm~A. For
Borda satisfaction function, it inherits the approximation guarantees
from Algorithm~A, and for other cases Theorem~\ref{thm:gmMonroe}
guarantees approximation ratio $1-\frac{1}{e}$ (we do not know of any
guarantees for Algorithm~A for these cases). The comparison with
Algorithms~B and~C is not nearly as easy. Algorithm~GM is still likely
better than them for satisfaction functions significantly different
from Borda's, but for the Borda case our experiments show that
Algorithm~GM is much slower than Algorithms~B and~C and obtains almost
the same or slightly worse results (see
Section~\ref{sec:experiments}).

% \subsection{Chamberlin and Courant's System}\label{sec:CCAlgorithm}

% Let us now move on to the Chamberlin and Courant's system.  Since
% Chamberlin and Courant's system places fewer restrictions on the
% solution assignment than the Monroe's system, our Algorithms A, B, C,
% R, and GM for $\alpha_\bordadec$-\textsc{SU-Monroe} can also be used
% for $\alpha_\bordadec$-\textsc{SU-CC}.  The approximation results
% presented in Section~\ref{sec:monroe-approx} carry over to the case of
% Chamberlin and Courant's system (In the analysis of Algorithm~A we
% used as a lower bound the, perhaps impossible, assignment in which
% each agent is assigned his or her top preferred alternative. In the
% analysis of Algorithm~R we did not use any assumption specific for the
% Monroe's rule either).

% We note that Algorithm~GM  for
% Chamberlin and Courant's system was first proposed by Lu and Boutilier~\cite{budgetSocialChoice}.
% Algorithms~C~and~R improve upon their results.
% However, it turns out that the
% additional freedom of Chamberlin and Courant's system allows us to go
% even further, and to design a polynomial-time approximation scheme for
% $\alpha_\bordadec$-\textsc{SU-CC}, described in the next subsection.

\subsection{Algorithm P (CC)}

The idea of our algorithm (presented in Figure~\ref{alg:greedy2}) is
to compute a certain value $x$ and to greedily compute an assignment
that (approximately) maximizes the number of agents assigned to one of
their top-$x$ alternatives.\footnote{This is very similar to the so-called
  MaxCover problem. Skowron and Faliszewski~\cite{sko-fal:t:max-cover}
  have discussed the connection of MaxCover to the winner determination problem 
  under the Chamberlin--Courant voting system (for approval-based
  satisfaction functions) and provided a number of FPT approximation
  schemes for it.}  If after this process some agent has no
alternative assigned, we assign him or her to his or her most
preferred alternative from those already picked.  Somewhat
surprisingly, it turns out that this greedy strategy achieves
high-quality results. (Recall that for nonnegative real numbers,
Lambert's W-function, $\w(x)$, is defined to be the solution of the
equation $x = \w(x)e^{\w(x)}$.)

% Recall that for nonnegative real numbers, Lambert's W-function,
% $\w(x)$, is defined to be the solution of the equality $x =
% \w(x)e^{\w(x)}$.  Note that $\w(K) = o(\ln K)$.

\begin{figure}[t]
\begin{algorithm}[H]
   \small
   \SetAlCapFnt{\small}
   \KwNotation{We use the same notation as in Algorithm~C;
   $\hspace{21pt$} $\mathrm{num\_pos}_x(a) \leftarrow \|\{i \in [n] \setminus \Phi^{\leftarrow} : pos_i(a) \leq x \}\|$ (the number of not-yet assigned agents that rank alternative $a$ in one of their first $x$ positions)}
   $\Phi = \{\}$ \\
   $x = \lceil \frac{m\w(K)}{K} \rceil$ \\
   \For{$i\leftarrow 1$ \KwTo $K$}{
      % $a_{i} \leftarrow$ select such not-yet-assigned alternative that is for \\
      %                   $\hspace{21pt}$ most of the not-yet-assigned agents in the first $x$ \\
      %                   $\hspace{21pt}$ places of their preference list \\
                        $a_{i} \leftarrow \mathrm{argmax}_{a \in A \setminus \Phi^{\rightarrow}} \mathrm{num\_pos}_x(a)$

      \ForEach{$j \in [n] \setminus \Phi^{\leftarrow}$}{
         \If{$pos_j(a_{i}) < x$}{
             $\Phi[j] \leftarrow a_{i}$ \\
         }
      }
   }
   \ForEach{$j \in A \setminus \Phi^{\leftarrow}$}{
       $a \leftarrow$ such server from $\Phi^{\rightarrow}$ that $\forall_{a' \in \Phi^{\rightarrow}} pos_{j}(a) \leq pos_{j}(a')$ \\
       $\Phi[j] \leftarrow a$ \\
   }
\end{algorithm}
\caption{The algorithm for $\alpha_\bordadec$-\textsc{SU-CC} (Algorithm~P).}\label{alg:greedy2}
\end{figure}

\begin{lemma}\label{lemma:greedyCC}
  Algorithm~P
  %\ref{alg:greedy2} 
  is a polynomial-time $(1 -
  \frac{2\w(K)}{K})$-approximation algorithm for
  $\alpha_\bordadec$-\textsc{SU-CC}.
\end{lemma}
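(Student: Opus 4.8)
The plan is to follow the template of Lemma~\ref{lemma:greedy}, but to treat Algorithm~P as a \textsc{MaxCover}-style greedy routine followed by one mop-up step. Fix an instance, let $x = \lceil \tfrac{m\w(K)}{K}\rceil$ be the threshold the algorithm computes, and say an agent is \emph{covered} once the main loop assigns to it an alternative it ranks among its top $x$ positions. For $i = 0,\dots,K$ let $u_i$ be the number of agents still uncovered after iteration $i$, so $u_0 = n$. The conceptual heart of the proof is a geometric-decay bound on $u_i$; the rest is bookkeeping.

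First I would prove $u_{i+1} \le u_i\,(1 - \tfrac{x}{m})$. The key point is that if an agent is still uncovered after iteration $i$, then every one of the $x$ alternatives it ranks in its top $x$ positions is still unpicked --- otherwise it would have been assigned to such an alternative when that alternative was chosen. Hence the number of (uncovered agent, alternative-in-its-top-$x$) incidences equals $x u_i$ and is spread over at most $m$ unpicked alternatives, so by the pigeonhole principle some unpicked alternative lies in the top $x$ positions of at least $\tfrac{x u_i}{m}$ uncovered agents, and the greedy choice does at least as well. Iterating, $u_K \le n\,(1-\tfrac{x}{m})^K \le n\,e^{-Kx/m} \le n\,e^{-\w(K)}$, using $\tfrac{x}{m} \ge \tfrac{\w(K)}{K}$; and since $\w(K)e^{\w(K)} = K$ we have $e^{-\w(K)} = \tfrac{\w(K)}{K}$, so $u_K \le \tfrac{\w(K)}{K}\,n$. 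Thus at least $n\bigl(1 - \tfrac{\w(K)}{K}\bigr)$ agents end up covered --- which is exactly why $x$ is chosen this way.

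The remainder is routine. A covered agent is assigned (and stays assigned) to an alternative at position $\le x$, hence contributes Borda satisfaction $\ge m - x$, while every other agent contributes $\ge 0$; therefore $\ell_1^{\alpha_{\bordadec}}(\Phi) \ge n\bigl(1 - \tfrac{\w(K)}{K}\bigr)(m - x)$. Bounding $\OPT \le (m-1)n$ as in Lemma~\ref{lemma:greedy} and using $x \le \tfrac{m\w(K)}{K} + 1$, it remains to check $\tfrac{(1-\w(K)/K)(m-x)}{m-1} \ge 1 - \tfrac{2\w(K)}{K}$. Writing $t = \w(K)/K$, the estimate $m-x \ge m(1-t)-1$ reduces this inequality to $t(mt-1) \ge 0$, i.e.\ to $\tfrac{m\w(K)}{K} \ge 1$.

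The step I expect to be the real nuisance is therefore the edge regime $\tfrac{m\w(K)}{K} < 1$, where the reduction above fails because of the ceiling. But in that regime the ceiling forces $x = 1$, so every covered agent has satisfaction $\ge m-1$ and the ratio is at least $1 - (1-\tfrac1m)^K \ge 1 - e^{-K/m} \ge 1 - \tfrac{\w(K)}{K} \ge 1 - \tfrac{2\w(K)}{K}$, now using $\tfrac{K}{m} > \w(K)$ (and for the very smallest values of $K$ the target ratio is nonpositive, so there is nothing to prove). Polynomial running time is immediate: $x$ is obtained by evaluating $\w(K)$ to sufficient precision, and each of the $K$ loop iterations, together with the final reassignment loop, scans the $n$ agents and $m$ alternatives once.
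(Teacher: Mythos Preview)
Your argument is correct and follows essentially the same route as the paper's proof: a pigeonhole step gives geometric decay of the number of uncovered agents, the Lambert-$\w$ identity $e^{-\w(K)}=\w(K)/K$ converts this into $u_K\le n\w(K)/K$, and the bound $\OPT\le n(m-1)$ together with the elementary inequality $(1-t)^2\ge 1-2t$ finishes. The only difference is that the paper works with the real value $x=\tfrac{m\w(K)}{K}$ throughout and thereby sidesteps the ceiling, whereas you keep $x=\lceil\tfrac{m\w(K)}{K}\rceil$ and explicitly dispatch the boundary regime $\tfrac{m\w(K)}{K}<1$; this makes your treatment slightly more careful but does not change the substance of the argument.
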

\begin{proof}
  Let $x = \frac{m\w(K)}{K}$.  We will first give an inductive proof
  that, for each $i$, $0 \leq i \leq K$, after the $i$'th iteration of the outer loop at
  most $n(1- \frac{w(K)}{K})^{i}$ agents are unassigned. Based on this
  observation, we will derive the approximation ratio of our
  algorithm.

  For $i = 0$, the inductive hypothesis holds because $n(1-
  \frac{\w(K)}{K})^{0} = n$. For each $i$, let $n_{i}$ denote the
  number of unassigned agents after the $i$'th iteration. Thus, after
  the $i$'th iteration there are $n_i$ unassigned agents, each with
  $x$ unassigned alternatives among his or her top-$x$ ranked
  alternatives. As a result, at least one unassigned alternative is
  present in at least $\frac{n_{i}x}{m-i}$ of top-$x$ positions of
  unassigned agents. This means that after the $(i+1)$'st iteration
  the number of unassigned agents is:
  \begin{align*}
    n_{i+1} \leq n_{i} - \frac{n_{i}x}{m-i} \leq n_{i}\left(1 -
    \frac{x}{m}\right) = n_{i}\left(1 - \frac{\w(K)}{K}\right).
  \end{align*}
  If for a given $i$ the inductive hypothesis holds, that is, if $n
  _{i} \leq n\left(1- \frac{\w(K)}{K}\right)^{i}$, then:
  \begin{align*}
    n_{i+1} \leq n(1- \frac{\w(K)}{K})^{i}(1 -
    \frac{\w(K)}{K}) = n\left(1- \frac{\w(K)}{K}\right)^{i+1}.
  \end{align*}
  Thus the hypothesis holds and, as a result, we have that:
  \begin{align*}
    n_{k} \leq n\left(1- \frac{\w(K)}{K}\right)^{K} \leq
    n\left(\frac{1}{e}\right)^{\w(K)} = \frac{n\w(K)}{K}.
  \end{align*}
  Let $\Phi$ be the assignment computed by our algorithm.  To compare
  it against the optimal solution, it suffices to observe that the
  optimal solution has the value of satisfaction of at most $\OPT \leq (m-1)n$,
  that each agent selected during the first $K$ steps has satisfaction
  at least $m-x = m-\frac{m\w(K)}{K}$, and that the agents not
  assigned within the first $K$ steps have satisfaction no worse than
  $0$. Thus it holds that:
  \begin{align*}
    \frac{\ell_{1}^{\alpha_{\bordadec}}(\Phi)}{\OPT} & \geq \frac{(n - \frac{n\w(K)}{K})(m - \frac{m\w(K)}{K})}{(m-1)n} \\
    & \geq \left(1 - \frac{\w(K)}{K}\right)\left(1 - \frac{\w(K)}{K}\right) \geq 1 -
    \frac{2\w(K)}{K}.
  \end{align*}
  This completes the proof.
\end{proof}

Since for each $\epsilon > 0$ there is a value $K_\epsilon$ such that
for each $K > K_\epsilon$ it holds that $\frac{2\w(K)}{K} < \epsilon$,
and $\alpha_\bordadec$-\textsc{SU-CC} problem can be solved
optimally in polynomial time for each fixed constant $K$(see the work
of Betzler et al.~\cite{fullyProportionalRepr}), there is a
polynomial-time approximation scheme (PTAS) for
$\alpha_\bordadec$-\textsc{SU-CC} (i.e., a family of
algorithms such that for each fixed $r$, $0 < r < 1$, there is
a polynomial-time $r$-approximation algorithm for
$\alpha_\bordadec$-\textsc{SU-CC} in the family; note that in
PTASes we measure the running time by considering $r$ to be a
fixed constant).

\begin{theorem}\label{theorem:ptas}
  There is a PTAS for $\alpha_\bordadec$-\textsc{SU-CC}.
\end{theorem}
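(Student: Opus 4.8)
The plan is to obtain the PTAS by combining Algorithm~P with the exact algorithm of Betzler et al.~\cite{fullyProportionalRepr}, which runs in polynomial time whenever the committee size $K$ is a fixed constant. Recall that to exhibit a PTAS it suffices, for each fixed rational $r$ with $0 < r < 1$, to describe a single polynomial-time algorithm that always returns an $r$-approximate solution; accordingly, fix such an $r$ and set $\epsilon = 1 - r$.

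The key quantitative fact I would invoke is that $\frac{2\w(K)}{K}\to 0$ as $K\to\infty$. This holds because Lambert's W-function grows only logarithmically, $\w(K) = \Theta(\log K)$, so $\frac{2\w(K)}{K} = \Theta\!\left(\frac{\log K}{K}\right)\to 0$. Hence for the fixed $\epsilon$ there is a constant $K_\epsilon$, depending only on $\epsilon$ (equivalently on $r$) and effectively computable from it, such that $\frac{2\w(K)}{K} < \epsilon$ for every $K > K_\epsilon$.

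The algorithm then works as follows. Given an instance of $\alpha_\bordadec$-\textsc{SU-CC} with committee size $K$: if $K > K_\epsilon$, run Algorithm~P and return its output; by Lemma~\ref{lemma:greedyCC} this output has satisfaction at least $\bigl(1 - \frac{2\w(K)}{K}\bigr)\OPT > (1-\epsilon)\OPT = r\cdot\OPT$, and Algorithm~P runs in polynomial time. If instead $K \le K_\epsilon$, then $K$ is bounded by a fixed constant, so the exact algorithm of Betzler et al.~\cite{fullyProportionalRepr}, whose running time has the form $f(K)\cdot(n+m)^{O(1)}$, runs in polynomial time (since $f(K)$ is then just a constant) and returns an optimal, hence $r$-approximate, solution. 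In either branch the running time is polynomial in the input size for the fixed $r$, which is precisely what a PTAS requires.

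I do not expect a serious obstacle: both ingredients have already been established, so the work reduces to the elementary asymptotic $\frac{\w(K)}{K}\to 0$ and the observation that $f(K)$ is a constant once $K\le K_\epsilon$. The mildest point of care is ensuring the threshold $K_\epsilon$ is computable from $r$, so that the family of algorithms is uniform; this is immediate, since one can simply search for the smallest $K$ beyond which $\frac{2\w(K)}{K}<\epsilon$.
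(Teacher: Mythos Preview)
Your proposal is correct and matches the paper's own argument essentially line for line: the paper likewise observes that $\frac{2\w(K)}{K}\to 0$, fixes a threshold $K_\epsilon$, applies Algorithm~P (via Lemma~\ref{lemma:greedyCC}) for $K>K_\epsilon$, and falls back on the exact fixed-$K$ algorithm of Betzler et al.\ for $K\le K_\epsilon$.
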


The idea used in Algorithm~P 
%\ref{alg:greedy2} 
can also be used to address a generalized \textsc{SE-CC} problem. We
can consider the following relaxation of \textsc{SE-CC}: Instead of
requiring that each agent's satisfaction is lower-bounded by some
value, we ask that the satisfactions of a significant majority of the
agents are lower-bounded by a given value.  More formally, for a given
constant $\delta$, we introduce an additional quality metric:
\[ \ell_{\min}^{\delta,\alpha}(\Phi) = \mathrm{max}_{N' \subseteq N:
  \frac{||N|| - ||N'||}{||N||} \leq \delta}\mathrm{min}_{i \in
  N'}\alpha(pos_{i}(\Phi(i))).
\]
For a given $0 < \delta < 1$, by putting $x =
\frac{-m\ln(\delta)}{K}$, we get $(1 +
\frac{\ln(\delta)}{K})$-approximation algorithm for the
$\ell_{\min}^{\delta,\alpha}(\Phi)$ metric.

Finally, we show that Algorithm~P 
%\ref{alg:greedy2} 
performs very well even if the voters cast truncated
ballots. Proposition~\ref{lemma:greedyCCTruncated} gives the relation
between the number of positions used by the algorithm and the
approximation ratio. In Figure~\ref{fig:cc_truncated} we show this
relation for some values of the parameters $m$ and $K$.

\begin{proposition}\label{lemma:greedyCCTruncated}
  Let $Q$ be the number of top positions in the agents' preference
  orders that are known by the algorithm ($Q \leq
  \frac{m\w(K)}{K}$).  Algorithm~P %\ref{alg:greedy2} 
  that uses $x = Q$
  instead of $x = \lceil \frac{m\w(K)}{K} \rceil$ is a polynomial-time
  $\left(\frac{m-Q}{m-1}(1 - e^{-\frac{QK}{m}})\right)$-approximation
  algorithm for $\alpha_\bordadec$-\textsc{SU-CC}.
\end{proposition}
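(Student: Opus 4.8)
The plan is to reuse the argument of Lemma~\ref{lemma:greedyCC} almost verbatim, the only change being that we run Algorithm~P with the window size $x = Q$ instead of $x = \lceil\tfrac{m\w(K)}{K}\rceil$. This is a legitimate choice because the only information the first $K$ iterations use about a voter is whether a given alternative lies among that voter's top $x$ positions, which is decidable from a truncated ballot of length $Q$ exactly when $x \le Q$; the hypothesis $Q \le \tfrac{m\w(K)}{K}$ guarantees we are in the regime where using all $Q$ known positions is the natural thing to do (and, since $\w(K) < K$, it also guarantees $m - Q > 0$).

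First I would re-establish, by induction on $i$, that after the $i$'th iteration of the outer loop the number $n_i$ of still-unassigned agents satisfies $n_i \le n\,(1 - \tfrac{Q}{m})^{i}$. The inductive step is the same pigeonhole count as in Lemma~\ref{lemma:greedyCC}: no already-selected alternative can sit among the top $x$ positions of an unassigned agent (otherwise that agent would have been matched), so the top-$x$ prefixes of the $n_i$ unassigned agents are occupied by not-yet-selected alternatives, of which there are only $m - i$; hence some unselected alternative occurs in the top-$x$ prefix of at least $\tfrac{n_i x}{m-i} \ge \tfrac{n_i Q}{m}$ unassigned agents, and greedily picking it yields $n_{i+1} \le n_i\bigl(1 - \tfrac{x}{m}\bigr) = n_i\bigl(1 - \tfrac{Q}{m}\bigr)$. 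Iterating $K$ times and using $1 - y \le e^{-y}$ gives $n_K \le n\bigl(1 - \tfrac{Q}{m}\bigr)^{K} \le n\,e^{-QK/m}$.

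Next I would lower-bound the value of the produced assignment $\Phi$. Every agent assigned during the first $K$ iterations is matched to an alternative it ranks in one of its top $Q$ positions, hence contributes Borda satisfaction at least $m - Q$; the remaining (at most $n\,e^{-QK/m}$) agents contribute at least $0$. Therefore $\ell_1^{\alpha_\bordadec}(\Phi) \ge (m-Q)\bigl(n - n\,e^{-QK/m}\bigr)$. Combining this with the trivial upper bound $\OPT \le (m-1)n$ (the value of the possibly-infeasible assignment putting every agent at the top of its ranking) yields
\[
  \frac{\ell_1^{\alpha_\bordadec}(\Phi)}{\OPT} \ge \frac{(m-Q)\,n\,(1 - e^{-QK/m})}{(m-1)\,n} = \frac{m-Q}{m-1}\Bigl(1 - e^{-QK/m}\Bigr),
\]
which is the claimed ratio; polynomial running time is immediate since the algorithm is exactly Algorithm~P with a different constant.

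I expect no genuine obstacle: this is a direct specialization of Lemma~\ref{lemma:greedyCC}. The points needing (minor) care are (i) checking that nothing in the inductive pigeonhole step used the particular value $x = \lceil\tfrac{m\w(K)}{K}\rceil$, so the recursion $n_{i+1} \le n_i(1-\tfrac{x}{m})$ holds for the arbitrary $x = Q$; (ii) the off-by-one between ``$pos_j(a_i) < x$'' in the pseudocode and ``among the top $x$ positions'', which perturbs the estimates only by a $\pm 1$ that is absorbed into the bounds; and (iii) noting that the final clean-up loop, which assigns each leftover agent to its most preferred already-selected alternative, can only increase $\ell_1^{\alpha_\bordadec}(\Phi)$, so the ``$\ge 0$'' lower bound used for those agents is safe.
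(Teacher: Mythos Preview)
Your proposal is correct and follows essentially the same argument as the paper: the same inductive bound $n_i \le n(1-\tfrac{Q}{m})^i$, the same exponential bound $n_K \le n\,e^{-QK/m}$, the same lower bound $\ell_1^{\alpha_\bordadec}(\Phi) \ge (n-n_K)(m-Q)$, and the same comparison against $\OPT \le (m-1)n$. The extra care you note about the pseudocode off-by-one and the clean-up loop is sound but not treated explicitly in the paper.
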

\begin{proof}
  Let $n_i$ denote the number of the agents not-yet-assigned until the
  $(i+1)$-th iteration of the algorithm. Using the same reasoning as
  in Lemma~\ref{lemma:greedyCC} we show that $n_i \leq n(1 -
  \frac{Q}{m})^{i}$. As before, our proof proceeds by induction on
  $i$. It is evident that the hypothesis is correct for $i=0$. Now,
  assuming that $n_i \leq n(1 - \frac{Q}{m})^{i}$, we assess $n_{i+1}$
  as follows:
\begin{align*}
  n_{i+1} \leq n_{i} - \frac{n_{i}Q}{m-i} \leq n_{i}\left(1
    -\frac{Q}{m}\right) \leq n\left(1 - \frac{Q}{m}\right)^{i+1}
  \textrm{.}
\end{align*}
This proves the hypothesis. Thus, we can bound $n_K$:
\begin{align*}
  n_{K} \leq n\left(1 - \frac{Q}{m}\right)^{K} \leq
  n\left(\frac{1}{e}\right)^{\frac{QK}{m}} \textrm{.}
\end{align*}
This means that the satisfaction of the assignment $\Phi$ returned by
our algorithm is at least:
\begin{align*}
  \ell_{1}^{\alpha_{\bordadec}}(\Phi) \geq (n - n_K)(m - Q) \geq
  n(m-Q)(1 - e^{-\frac{QK}{m}}) \textrm{.}
\end{align*}
In effect, it holds that:
\begin{align*}
  \frac{\ell_{1}^{\alpha_{\bordadec}}(\Phi)}{\OPT} \geq \frac{n(m-Q)(1
    - e^{-\frac{QK}{m}})}{n(m-1)} \geq \frac{m-Q}{m-1}\left(1 -
  e^{-\frac{QK}{m}}\right)\textrm{.}
\end{align*}
This completes the proof.
\end{proof}

\begin{figure*}[t!]
\begin{minipage}[h]{0.48\linewidth}
  \centering
  \includegraphics[width=\textwidth]{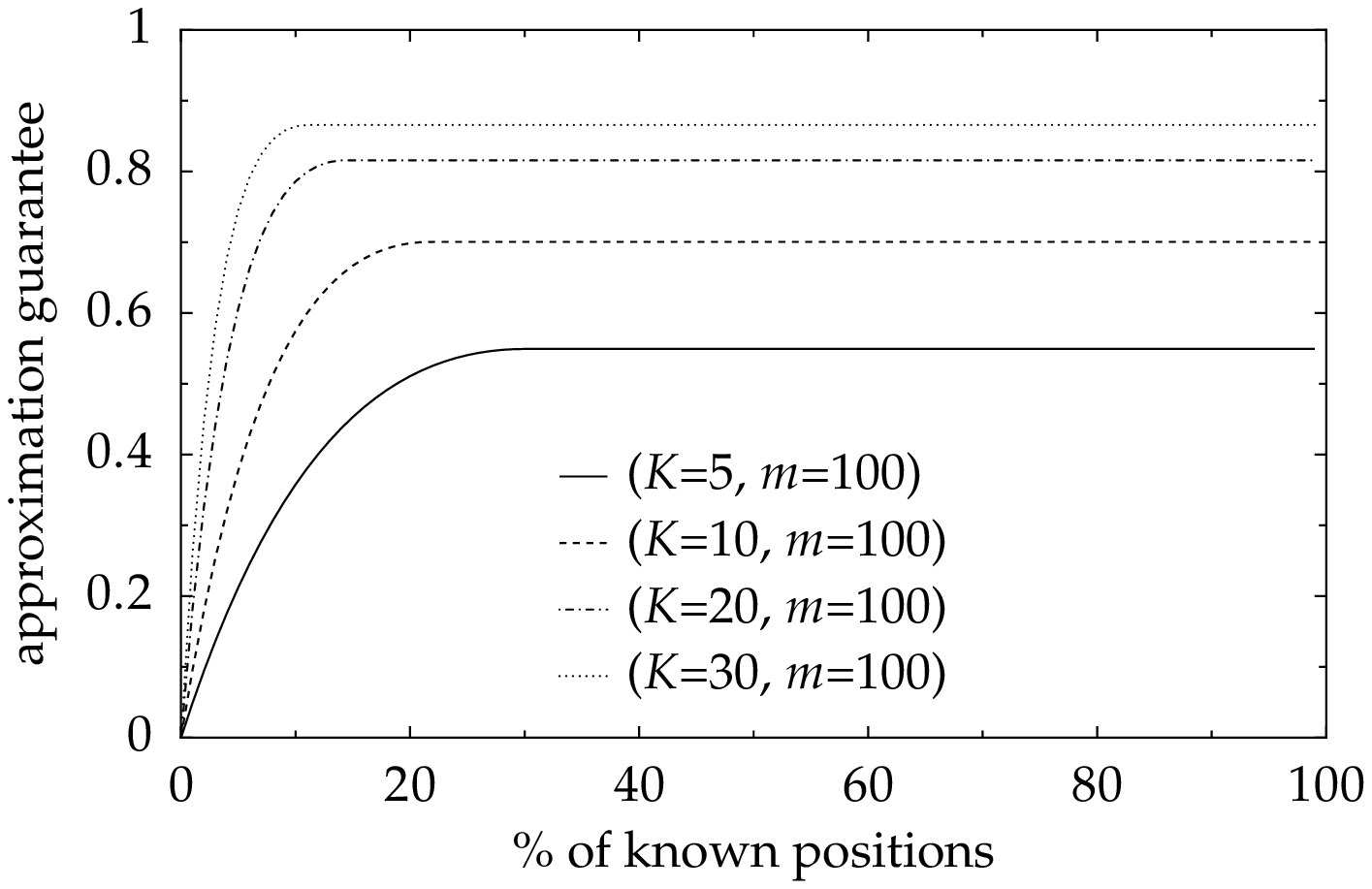}
\end{minipage}
\hspace{0.3cm}
\begin{minipage}[h]{0.48\linewidth}
  \centering
  \includegraphics[width=\textwidth]{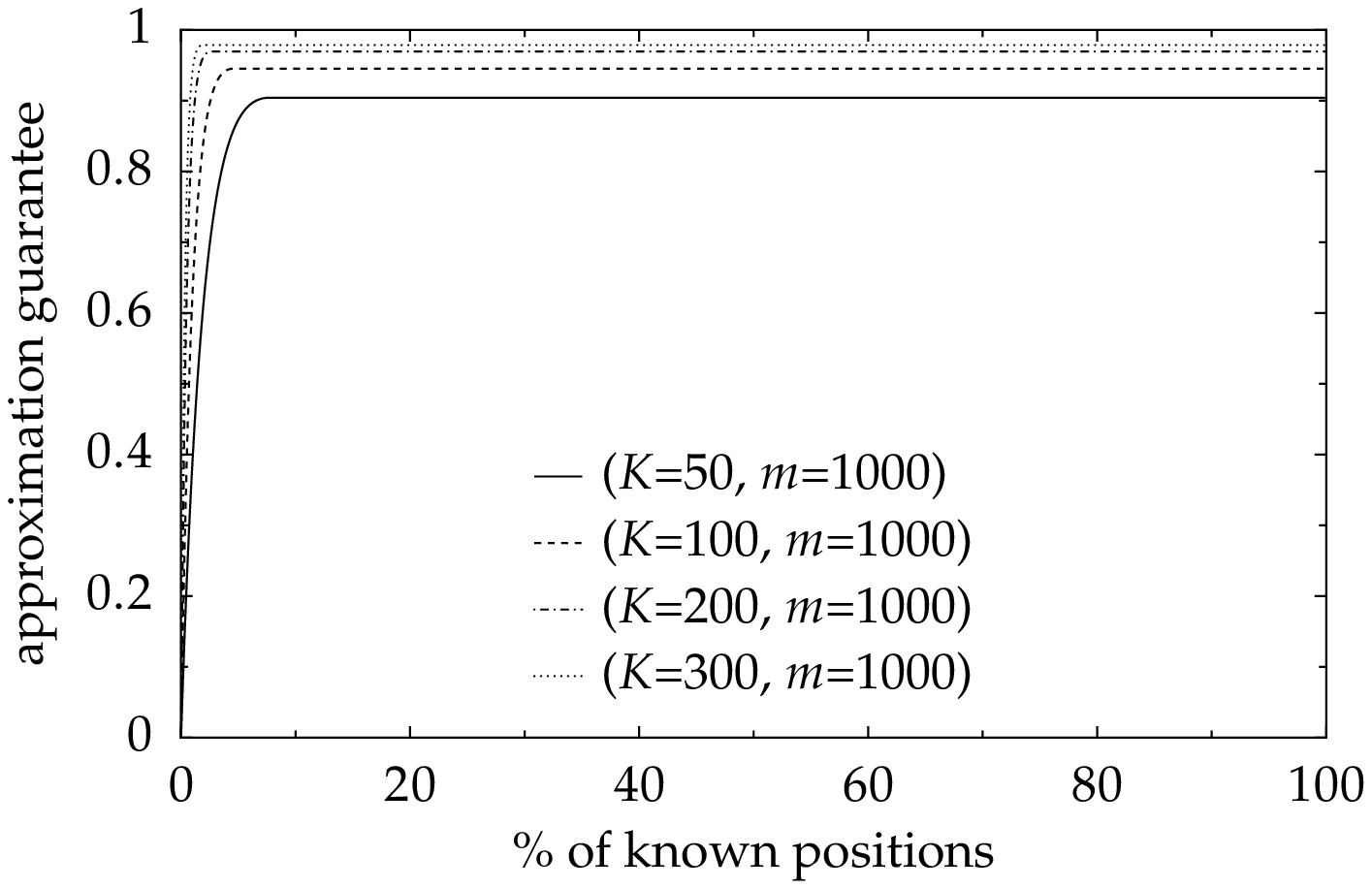}
\end{minipage}
\caption{The relation between the percentage of the known positions and the approximation ratio of Algorithm P for $\alpha_\bordadec$-\textsc{SU-CC}.}
\label{fig:cc_truncated}
\end{figure*}

For example, for Polish parliamentary elections ($K=460$, $m=6000$),
it suffices that each voter ranks only $0.5\%$ of his or her top
alternatives (that is, about $30$ alternatives) for the algorithm to
find a solution with guaranteed satisfaction at least $90\%$ of the
one (possibly infeasible) where every voter is assigned to his or her top alternative.

\subsection{ILP Formulation (Monroe, CC)}
To experimentally measure the quality of our approximation algorithms, we compare
the results against optimal solutions that we obtain using integer
linear programs (ILPs) that solve the Monroe and Chamberlin--Courant winner determination
problem. 
An ILP for the Monroe rule was provided by Potthoff and Brams~\cite{potthoff-brams}, Lu and Boutilier~\cite{budgetSocialChoice} adapted it also for the  Chamberlin--Courant rule with arbitraty PSF $\alpha$.
For the sake of completeness, below we recall the ILP whose optimal
solutions correspond to $\alpha$-SU-Monroe winner sets for the given
election (we also indicate which constraints to drop to obtain an ILP
for finding $\alpha$-SU-CC winner sets):
\begin{enumerate}
\item For each $i$, $1 \leq i \leq n$, and each $j$, $1 \leq j \leq m$
   we have a $0/1$ variable $a_{ij}$ indicating whether alternative $a_j$
   represents agent $i$. For each $j$, $1 \leq j \leq m$, we have a
   $0/1$ variable $x_j$ indicating whether alternative $a_j$ is included in
   the set of winners.
\item Our goal is to maximize the value $\sum_{i =
     1}^{n}\alpha(pos_{i}(a_j))a_{ij}$ subject to the following constraints:
    \begin{enumerate}
    \item For each $i$ and $j$, $1 \leq i \leq n, 1 \leq j \leq m$, $0
      \leq a_{ij} \leq x_j$ (alaternative $a_j$ can represent agent $i$
      only if $a_j$ belongs to the set of winners)
    \item For each $i$, $1 \leq i \leq n$, $\sum_{1 \leq j \leq m}
      a_{ij} = 1$ (every agent is represented by exactly one
      alternative).
    \item\label{item:bounding} For each $j$, $1 \leq j \leq m$, $x_j
      \lfloor \frac{n}{K}\rfloor \leq \sum_{1 \leq i \leq n} a_{ij}
      \leq x_j \lceil \frac{n}{K} \rceil$ (each alternative either does
      not represent anyone or represents between $\lfloor \frac{n}{K}
      \rfloor$ and $\lceil \frac{n}{K} \rceil$ agents; if we remove
      these constraints then we obtain an ILP for the Chamberlin-Courant
      rule).
    \item $\sum_{j=1}^n x_j \leq K$ (there are exactly $K$
      winners\footnote{For the Monroe framework inequality here is equivalent to equality. We use the inequality so that deleting
        constraints from item~(\ref{item:bounding}) leads to an ILP for the 
        Chamberlin-Courant rule.}).
    \end{enumerate}
 \end{enumerate}

We used the GLPK 4.47 package (GNU Linear Programming Kit, version 4.47)
to solve these ILPs, whenever it was possible to do so in reasonable
time.

\section{Empirical Evaluation of the Algorithms}\label{sec:experiments}

In this section we present the results of empirical evaluation of
algorithms from Section~\ref{sec:algorithms}. In the experiments we evaluated
versions of the randomized algorithms that use exactly 100 sampling steps.
In all cases but one, we
have used Borda PSF to measure voter satisfaction. In one case, with
six candidates, we have used DPSF defined through vector
$(3,3,3,2,1,0)$ (we made this choice due to the nature of the data set
used; see discussion later).

We have conducted four sets of experiments. First, we have tested all
our algorithms on relatively small elections (up to $10$ candidates,
up to $100$ agents). In this case we were able to compare the
 solutions provided by our algorithms with the optimal ones. (To obtain the optimal
solutions, we were using the ILP formulations and the GLPK's ILP solver.)
Thus we report the quality of our algorithms as the average of
fractions ${C}/{C_{\opt}}$, where $C$ is the satisfaction obtained by
a respective algorithm and $C_{\opt}$ is the satisfaction in the
optimal solution.
For each algorithm and data set, we also report the average fraction
${C}/{C_{\ideal}}$, where $C_{\ideal}$ is the satisfaction that the
voters would have obtained if each of them were matched to his or her
most preferred alternative. In our further experiments, where we
considered larger elections, we were not able to compute optimal
solutions, but fraction ${C}/{C_{\ideal}}$ gives a lower bound for
${C}/{C_{\opt}}$.  We report this value for small elections so that we
can see an example of the relation between ${C}/{C_{\opt}}$ and
${C}/{C_{\ideal}}$ and so that we can compare the results for small
elections with the results for the larger ones.  Further, for the case
of Borda PSF the ${C}/{C_\ideal}$ fraction has a very natural
interpretation: If its value for a given solution is $v$, then,
on the average, in this solution each voter is matched to an
alternative that he or she prefers to $(m-1)v$ alternatives.

In our second set of experiments, we have run our algorithms on large
elections (thousands of agents, hundreds of alternatives), coming
either from the NetFlix data set (see below) or generated by us using one of our
models. Here we reported the average fraction ${C}/{C_{\ideal}}$ only.
We have analyzed the quality of the solutions as a function of the
number of agents, the number of candidates, and the relative number of
winners (fraction $K/m$). (This last set of results is particularly
interesting because in addition to measuring the quality of our
algorithms, it allows one to asses the size of a committee one should
seek if a given average  satisfaction of agents is to be obtained).

In the third set of experiments, we have investigated the effect of
submitting truncated ballots (i.e., preference orders where only some
of the top alternatives are ranked). Specifically, we focused on the
relation between the fraction of ranked alternatives and the
approximation ratio of the algorithms.  We run our experiments on
relatively large instances describing agents' preferences; thus, here
as in the previous set of experiments, we used NetFlix data set and
the synthetic data sets. We report the quality of the algorithms as
the ratio ${C}/{C_{\ideal}}$.

In the fourth set of experiments we have measured running times of our
algorithms and of the ILP solver. Even though all our algorithms
(except for the ILP based ones) are polynomial-time, in practice some
of them are too slow to be useful.

\subsection{Experimental Data}\label{sec:DataSets}
For the evaluation of the algorithms we have considered both real-life
preference-aggregation data and synthetic data, generated according to
a number of election models. The experitments reported in this paper
predate the work of Mattei and Walsh~\cite{mat-wal:c:preflib} on
gathering a large collection of data sets with preference data, but we
mention that the conference version of this paper contributed several
data sets to their collection.

\subsubsection{Real-Life Data}\label{sec:real-data}

We have used real-life data regarding people's preference on sushi types,
movies, college courses, and competitors' performance in
figure-skating competitions.
One of the major problems regarding real-life preference data is that
either people express preferences over a very limited set of
alternatives, or their preference orders are partial. To address the
latter issue, for each such data set we complemented the partial
orders to be total orders using the technique of
Kamishima~\cite{Kamishima:Nantonac}.  (The idea is to complete each
preference order based on those reported preference orders that appear
to be similar.)

Some of our data sets contain a single profile, whereas the others
contain multiple profiles.  When preparing data for a given number $m$
of candidates and a given number $n$ of voters from a given data set,
we used the following method: We first uniformly at random chose a
profile within the data set, and then we randomly selected $n$ voters
and $m$ candidates. We used preference orders of these $n$ voters
restricted to these $m$ candidates.
\medskip

% \newpage
% For our empirical evaluation we used the real datasets of user
% preferences described below. In some of these datasets, however, the
% users expressed their preferences only on the small subset of the
% alternatives. We complemented such partial orderings by the method
% described by Kamishima in \cite{Kamishima:Nantonac}. Basically, we
% first calculate the correlation between the agents based on how did
% they rank the same alternatives. Secondly, taking into the correlation
% between the agents and their revealed preferences, for each agent and
% its not-yet-ranked alternative, we asses the value of the utility of
% the this alternative with respect to this agent. Finally, we turn the
% utilities into the ranking, obtaining the full ordering for each
% agent.

%\subsection{Sushi preferences}

\noindent\textbf{Sushi Preferneces.}\quad
We used the set of preferences regarding sushi types collected by
Kamishima\cite{Kamishima:Nantonac}.\footnote{The sushi data set is
  available under the following url:
  \url{http://www.kamishima.net/sushi/}}  Kamishima has collected two
sets of preferences, which we call \textsc{S1} and \textsc{S2}. Data
set S1 contains complete rankings of $10$ alternatives collected from
$5000$ voters.  S2 contains partial rankings provided by $5000$ voters over a
set of $100$ alternatives (each vote ranks $10$ alternatives). We used
Kamishima~\cite{Kamishima:Nantonac} technique to obtain total
rankings.\medskip

\noindent\textbf{Movie Preferences.}\quad
Following Mattei et al.~\cite{Mattei:Netflix}, we have used the NetFlix data
set\footnote{http://www.netflixprize.com/} of movie preferences (we
call it \textsc{Mv}).  NetFlix data set contains ratings collected
from about $480$ thousand distinct users regarding $18$ thousand
movies. The users rated movies by giving them a score between $1$
(bad) and $5$ (good). The set contains about $100$ million ratings.
We have generated $50$ profiles using the following method: For each
profile we have randomly selected $300$ movies, picked $10000$ users
that ranked the highest number of the selected movies, and for each
user we have extended his or her ratings to a complete preference
order using the method of
Kamishima~\cite{Kamishima:Nantonac}.\medskip

\noindent\textbf{Course Preferences.}\quad
Each year the students at the AGH University choose courses that they
would like to attend. The students are offered a choice of six courses
of which they have to attend three.  Thus the students are asked to
give an unordered set of their three top-preferred courses and a
ranking of the remaining ones (in case too many students select a
course, those with the highest GPA are enrolled and the remaining ones
are moved to their less-preferred courses). In this data set, which we
call \textsc{Cr}, we have $120$ voters (students) and $6$ alternatives
(courses). However, due to the nature of the data, instead of using
Borda count PSF as the satisfaction measure, we have used the vector
$(3,3,3,2,1,0)$.  Currently this data set is available as part of
PrefLib~\cite{mat-wal:c:preflib}.\medskip

\noindent\textbf{Figure Skating.}\quad This data set, which we call \textsc{Sk}, contains
preferences of the judges over the performances in a figure-skating
competitions. The data set contains $48$ profiles, each describing a
single competition. Each profile contains preference orders of $9$
judges over about 20 participants. The competitions include European
skating championships, Olympic Games, World Junior, and World
Championships, all from 1998\footnote{This data set is available under
  the following url: \url{http://rangevoting.org/SkateData1998.txt}.}.
(Note that while in figure skating judges provide numerical scores,
this data set is preprocessed to contain preference orders.)

\subsubsection{Synthetic Data}

For our tests, we have also used profiles generated using three
well-known distributions of preference orders. \medskip

\noindent\textbf{Impartial Culture.}\quad Under the impartial culture
model of preferences (which we denote \textsc{IC}), for a given set
$A$ of alternatives, each voter's preference order is drawn uniformly
at random from the set of all possible total orders over $A$.  While
not very realistic, profiles generated using impartial culture model
are a standard testbed of election-related algorithms.  \medskip

\noindent\textbf{Polya-Eggenberger Urn Model.}\quad Following McCabe-Dansted and Slinko~\cite{mcc-sli:j:experiments} and Walsh~\cite{Walsh11},
we have used the Polya-Eggenberger urn model~\cite{bpublicchoice85}
(which we denote \textsc{Ur}). In this model we generate votes as
follows. We have a set $A$ of $m$ alternatives and an urn that
initially contains all $m!$ preference orders over $A$.  To generate a
vote, we simply randomly pick one from the urn (this is our generated
vote), and then---to simulate correlation between voters---we return
$a$ copies of this vote to the urn. When generating an election with
$m$ candidates using the urn model, we have set the parameter $a$ so
that $\frac{a}{m!} = 0.05$ (Both McCabe-Dansted and Slinko~\cite{mcc-sli:j:experiments} and Walsh~\cite{Walsh11} call this parameter
$b$; we mention that those authors use much higher values of $b$ but
we felt that too high a value of $b$ leads to a much too strong
correlation between votes).\medskip

\noindent\textbf{Generalized Mallow's Model.}\quad We refer to this
data set as \textsc{Ml}. Let $\succ$ and $\succ'$ be two preference
orders over some alternative set $A$. Kendal-Tau distance between
$\succ$ and $\succ'$, denoted $d_{K}(\succ,\succ')$, is defined as the
number of pairs of candidates $x, y \in A$ such that either $x \succ
y \land y \succ' x$ or $y \succ x \land x \succ' y$.

Under Mallow's distribution of preferences~\cite{mallowModel} we are
given two parameters: A \emph{center} preference order $\succ$ and a
number $\phi$ between $0$ and $1$. The model says that the probability
of generating preference order $\succ'$ is proportional to the value
$\phi^{d_{K}(\succ,\succ')}$.  To generate preference orders following
Mallow's distribution, we use the algorithm given by Lu and Boutilier
\cite{mallowImplementation2011}.

In our experiments, we have used a mixture of Mallow's models.  Let
$A$ be a set of alternatives and let $\ell$ be a positive integer. This
mixture model is parametrized by three vectors, $\Lambda =
(\lambda_{1}, \dots, \lambda_{\ell})$ (where each $\lambda_i$ is between $0$ and $1$, and $\sum_{i=1}^\ell\lambda_1=1$), $\Phi
= (\phi_{1}, \dots, \phi_{\ell})$ (where each $\phi_i$ is a number between $0$ and $1$), and $\Pi = (\succ_{1}, \ldots,
\succ_{\ell})$ (where each $\succ_i$ is a preference
order over $A$). To generate a vote, we pick a random integer $i$, $1
\leq i \leq \ell$ (each $i$ is chosen with probability $\lambda_i$), and
then generate the vote using Mallow's model with parameters
$(\succ_i,\phi_i)$.

For our experiments, we have used $a = 5$, and we have generated
vectors $\Lambda$, $\Phi$, and $\Pi$ uniformly at random.

\begin{table}[t]
\begin{center}
%\small 
%\setlength{\tabcolsep}{3pt}

\begin{tabular}{|c|c|c|c|c|c||c|c|c|c|}
\cline{2-10}
\multicolumn{1}{c|}{} & \multicolumn{5}{|c||}{Monroe} & \multicolumn{4}{|c|}{CC} \\
\cline{2-10}
\multicolumn{1}{c|}{} & A & B & C & GM & R    & C & GM & P & R\\
\cline{1-10}
\textsc{S1} & $0.94$ & $0.99$        & $\approx 1.0$  & $0.99$         & $0.99$ & $1.0$ & $\approx 1.0$ & $0.99$ & $0.99$ \\
\textsc{S2} & $0.95$ & $0.99$        & $1.0$          & $\approx 1.0$  & $0.99$ & $1.0$ & $\approx 1.0$ & $0.98$ & $0.99$ \\
\textsc{Mv} & $0.96$ & $\approx 1.0$ & $1.0$          & $\approx 1.0$  & $0.98$ & $1.0$ & $\approx 1.0$ & $0.96$ & $\approx 1.0$ \\
\textsc{Cr} & $0.98$ & $0.99$        & $1.0$          & $\approx 1.0$  & $0.99$ & $1.0$ & $\approx 1.0$ & $1.0$  & $\approx 1.0$ \\
\textsc{Sk} & $0.99$ & $\approx 1.0$ & $1.0$          & $\approx 1.0$  & $0.94$ & $1.0$ & $\approx 1.0$ & $0.85$ & $0.99$ \\
\cline{1-10}
\textsc{IC} & $0.94$ & $0.99$        & $\approx 1.0$  & $0.99$         & $0.99$ & $1.0$ & $\approx 1.0$ & $0.99$ & $0.99$\\
\textsc{Ml} & $0.94$ & $0.99$        & $1.0$          & $0.99$         & $0.99$ & $1.0$ & $\approx 1.0$ & $0.99$ & $0.99$ \\
\textsc{Ur} & $0.95$ & $0.99$        & $\approx 1.0$  & $0.99$         & $0.99$ & $1.0$ & $0.99$        & $0.97$ & $0.99$ \\
\cline{1-10}
\end{tabular}
\caption{The average quality of the  algorithms compared with the optimal solution ({$C/C_{\opt}$}) for the small instances of data and for {$K=3$} (${K=2}$ for \textsc{Cr}); ${m=10}$ (${m=6}$ for \textsc{Cr}); ${n=100}$ (${n=9}$ for \textsc{Sk}).}
\label{table:qualityAlgs1}
\end{center}
\end{table}

\begin{table}[t]
\begin{center}
%\small
%\setlength{\tabcolsep}{3pt}
\begin{tabular}{|c|c|c|c|c|c||c|c|c|c|}
\cline{2-10}
\multicolumn{1}{c|}{} & \multicolumn{5}{|c||}{Monroe} & \multicolumn{4}{|c|}{CC} \\
\cline{2-10}
\multicolumn{1}{c|}{} & A & B & C & GM & R    & C & GM & P & R\\
\cline{1-10}
\textsc{S1} & $0.95$ & $\approx 1.0$ & $1.0$          & $0.99$         & $0.99$        & $1.0$ & $\approx 1.0$ & $0.97$ & $0.99$ \\
\textsc{S2} & $0.94$ & $0.99$        & $\approx 1.0$  & $0.99$         & $0.99$        & $1.0$ & $\approx 1.0$ & $0.98$ & $\approx 1.0$ \\
\textsc{Mv} & $0.95$ & $0.99$        & $1.0$          & $\approx 1.0$  & $0.98$        & $1.0$ & $\approx 1.0$ & $0.97$ & $\approx 1.0$ \\
\textsc{Cr} & $0.96$ & $\approx 1.0$ & $1.0$          & $\approx 1.0$  & $0.99$        & $1.0$ & $1.0$         & $1.0$  & $1.00$ \\
\textsc{Sk} & $0.99$ & $\approx 1.0$ & $1.0$          & $\approx 1.0$  & $0.88$        & $1.0$ & $1.0$         & $0.91$ & $\approx 1.0$ \\
\cline{1-10}
\textsc{IC} & $0.95$ & $0.99$        & $\approx 1.0$  & $0.99$         & $0.99$        & $1.0$ & $\approx 1.0$ & $0.99$ & $0.99$\\
\textsc{Ml} & $0.95$ & $0.99$        & $\approx 1.0$  & $0.99$         & $0.99$        & $1.0$ & $\approx 1.0$ & $0.98$ & $0.99$ \\
\textsc{Ur} & $0.96$ & $0.99$        & $\approx 1.0$  & $0.99$         & $\approx 1.0$ & $1.0$ & $\approx 1.0$ & $0.96$ & $0.99$ \\
\cline{1-10}

\end{tabular}

\caption{The average quality of the  algorithms compared with the optimal solution ({$C/C_{\opt}$}) for the small instances of data and for {$K=6$} (${K=4}$ for \textsc{Cr}); ${m=10}$ (${m=6}$ for \textsc{Cr}); ${n=100}$ (${n=9}$ for \textsc{Sk}).}
\label{table:qualityAlgs2}
\end{center}
\end{table}

\begin{table}[t]
\begin{center}
%\small
%\setlength{\tabcolsep}{4pt}
\begin{tabular}{|c|c|c|c|c|c||c|c|c|c|}
\cline{2-10}
\multicolumn{1}{c|}{} & \multicolumn{5}{|c||}{Monroe} & \multicolumn{4}{|c|}{CC} \\
\cline{2-10}
\multicolumn{1}{c|}{} & A & B & C & GM & R    & C & GM & P & R\\
\cline{1-10}
\textsc{S1} & $0.85$ & $0.89$ & $0.9$   & $0.89$ & $0.89$ & $0.92$ & $0.89$ & $0.91$ & $0.92$ \\
\textsc{S2} & $0.85$ & $0.89$ & $0.89$  & $0.89$ & $0.89$ & $0.93$ & $0.9$  & $0.91$ & $0.92$ \\
\textsc{Mv} & $0.88$ & $0.92$ & $0.92$  & $0.92$ & $0.91$ & $0.97$ & $0.92$ & $0.93$ & $0.97$ \\
\textsc{Cr} & $0.94$ & $0.97$ & $0.96$  & $0.96$ & $0.96$ & $0.97$ & $0.97$ & $0.97$ & $0.97$ \\
\textsc{Sk} & $0.96$ & $0.96$ & $0.97$  & $0.97$ & $0.91$ & $1.0$  & $0.97$ & $0.82$ & $0.99$ \\
\cline{1-10}
\textsc{IC} & $0.8$  & $0.84$ & $0.85$  & $0.84$ & $0.84$ & $0.85$ & $0.83$ & $0.84$ & $0.85$\\
\textsc{Ml} & $0.83$ & $0.88$ & $0.88$  & $0.9$  & $0.88$ & $0.92$ & $0.90$ & $0.89$ & $0.94$ \\
\textsc{Ur} & $0.8$  & $0.85$ & $0.86$  & $0.87$ & $0.85$ & $0.9$  & $0.87$ & $0.87$ & $0.89$ \\
\cline{1-10}
\end{tabular}

\caption{The average quality of the  algorithms compared with the simple lower bound ({$C/C_{\ideal}$}) for the small instances of data and for {$K=3$} (${K=2}$ for \textsc{Cr}); ${m=10}$ (${m=6}$ for \textsc{Cr}); ${n=100}$ (${n=9}$ for \textsc{Sk}).}

\label{table:qualityAlgs3}
\end{center}
\vspace{-0.5cm}
\end{table}

\begin{table}[t]
\begin{center}
%\small
%\setlength{\tabcolsep}{4pt}
\begin{tabular}{|c|c|c|c|c|c||c|c|c|c|}
\cline{2-10}
\multicolumn{1}{c|}{} & \multicolumn{5}{|c||}{Monroe} & \multicolumn{4}{|c|}{CC} \\
\cline{2-10}
\multicolumn{1}{c|}{} & A & B & C & GM & R    & C & GM & P & R\\
\cline{1-10}
S1 & $0.91$ & $0.96$ & $0.96$  & $0.95$ & $0.95$ & $0.98$ & $0.98$ & $0.96$ & $0.98$ \\
S2 & $0.88$ & $0.93$ & $0.93$  & $0.93$ & $0.93$ & $0.98$ & $0.98$ & $0.96$ & $0.98$ \\
Mv & $0.85$ & $0.89$ & $0.89$  & $0.89$ & $0.88$ & $0.99$ & $0.99$ & $0.97$ & $0.99$ \\
Cr & $0.95$ & $0.98$ & $0.99$  & $0.99$ & $0.98$ & $1.0$  & $1.0$  & $1.0$  & $1.0$ \\
Sk & $0.91$ & $0.92$ & $0.92$  & $0.92$ & $0.81$ & $1.0$  & $1.0$  & $0.91$ & $\approx 1.0$ \\
\cline{1-10}
IC & $0.91$ & $0.95$ & $0.95$  & $0.94$ & $0.95$ & $0.96$ & $0.96$ & $0.95$ & $0.95$\\
Ml & $0.89$ & $0.94$ & $0.94$  & $0.94$ & $0.93$ & $0.97$ & $0.98$ & $0.95$ & $0.98$ \\
Ur & $0.91$ & $0.95$ & $0.95$  & $0.94$ & $0.95$ & $0.98$ & $0.98$ & $0.94$ & $0.97$ \\
\cline{1-10}

\end{tabular}

\caption{The average quality of the  algorithms compared with the simple lower bound ({$C/C_{\ideal}$}) for the small instances of data and for $K=6$ (${K=4}$ for \textsc{Cr}); ${m=10}$ (${m=6}$ for \textsc{Cr}); ${n=100}$ (${n=9}$ for \textsc{Sk}).}

\label{table:qualityAlgs4}
\end{center}
\end{table}

\subsection{Evaluation on Small Instances}

We now present the results of our experiments on small
elections. 
%
% Since \textsc{Sushi2} and \textsc{Movies} datasets include
% only partial orderings, we first complemented the data to the full
% orderings by the method described by Kamishima in
% \cite{Kamishima:Nantonac} (see Section~\ref{sec:DataSets} for the
% high-level idea of the mechanism). 
%
%
For each data set, we generated elections with the number of agents
$n=100$ ($n=9$ for data set \textsc{Sk} because there are only $9$
voters there) and with the number of alternatives $m=10$ ($m=6$ for
data set \textsc{Cr} because there are only $6$ alternatives there)
using the method described in Section~\ref{sec:real-data} for the
real-life data sets, and in the natural obvious way for synthetic
data.  For each algorithm and for each data set we ran $500$
experiments on different instances for $K=3$ (for the \textsc{Cr} data
set we used $K=2$) and $500$ experiments for $K=6$ (for \textsc{Cr} we
set $K=4$). For Algorithm $C$ (both for Monroe and for CC) we set the
parameter $d$, describing the number of assignment functions computed
in parallel, to $15$. The results (average fractions ${C}/{C_\opt}$
and ${C}/{C_\ideal}$) for $K=3$ are given in
Tables~\ref{table:qualityAlgs1}~and~\ref{table:qualityAlgs3}; the
results for $K=6$ are given in
Tables~\ref{table:qualityAlgs2}~and~\ref{table:qualityAlgs4} (they are
almost identical as for $K=3$). For each experiment in this section we
also computed the standard deviation; it was always on the order of
$0.01$.
The results lead to the following conclusions:

\begin{enumerate}
\item For the case of Monroe, already Algorithm~A obtains very good
  results, but nonetheless Algorithms~B and C improve noticeably upon
  Algorithm~A. In particular, Algorithm~C (for $d=15$) obtains the
  highest satisfaction on all data sets and in almost all cases was
  able to find an optimal solution.
\item Both for Monroe and for CC, Algorithm~R gives slightly worse
  solutions than Algorithm~C.
\item The results do not seem to depend on the data sets used in the
  experiments (the only exception is Algorithm~R for the Monroe system
  on data set \textsc{Sk}; however \textsc{Sk} has only 9 voters so it
  can be viewed as a border case).
\end{enumerate}

\subsection{Evaluation on Larger Instances}

\begin{figure*}[t!h!]
\begin{minipage}[h]{0.48\linewidth}
  \centering
  \includegraphics[width=\textwidth]{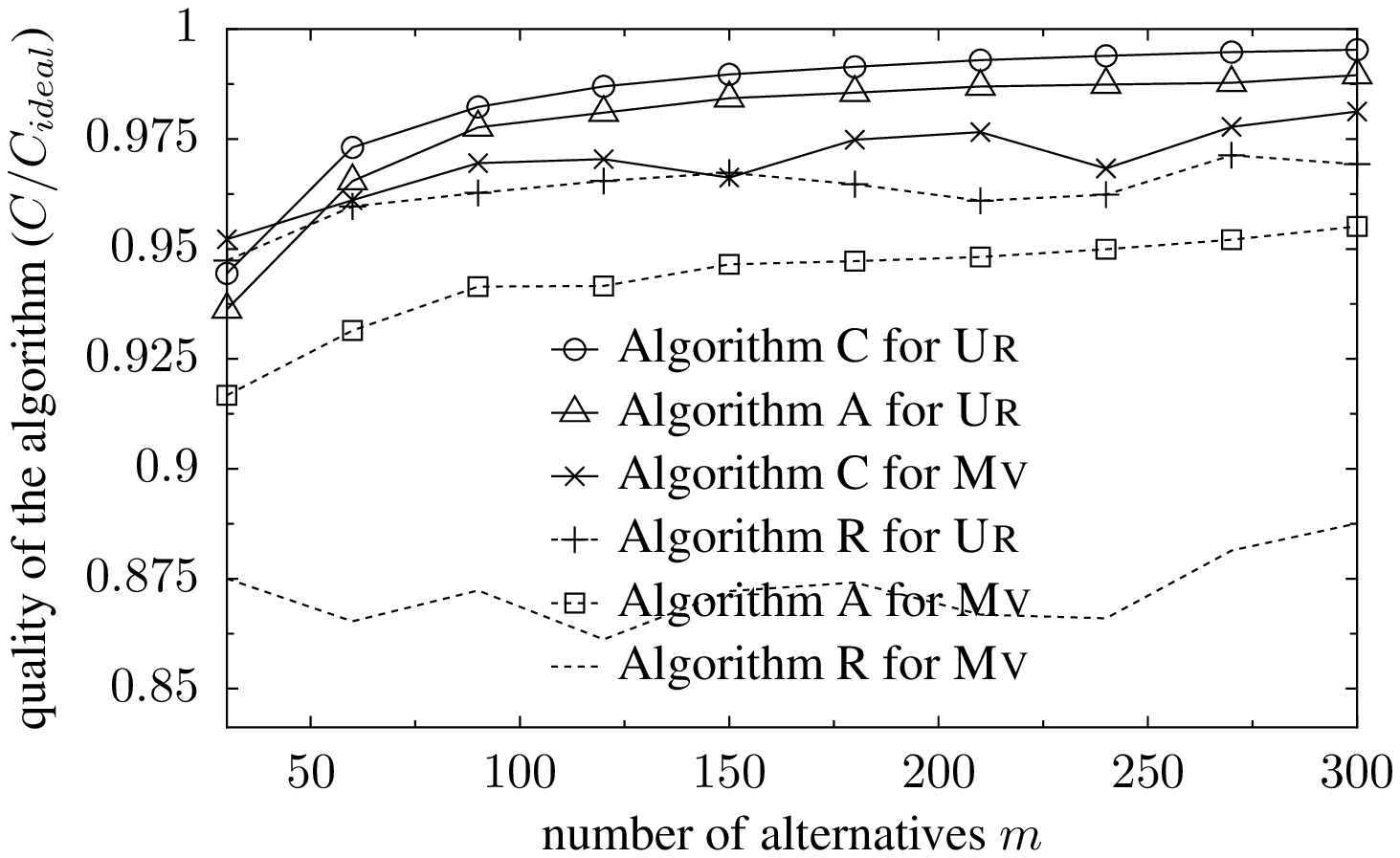}
\end{minipage}
\hspace{0.3cm}
\begin{minipage}[h]{0.48\linewidth}
  \centering
  \includegraphics[width=\textwidth]{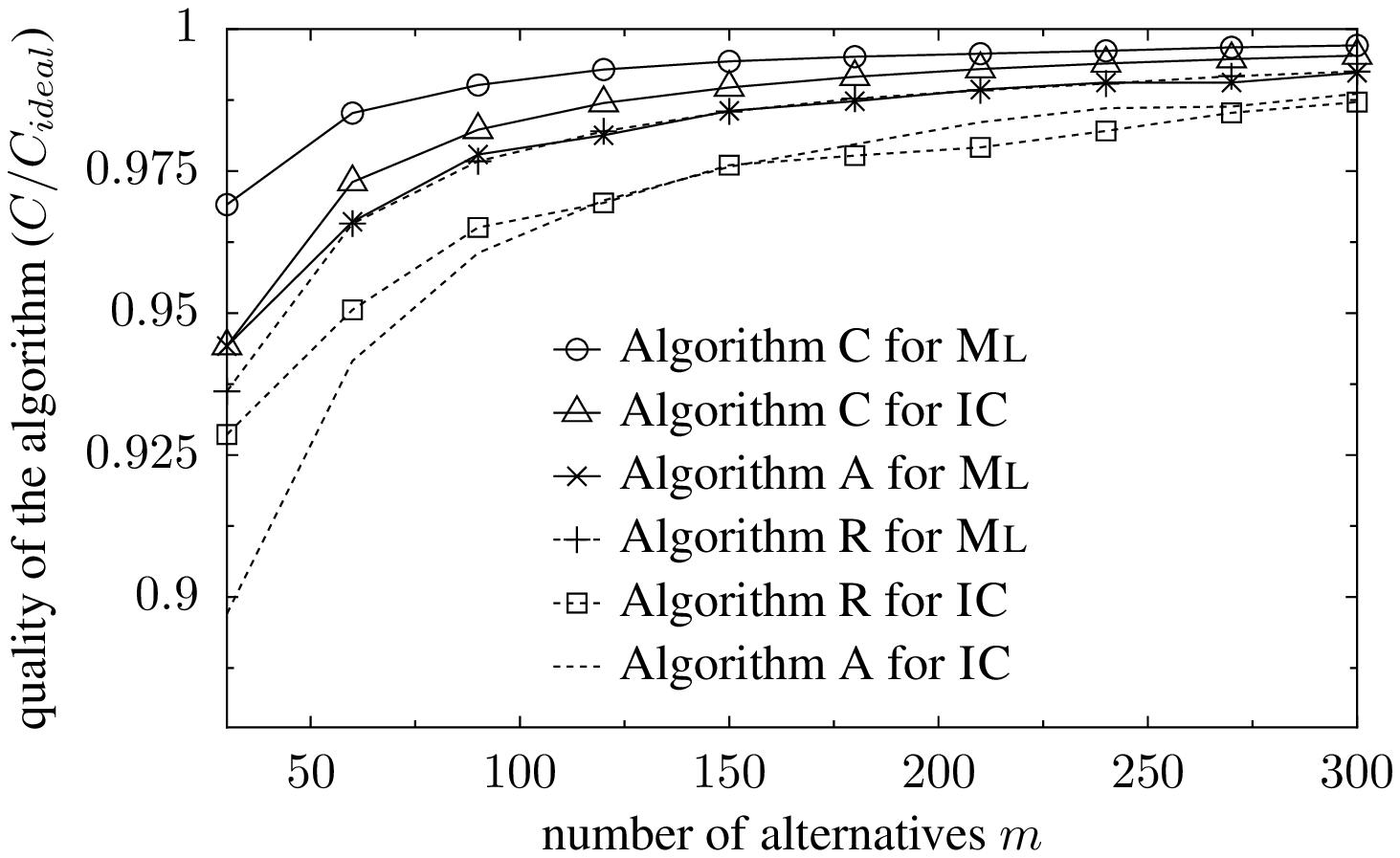}
\end{minipage}
  \caption{The relation between the number of alternatives
    {$m$} and the quality of the algorithms
    {$C/C_{\ideal}$} for the Monroe system; ${K/m = 0.3}$; {$n = 1000$}.}
  \label{fig:changing_m_monroe}
  \vspace{0.4cm}

\begin{minipage}[h]{0.48\linewidth}
  \centering
  \includegraphics[width=\textwidth]{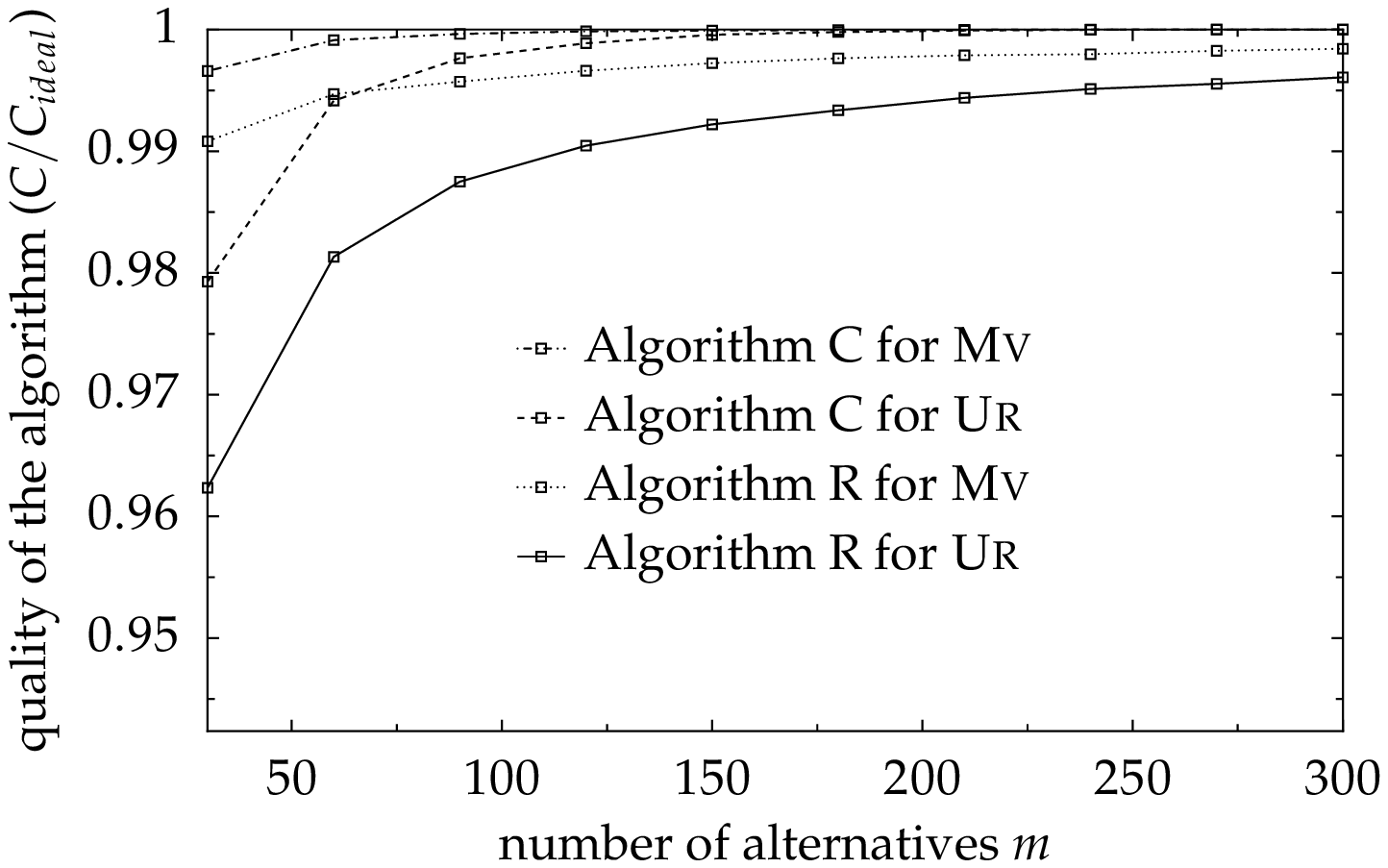}
\end{minipage}
\hspace{0.3cm}
\begin{minipage}[h]{0.48\linewidth}
  \centering
  \includegraphics[width=\textwidth]{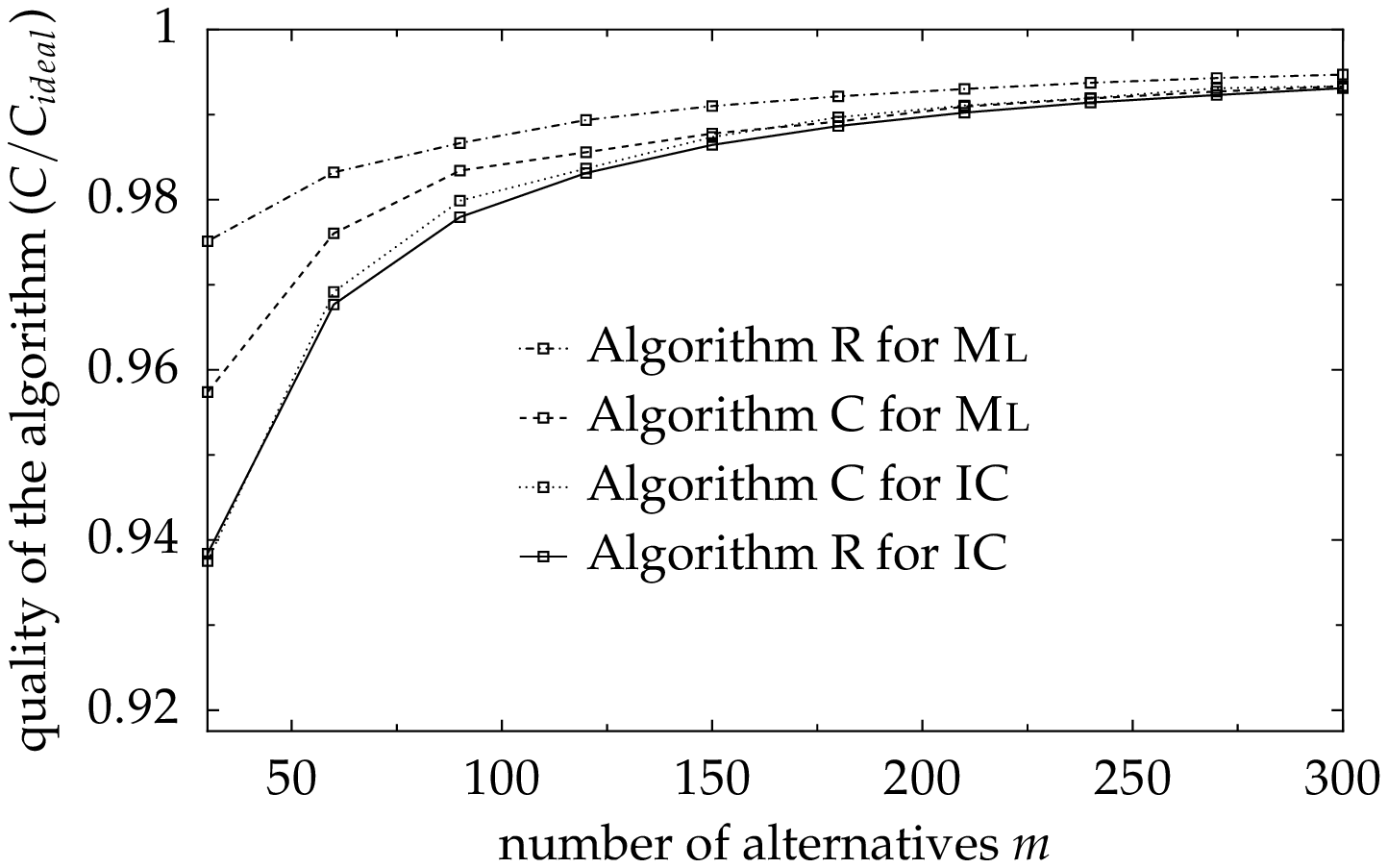}
\end{minipage}
  \caption{The relation between the number of alternatives
    {$m$} and the quality of the algorithms
    {$C/C_{\ideal}$} for the Chamberlin--Courant system;  ${K/m = 0.3}$; {$n = 1000$}.}
  \label{fig:changing_m_cc}
\end{figure*}

\begin{figure*}[t!h!]
\begin{minipage}[h]{0.48\linewidth}
  \centering
  \includegraphics[width=\textwidth]{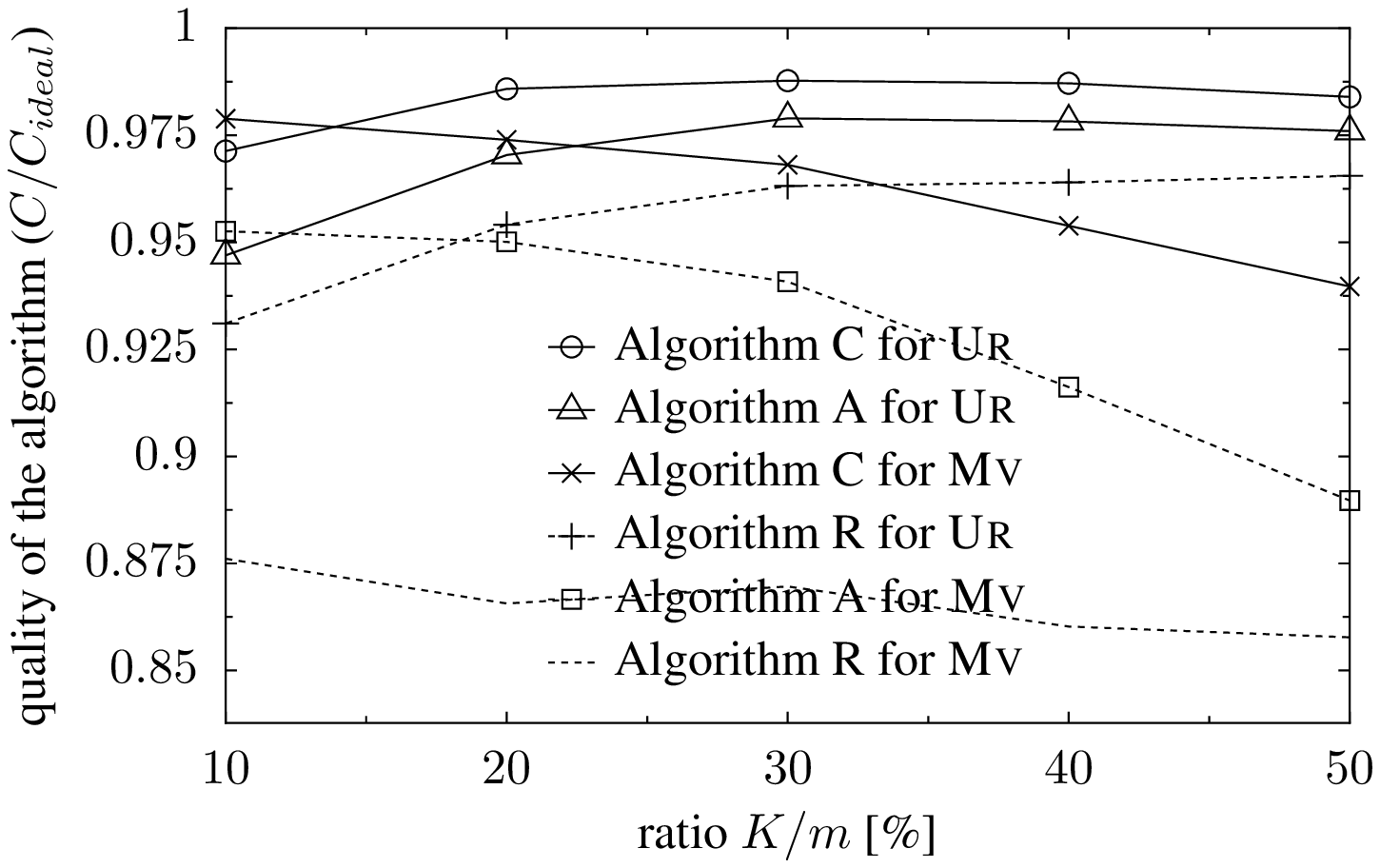}
\end{minipage}
\hspace{0.3cm}
\begin{minipage}[h]{0.48\linewidth}
  \centering
  \includegraphics[width=\textwidth]{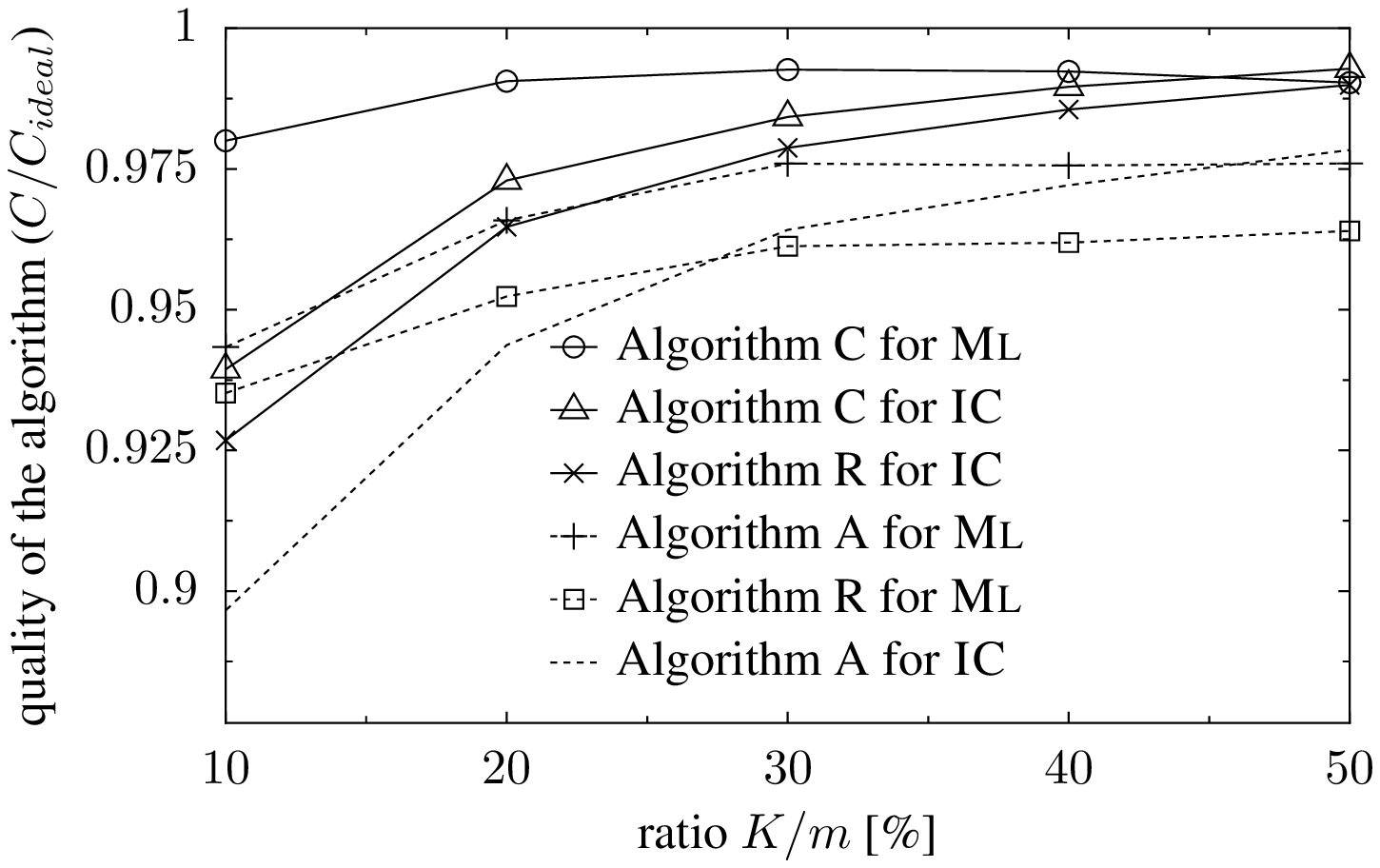}
\end{minipage}
  \caption{The relation between the ratio {$K/m$} and the
    quality of the algorithms {$C/C_{\ideal}$} for the Monroe
    system; ${m = 100}$; ${n = 1000}$.}
  \label{fig:changing_km_monroe}
  \vspace{0.4cm}

\begin{minipage}[h]{0.48\linewidth}
  \centering
  \includegraphics[width=\textwidth]{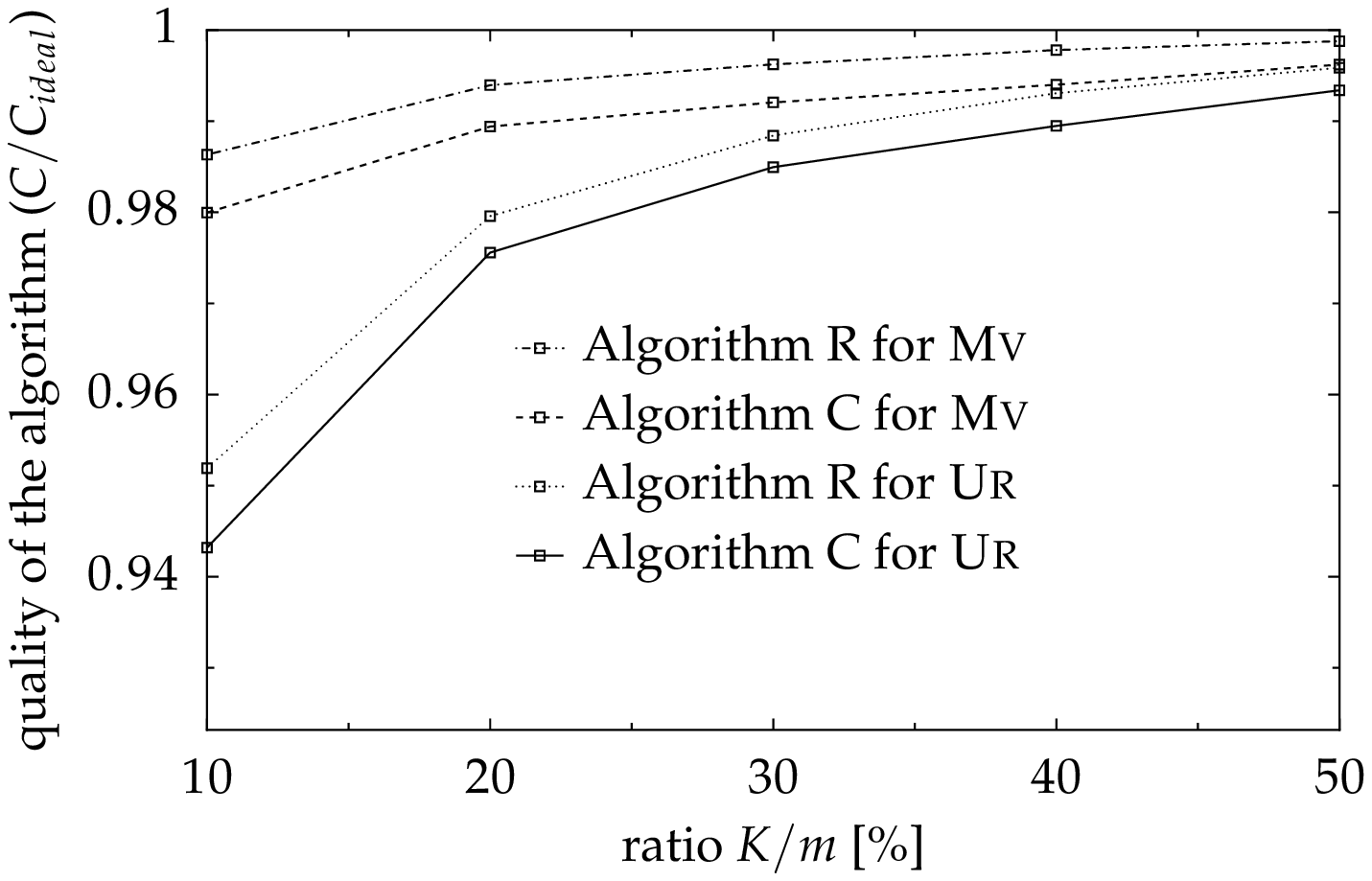}
\end{minipage}
\hspace{0.3cm}
\begin{minipage}[h]{0.48\linewidth}
  \centering
  \includegraphics[width=\textwidth]{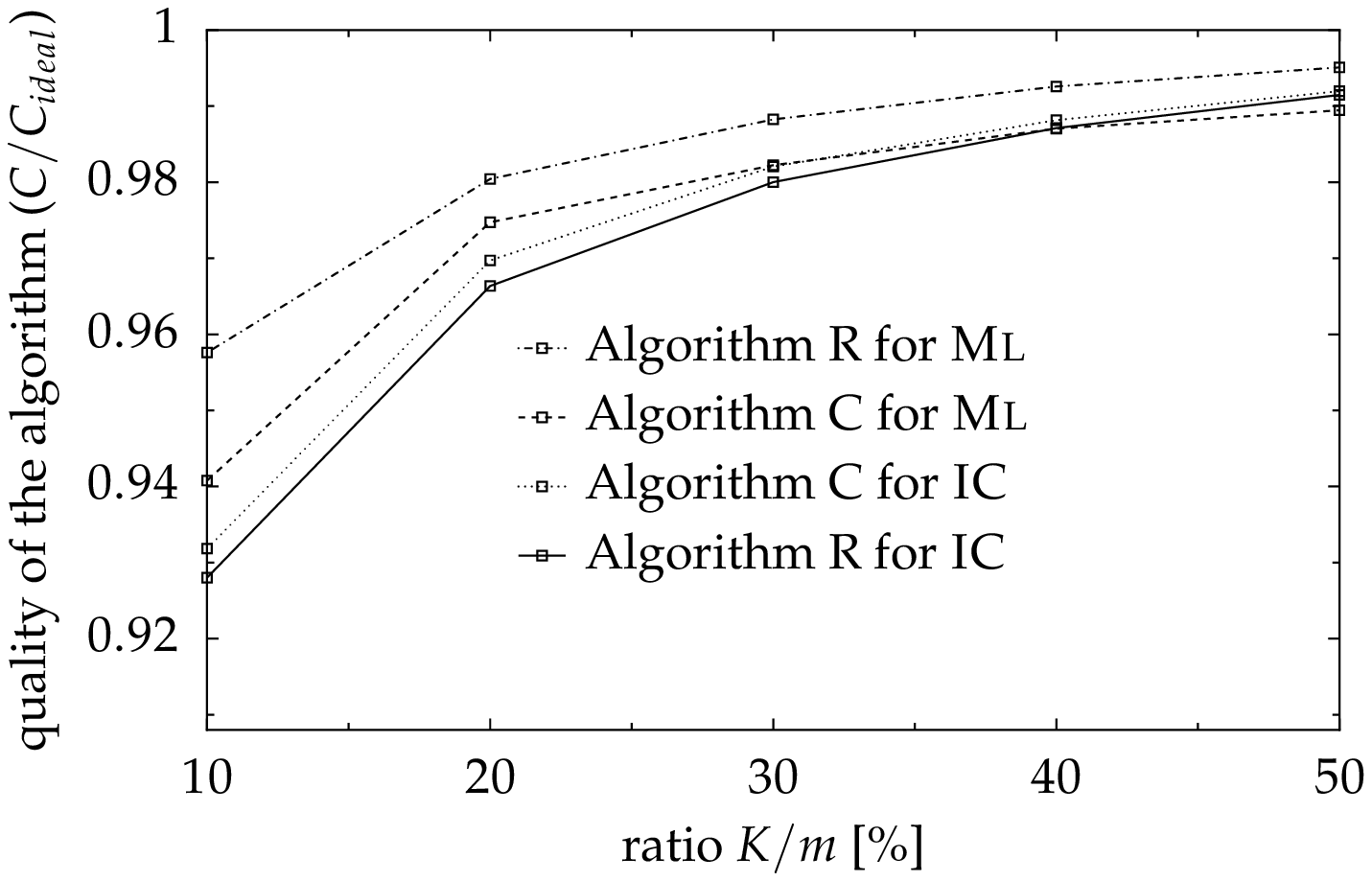}
\end{minipage}
  \caption{The relation between the ratio {$K/m$} and the
    quality of the algorithms {$C/C_{\ideal}$} for the
    Chamberlin--Courant system; ${m = 100}$; ${n = 1000}$.}
  \label{fig:changing_km_cc}
\end{figure*}

For experiments on larger instances we needed data sets with at
least $n = 10 000$ agents. Thus we used the NetFlix data set and
synthetic data. (Additionally, we run the subset of experiments (for
$n \leq 5000$) also for the \textsc{S2} data set.) For the Monroe rule we
present results for Algorithm~A, Algorithm~C, and Algorithm~R, and for
the Chamberlin--Courant rule we present results for Algorithm~C and
Algorithm~R. We limit the set of algorithms for the sake of the clarity of
the presentation. For Monroe we chose Algorithm~A because it is the
simplest and the fastest one, Algorithm~C because it is the best
generalization of Algorithm~A that we were able to run in reasonable
time, and Algorithm~R to compare a randomized algorithm to
deterministic ones. For the Chamberlin--Courant rule we chose Algorithm~C
because it is, intuitively, the best one, and we chose Algorithm~R for
the same reason as in the case of Monroe.
%Further, we present results
%for the NetFlix data set and for the urn model only. We chose these data sets because the
%urn model results turned out to be the worst ones among the synthetic
%data sets, and the NetFlix data set is our only large real-life data
%set.

First, for each data set and for each algorithm we fixed the value of
$m$ and $K$ and for each $n$ ranging from $1000$ to $10000$ with the
step of $1000$ we run $50$ experiments. We repeated this procedure for
4 different combinations of $m$ and $K$: ($m = 10$, $K = 3$), ($m =
10$, $K = 6$), ($m = 100$, $K = 30$) and ($m = 100$, $K = 60$). We
measured the statistical correlation between the number of voters and
the quality of the algorithms $C/C_{\ideal}$. The ANOVA test in most
cases showed that there is no such correlation. The only exception was
\textsc{S2} data set, for which we obtained an almost negligible
correlation. For example, for ($m = 10, K = 3$) Algorithm $C$ under
data set \textsc{S2} for Monroe's system for $n = 5000$ gave
$C/C_{\ideal} = 0.88$, while for $n = 100$ (in the previous section)
we got $C/C_{\ideal} = 0.89$. Thus we conclude that in practice the
number of agents has almost no influence on the quality of the results
provided by our algorithms.

Next, we fixed the number of voters $n = 1000$ and the ratio $K/m =
0.3$, and for each $m$ ranging from $30$ to $300$ with the step of
$30$ (naturally, as $m$ changed, so did $K$ to maintain the ratio
$K/m$), we run 50 experiments. We repeated this procedure for $K/m =
0.6$. The relation between $m$ and $C/C_{\ideal}$ for \textsc{Mv} and
\textsc{Ur}, under both the Monroe rule and the Chamberlin--Courant rule,
is given in Figures~\ref{fig:changing_m_monroe}
and~\ref{fig:changing_m_cc} (the results for $K/m = 0.6$ look similar).

Finally, we fixed $n = 1000$ and $m = 100$, and for each $K/m$ ranging
from $0.1$ and $0.5$ with the step of $0.1$ we run $50$ experiments. The
relation between the ratio $K/m$ and the quality $C/C_{\ideal}$ is
presented in Figures~\ref{fig:changing_km_monroe} and~~\ref{fig:changing_km_cc}. 

For the case of Chamberlin--Courant system, increasing the size of the
committee to be elected improves overall agents' satisfaction. Indeed, since there are no
constraints on the number of agents matched to a given alternative, a
larger committee means more opportunities to satisfy the agents.  For the
Monroe rule, a larger committee may lead to a lower total satisfaction.  This
happens if many agents like a particular alternative a lot, but only
some of them can be matched to this alternative and others have to be
matched to their less preferred ones.  Nonetheless, we see that
Algorithm~C achieves $C/C_{\ideal} = 0.925$ even for $K/m = 0.5$ for
the NetFlix data set.

Our conclusions from these experiments are the following.  For the
Monroe rule, even Algorithm~A achieves very good results. However,
Algorithm~C consistently achieves better  (indeed, almost perfect) ones.  
For the Chamberlin--Courant rule the randomized algorithm on some datasets performs better than the deterministic ones. However, even in such cases, the improvement over the Algorithm C is small.

\subsection{Truncated ballots}\label{sec:truncated}

\begin{figure*}[t!h!]
\begin{minipage}[h]{0.32\linewidth}
  \centering
  \includegraphics[width=\textwidth]{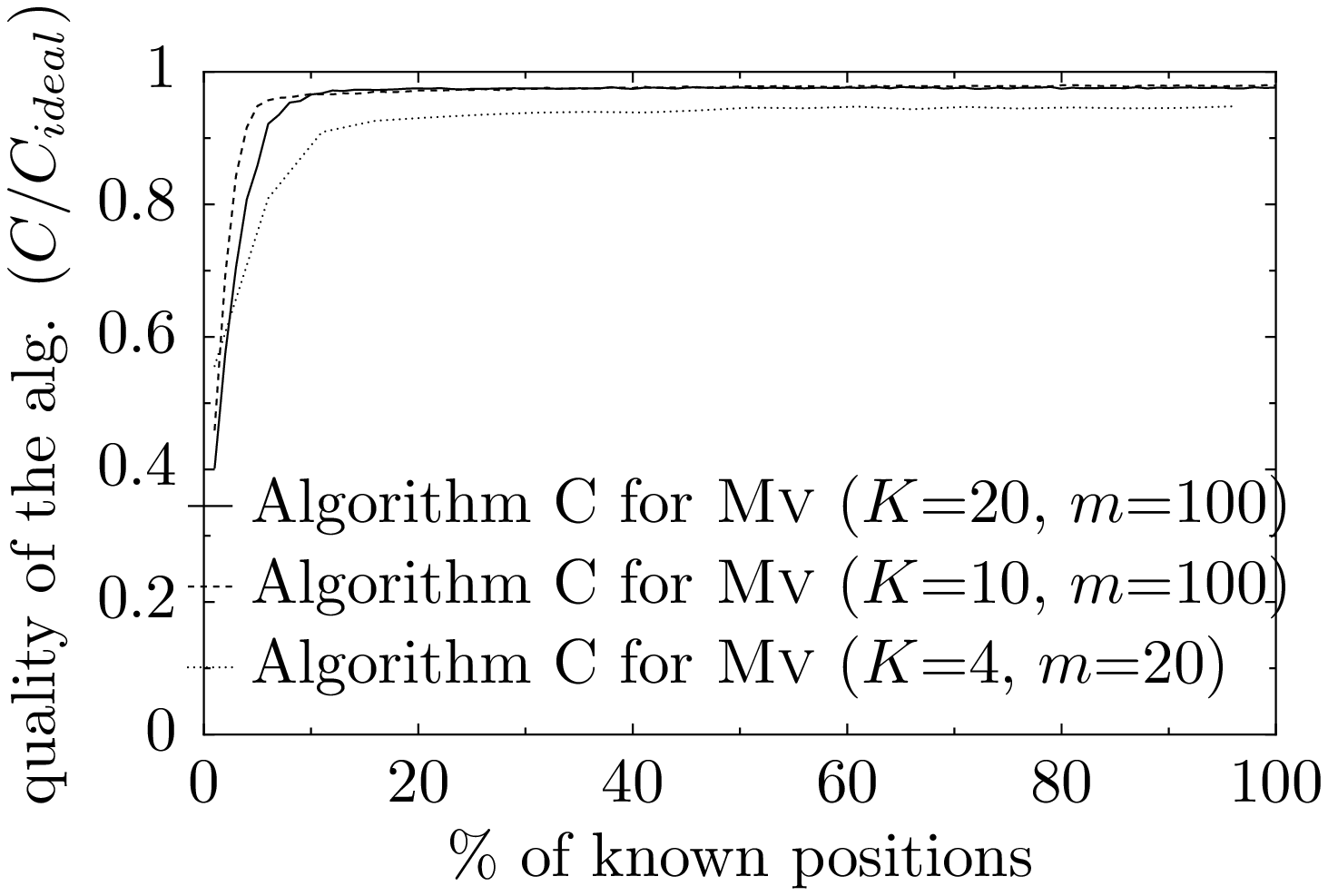}
\end{minipage}
\hspace{0.1cm}
\begin{minipage}[h]{0.32\linewidth}
  \centering
  \includegraphics[width=\textwidth]{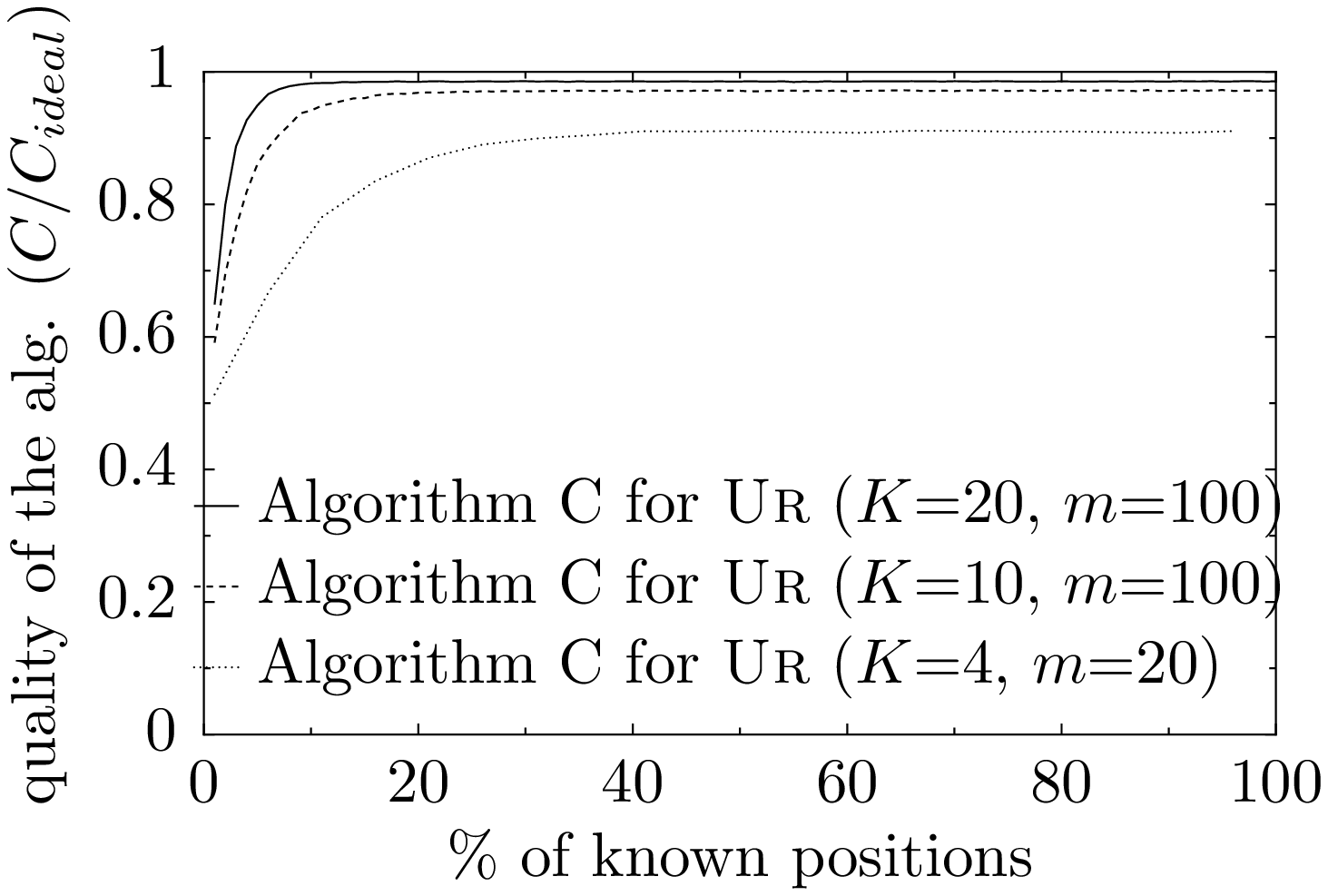}
\end{minipage}
\hspace{0.1cm}
\begin{minipage}[h]{0.32\linewidth}
  \centering
  \includegraphics[width=\textwidth]{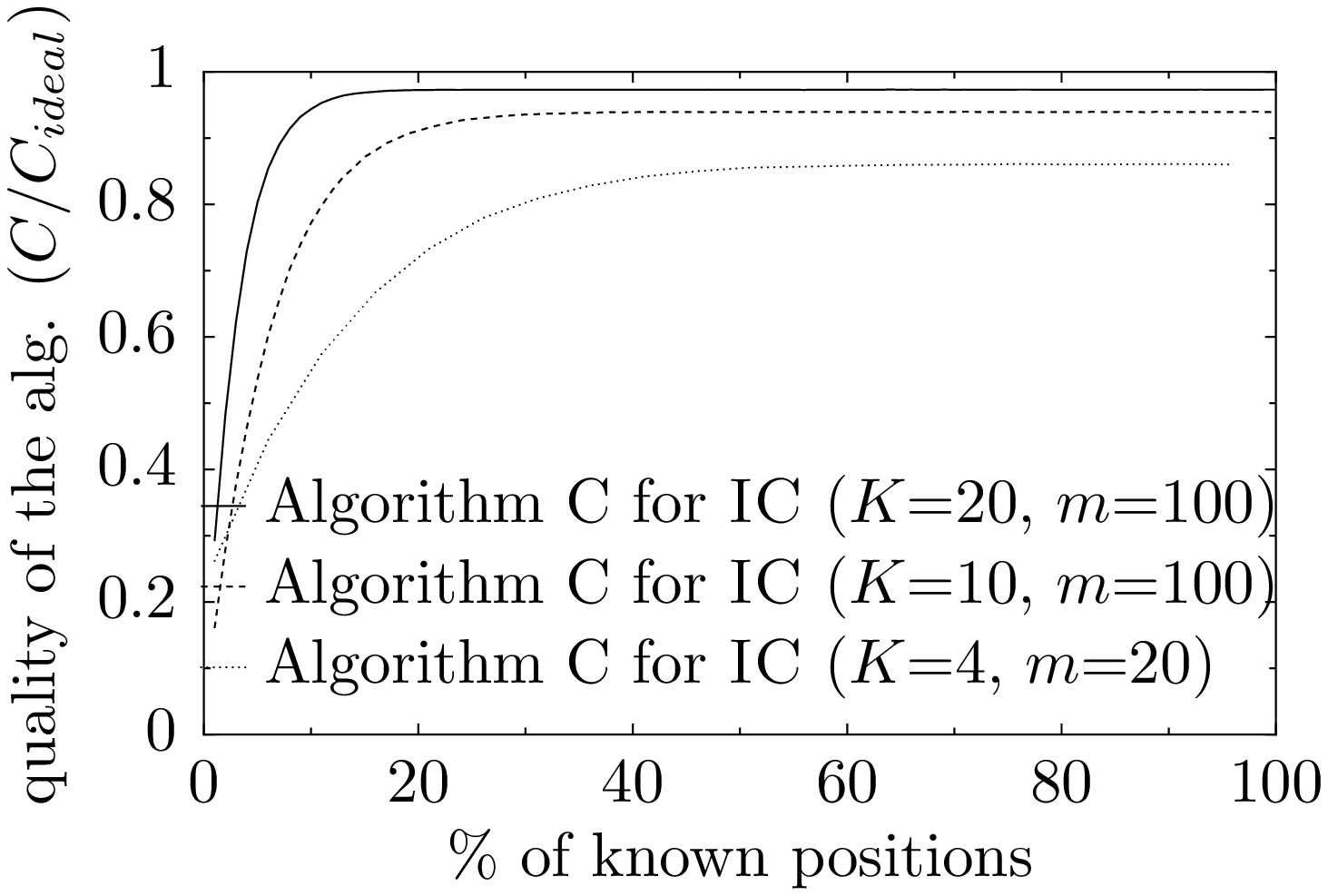}
\end{minipage}
  \vspace{0.2cm}

\begin{minipage}[h]{0.32\linewidth}
  \centering
  \includegraphics[width=\textwidth]{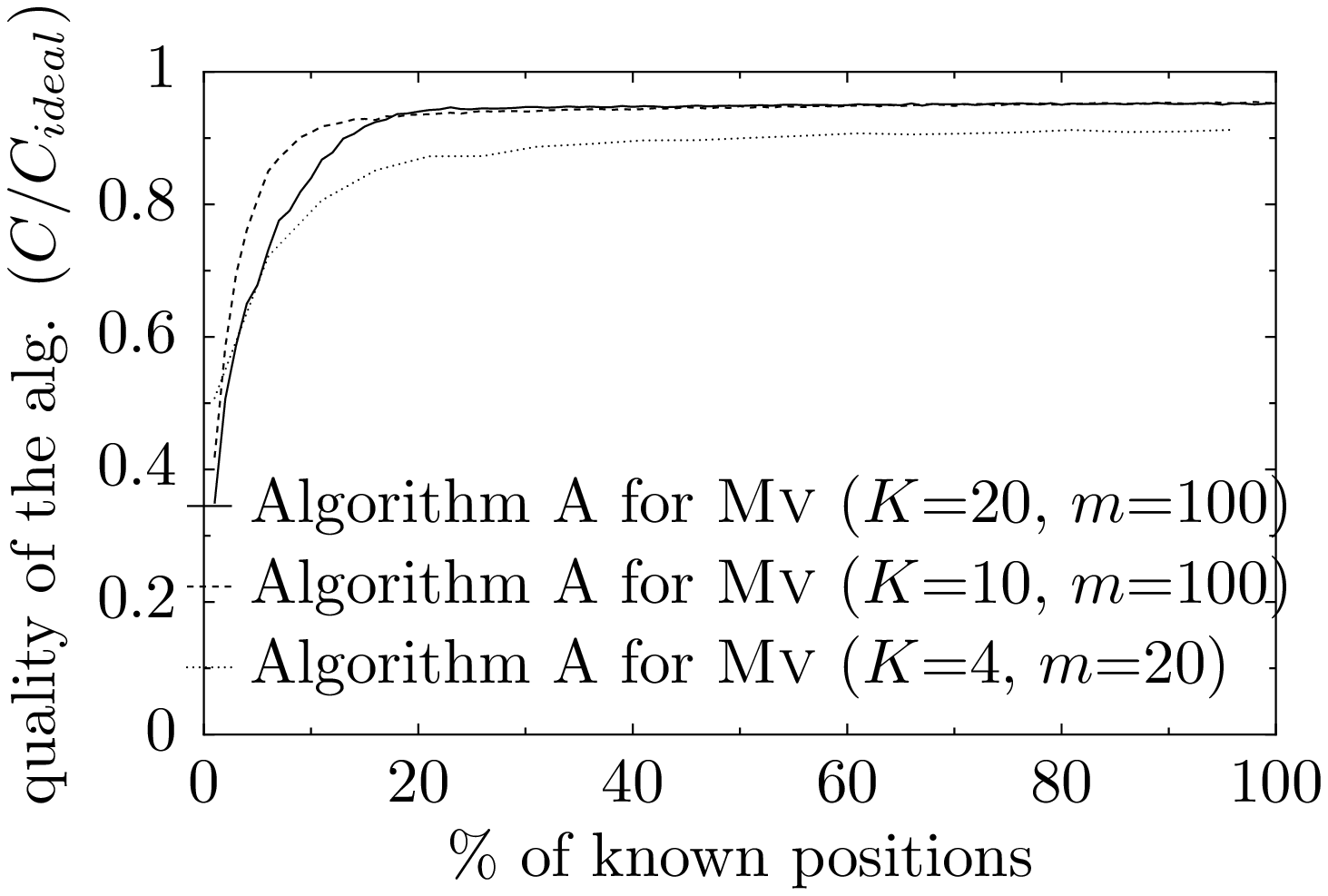}
\end{minipage}
\hspace{0.1cm}
\begin{minipage}[h]{0.32\linewidth}
  \centering
  \includegraphics[width=\textwidth]{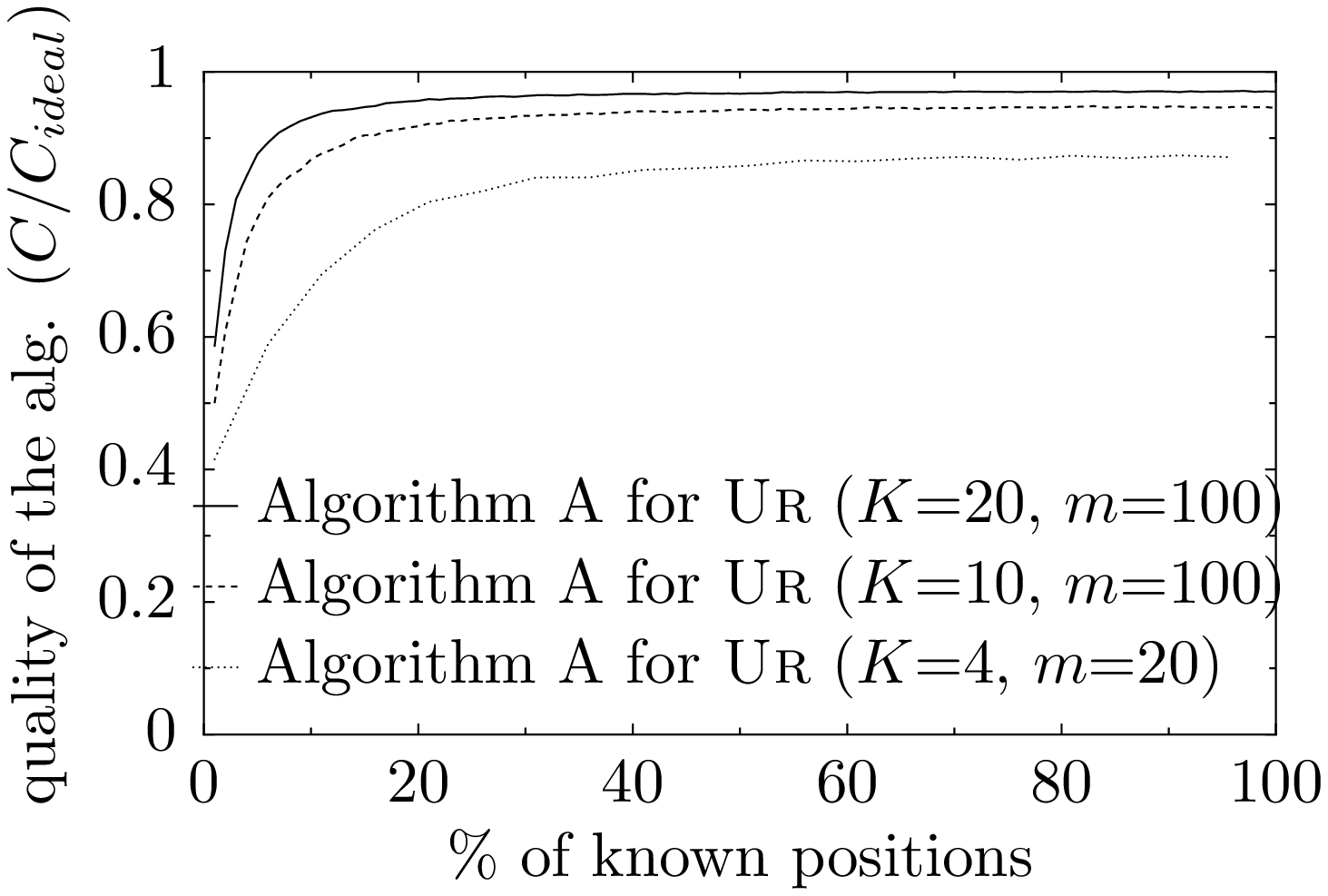}
\end{minipage}
\hspace{0.1cm}
\begin{minipage}[h]{0.32\linewidth}
  \centering
  \includegraphics[width=\textwidth]{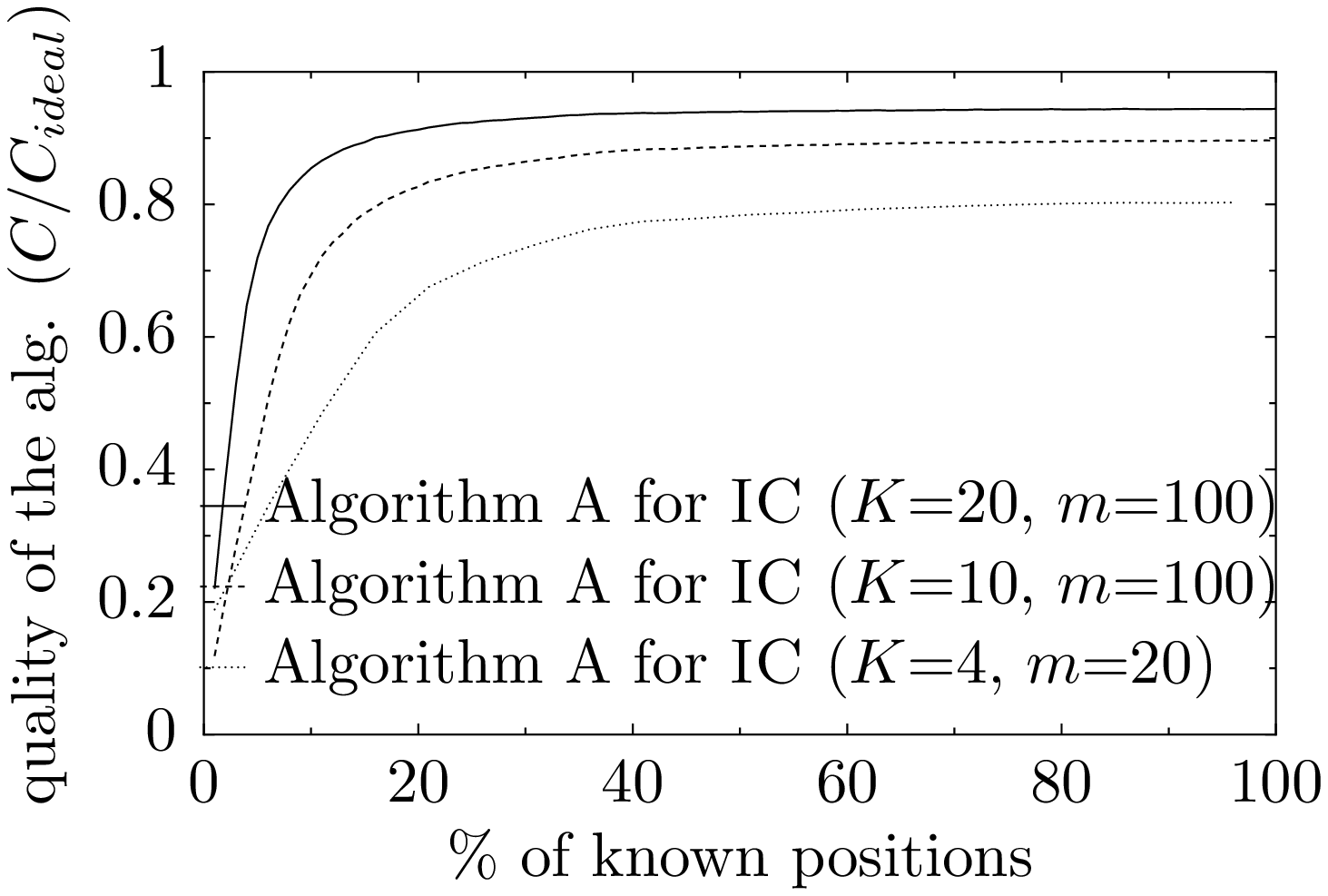}
\end{minipage}
  \vspace{0.2cm}

\begin{minipage}[h]{0.32\linewidth}
  \centering
  \includegraphics[width=\textwidth]{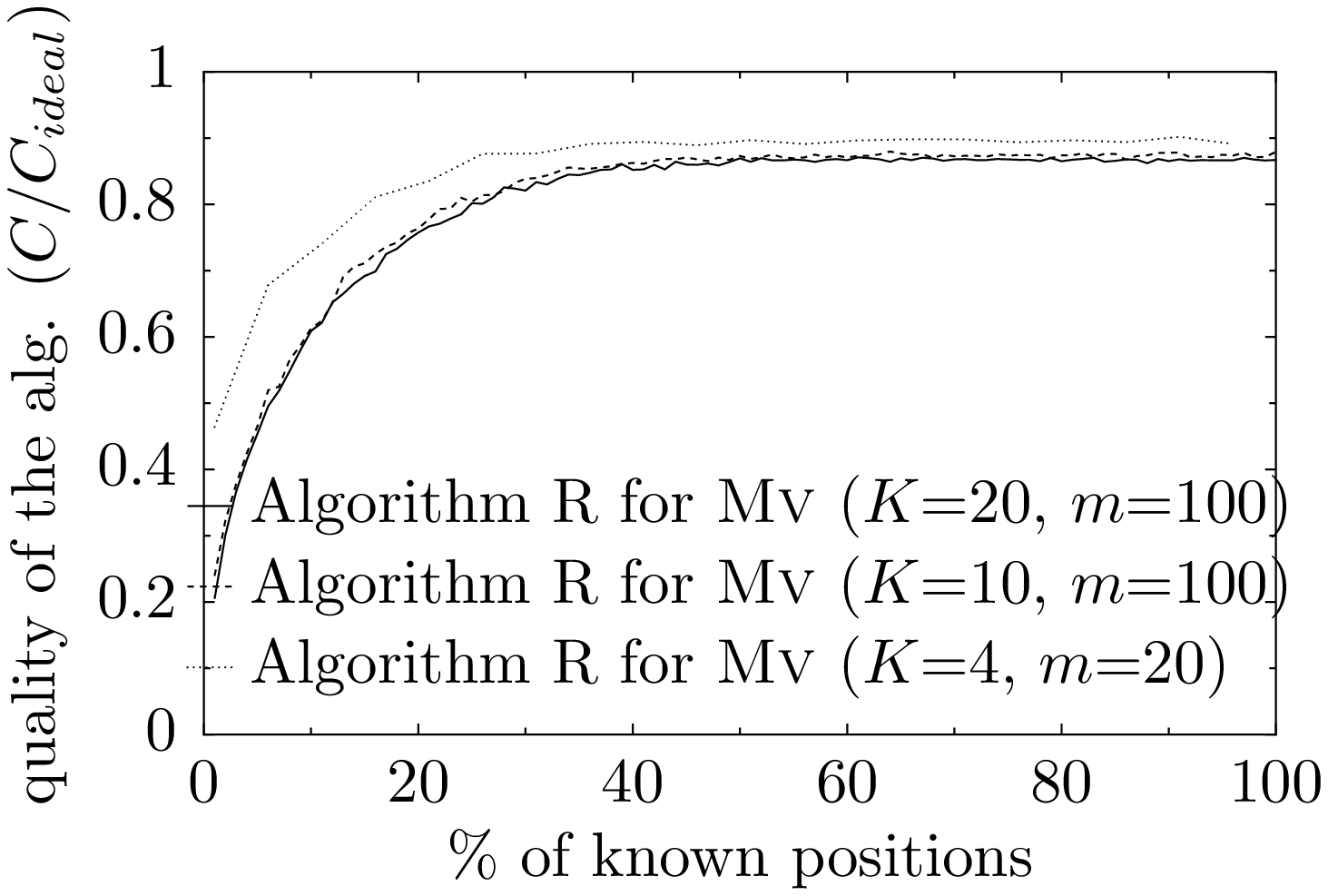}
\end{minipage}
\hspace{0.1cm}
\begin{minipage}[h]{0.32\linewidth}
  \centering
  \includegraphics[width=\textwidth]{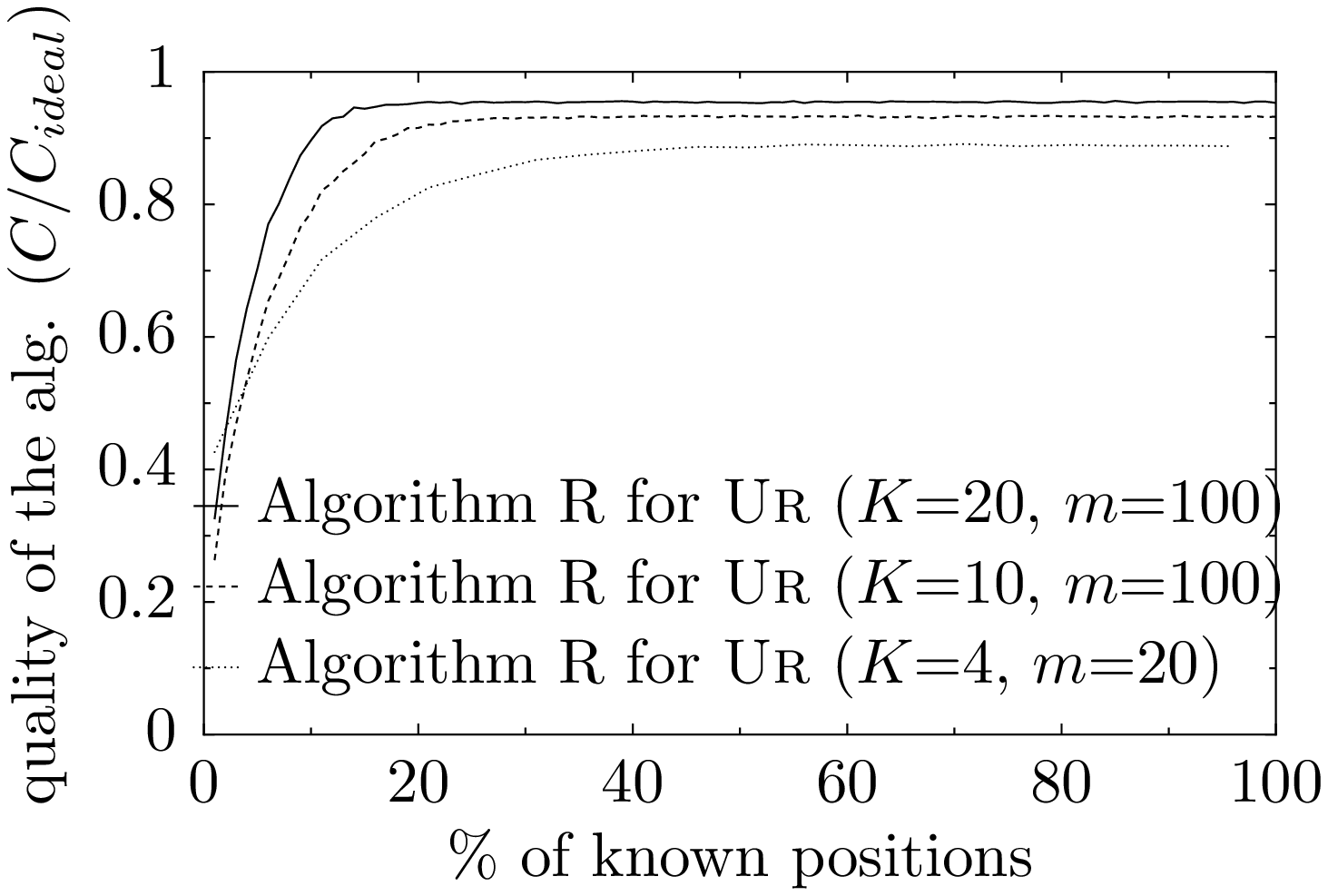}
\end{minipage}
\hspace{0.1cm}
\begin{minipage}[h]{0.32\linewidth}
  \centering
  \includegraphics[width=\textwidth]{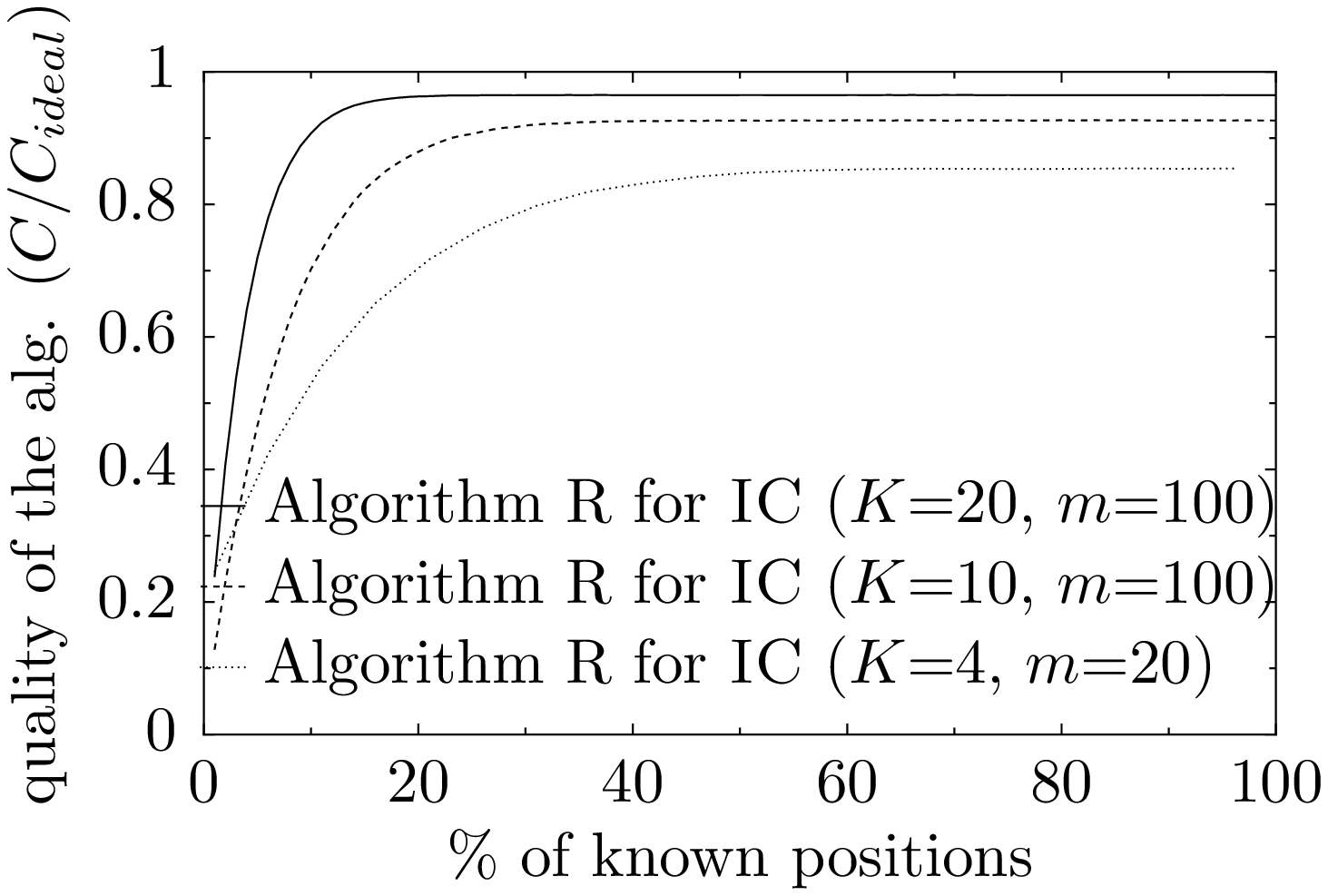}
\end{minipage}
  \caption{The relation between the percentage of known positions $P/m$ [\%] and the quality of the algorithm {$C/C_{\ideal}$} for Algorithms~C,~A,~and~R for Monroe's system. Each row of the plots describes one algorithm; each column describes one data set; ${n = 1000}$. (Results for the Mallows model are similar to those for the urn model and are omitted for clarity.)}
  \label{fig:changing_borda_monroe}
\end{figure*}

\begin{figure*}[t!h!]
\begin{minipage}[h]{0.32\linewidth}
  \centering
  \includegraphics[width=\textwidth]{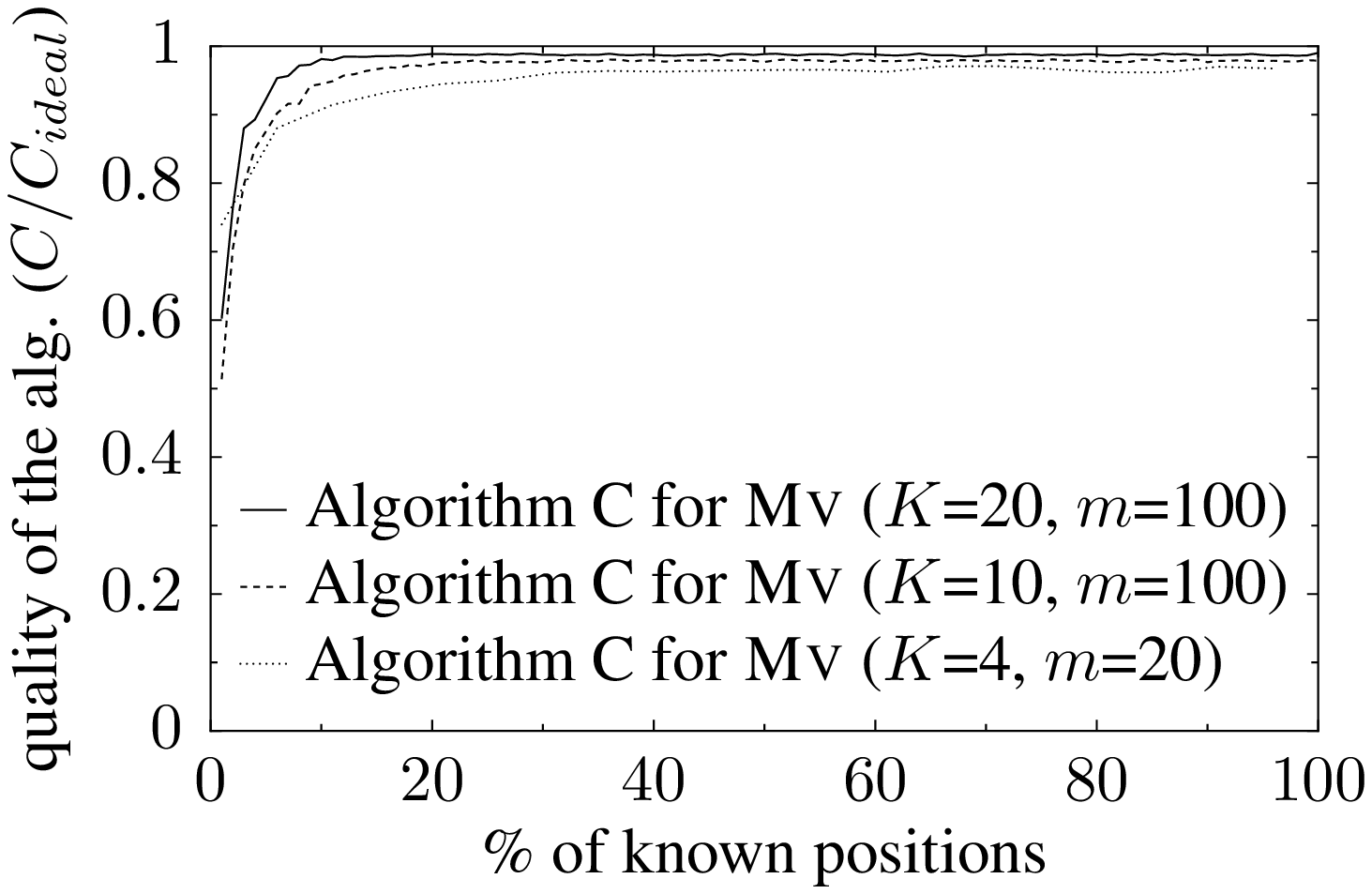}
\end{minipage}
\hspace{0.1cm}
\begin{minipage}[h]{0.32\linewidth}
  \centering
  \includegraphics[width=\textwidth]{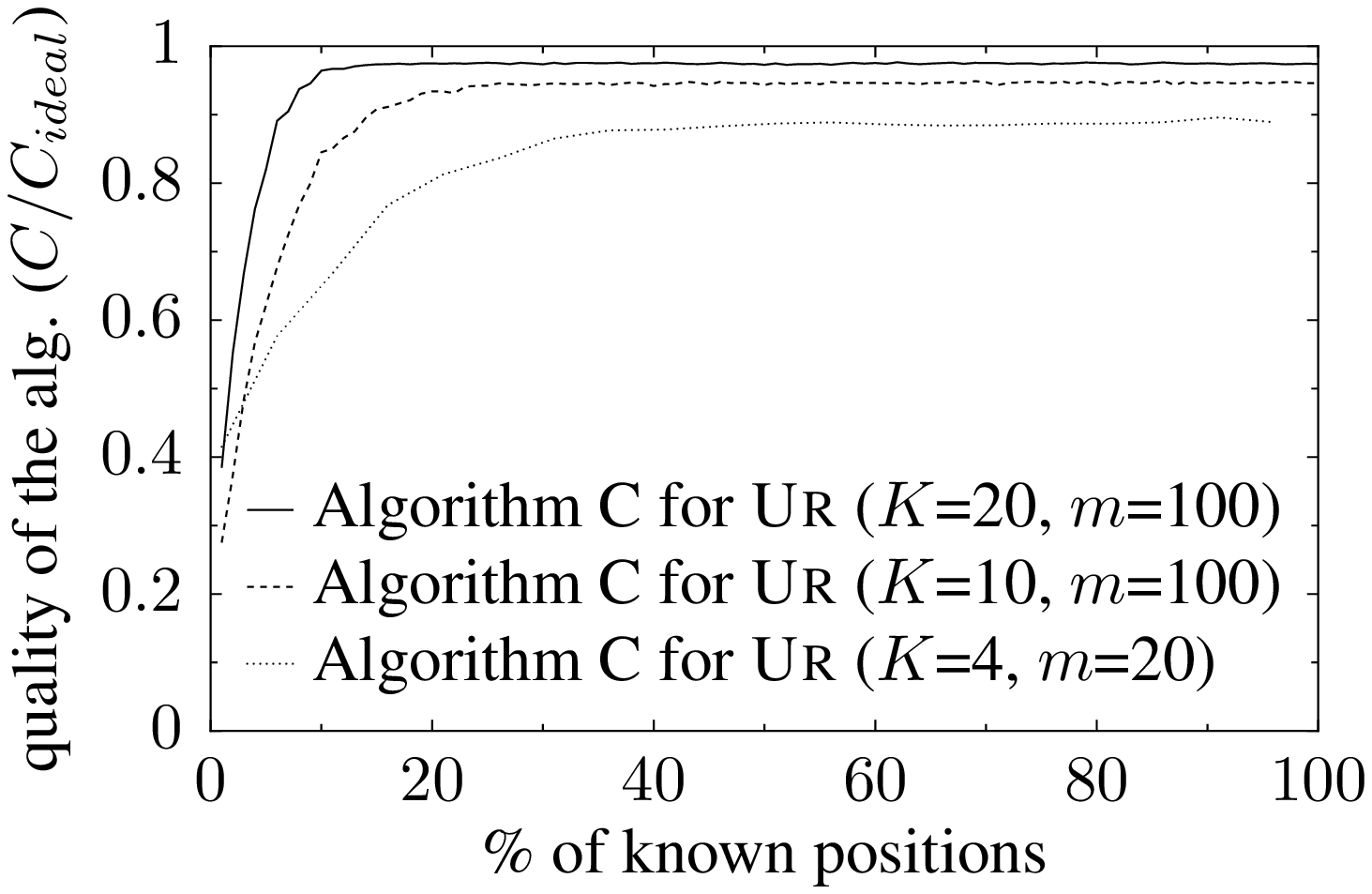}
\end{minipage}
\hspace{0.1cm}
\begin{minipage}[h]{0.32\linewidth}
  \centering
  \includegraphics[width=\textwidth]{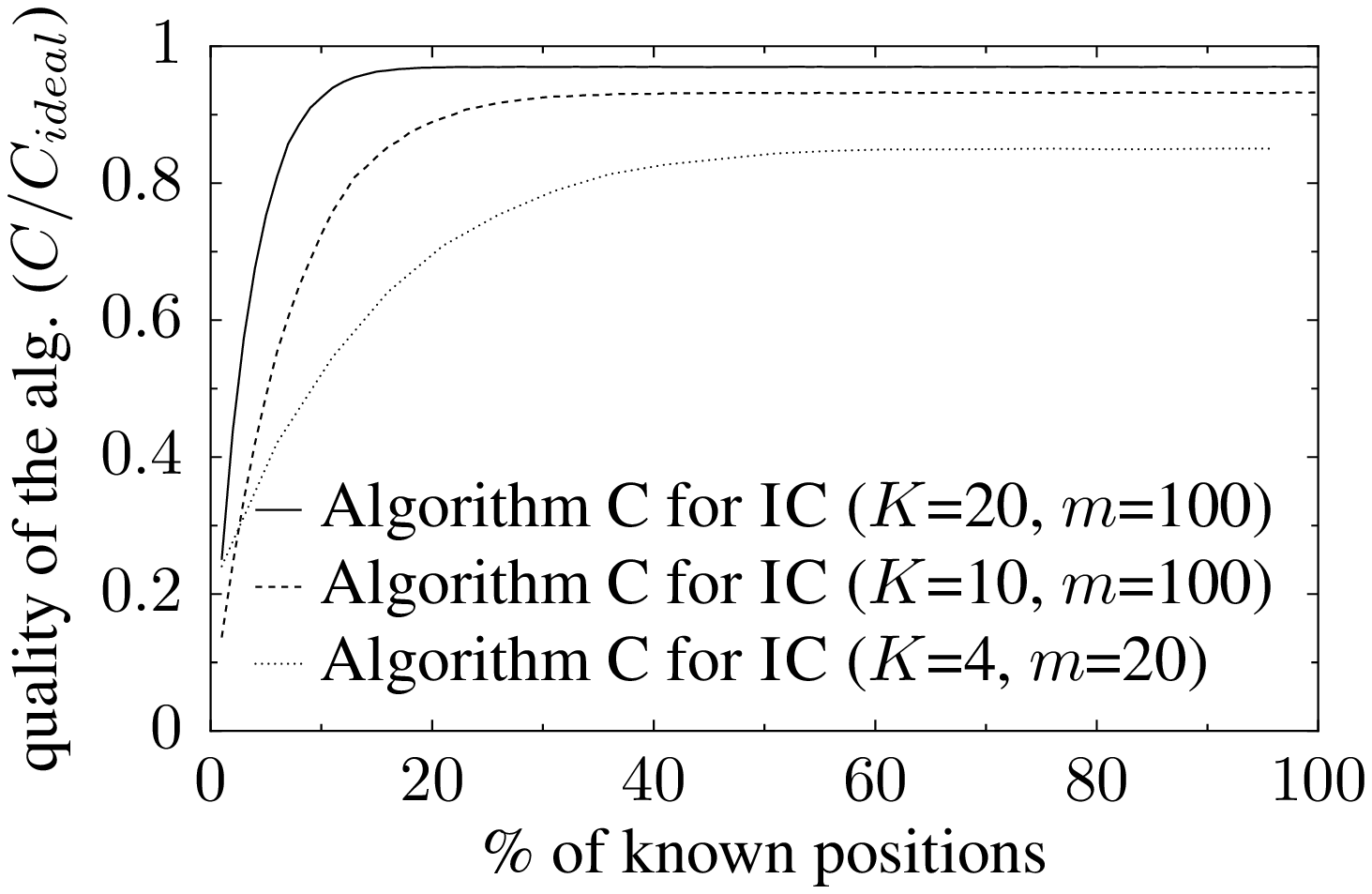}
\end{minipage}
  \vspace{0.2cm}

\begin{minipage}[h]{0.32\linewidth}
  \centering
  \includegraphics[width=\textwidth]{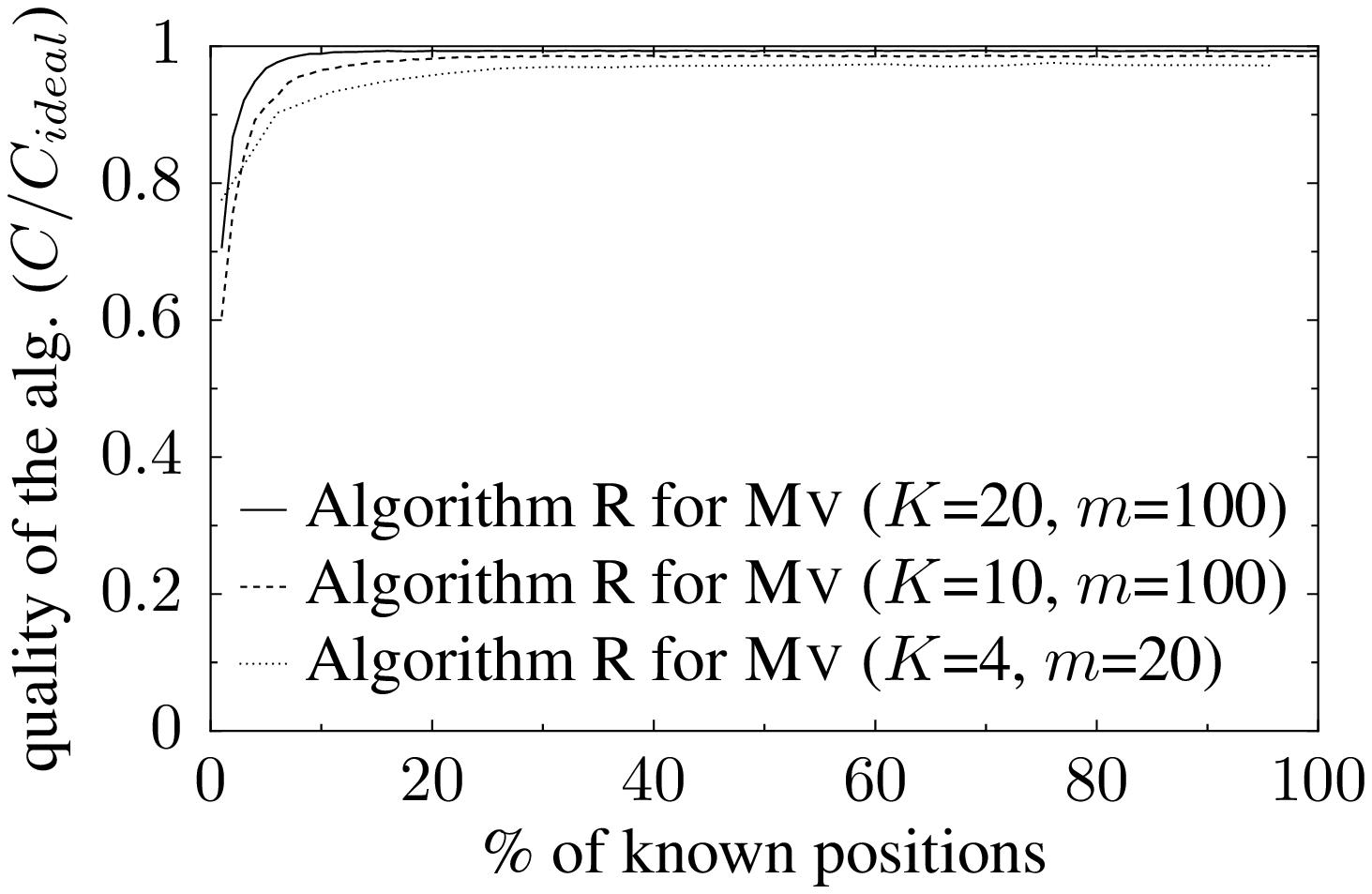}
\end{minipage}
\hspace{0.1cm}
\begin{minipage}[h]{0.32\linewidth}
  \centering
  \includegraphics[width=\textwidth]{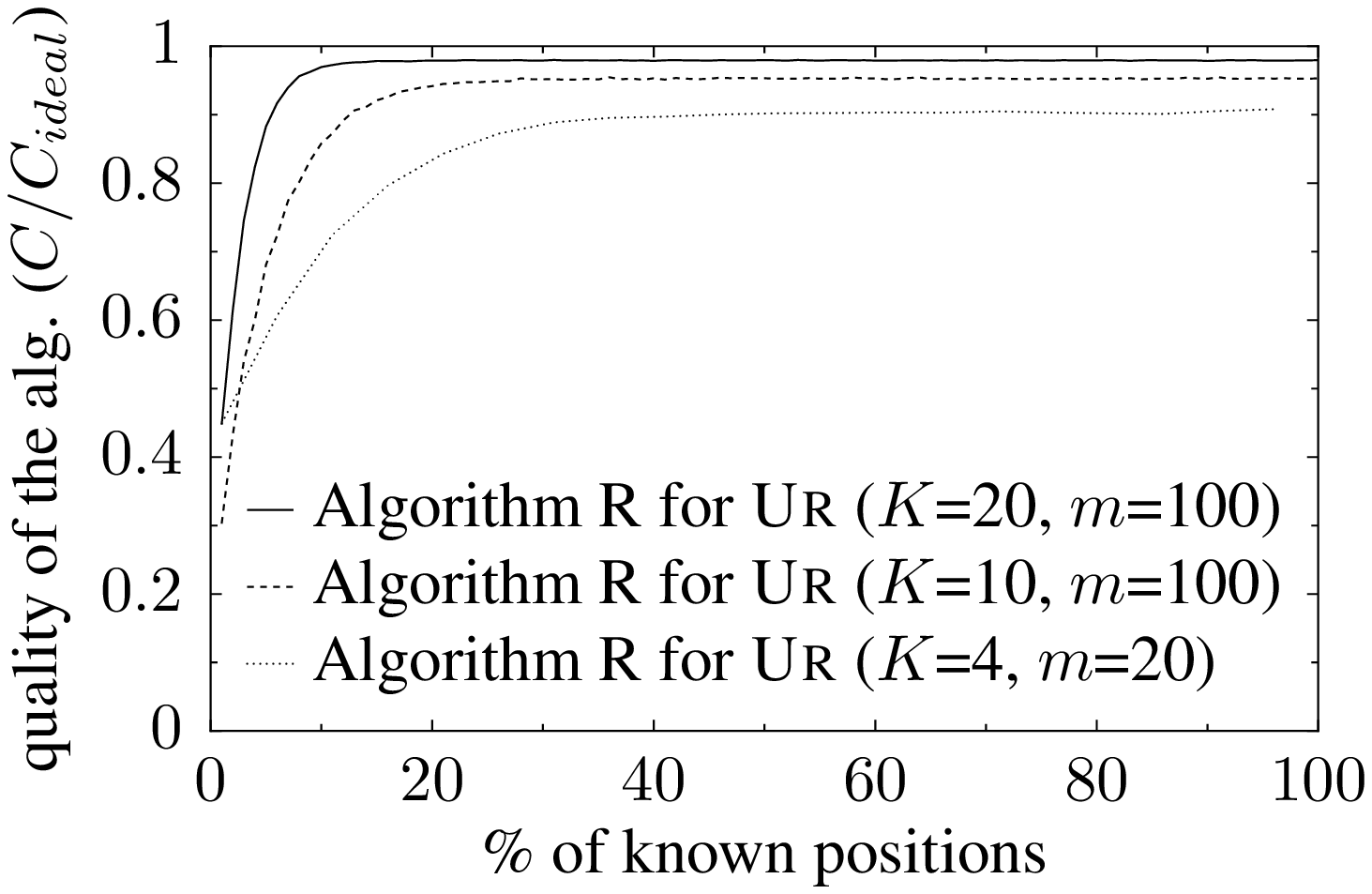}
\end{minipage}
\hspace{0.1cm}
\begin{minipage}[h]{0.32\linewidth}
  \centering
  \includegraphics[width=\textwidth]{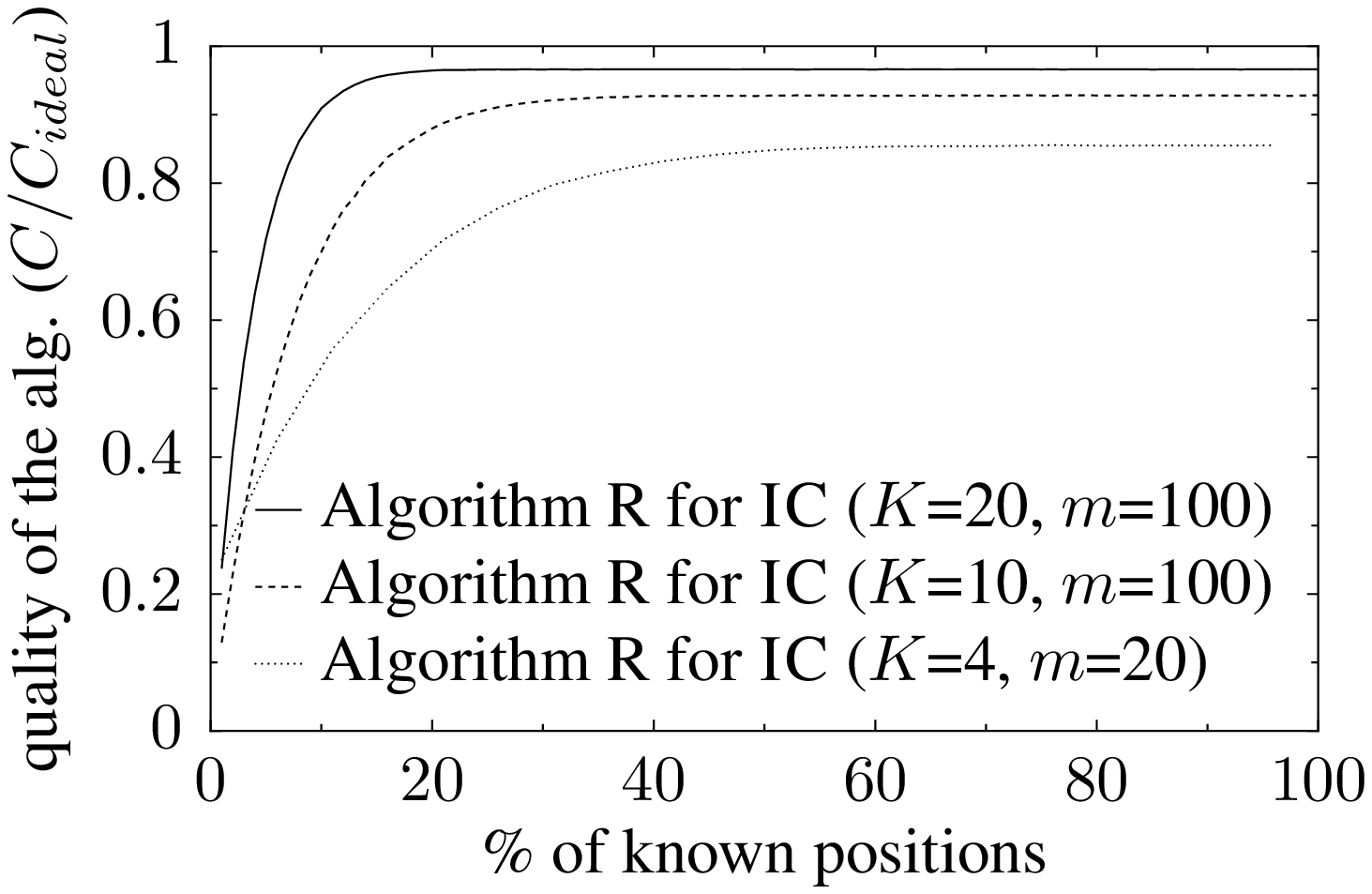}
\end{minipage}
  \caption{The relation between the percentage of known positions $P/m$ [\%] and the quality of the algorithm {$C/C_{\ideal}$} for Algorithms~C~and~R for the Chamberlin--Courant system. Each row of the plots describes one algorithm; each column describes one data set; ${n = 1000}$. (Results for the Mallows model are similar to those for the urn model and are omitted for clarity.)}
  \label{fig:changing_borda_cc}
\end{figure*}

The purpose of our third set of experiments was to see how our
algorithm behave in practical settings with truncated ballotrs.  We
conducted this part of evaluation on relatively large instances,
including $n=1000$ agents and up to $m=100$ alternatives. Thus, in
this set of experiments, we used the same sets of data as in the
previous subsection: the Netflix data set and the synthetic
distributions. Similarly, we evaluated the same algorithms:
Algorithm~A,~C,~and~R for the case of Monroe's system, and
Algorithm~C,~and~R for the case of the Chamberlin--Courant system.

For each data set and for each algorithm we run experiments for 3
independent settings with different values of the parameters
describing the elections: (1) $m=100$, $K=20$, (2) $m=100$, $K=10$,
and (3) $m=20$, $K=4$. For each setting we run the experiments for the
values of $P$ (the number of known positions) varying between 1 and
$m$.

For each algorithm, data set, setting and each value of $P$ we run 50
independent experiments in the following way. From a data set we
sampled a sub-profile of the appropriate size $n \times m$. We
truncated this profile to the $P$ first positions. We run the
algorithm for the truncated profile and calculated the quality ratio
${C}/{C_{\ideal}}$. When calculating ${C}/{C_{\ideal}}$ we assumed the
worst case scenario, i.e., that the satisfaction of the agent from an
alternative outside of his/her first $P$ positions is equal to 0. In
other words, we used the positional scoring function described by the
following vector: $\langle m-1, m-2, \dots, m-P, 0, \dots 0
\rangle$. Next, we averaged the values of ${C}/{C_{\ideal}}$ over all
50 experiments.

The relation between the percentage of the known positions in the
preference profile and the average quality of the algorithm for the
Monroe and Chamberlin--Courant systems are plotted in
Figures~\ref{fig:changing_borda_monroe}~and~\ref{fig:changing_borda_cc},
respectively. We omit the plots for Mallow's model, as in this
case we obtained almost identical results as for the Urn model. We
have the following conclusions.
\begin{enumerate}
\item All the algorithms require only small number of the top
  positions to achieve their best quality. Here, the deterministic
  algorithms are superior.
\item The small elections with synthetic distributions appear to be
  the worst case scenario---in such case we require the knowledge of
  about 40\% of the top positions to obtain the highest approximation
  ratios of the algorithms. In the case of the NetFlix data set, even
  on small instances the deterministic algorithms require only about
  8\% of the top positions to get their best quality (however the
  quality is already high for 3-5\% of the top positions). For the
  larger number of the alternatives, the algorithms do not require
  more than 3\% of the top positions to reach their top results.
\item Algorithm C does not only give the best quality but it is also
  most immune to the lack of knowledge. These results are more evident
  for the case of the Monroe system.
\end{enumerate}

\subsection{Running time}

\begin{figure}[b!]
  \centering
  \includegraphics[width=0.6\textwidth]{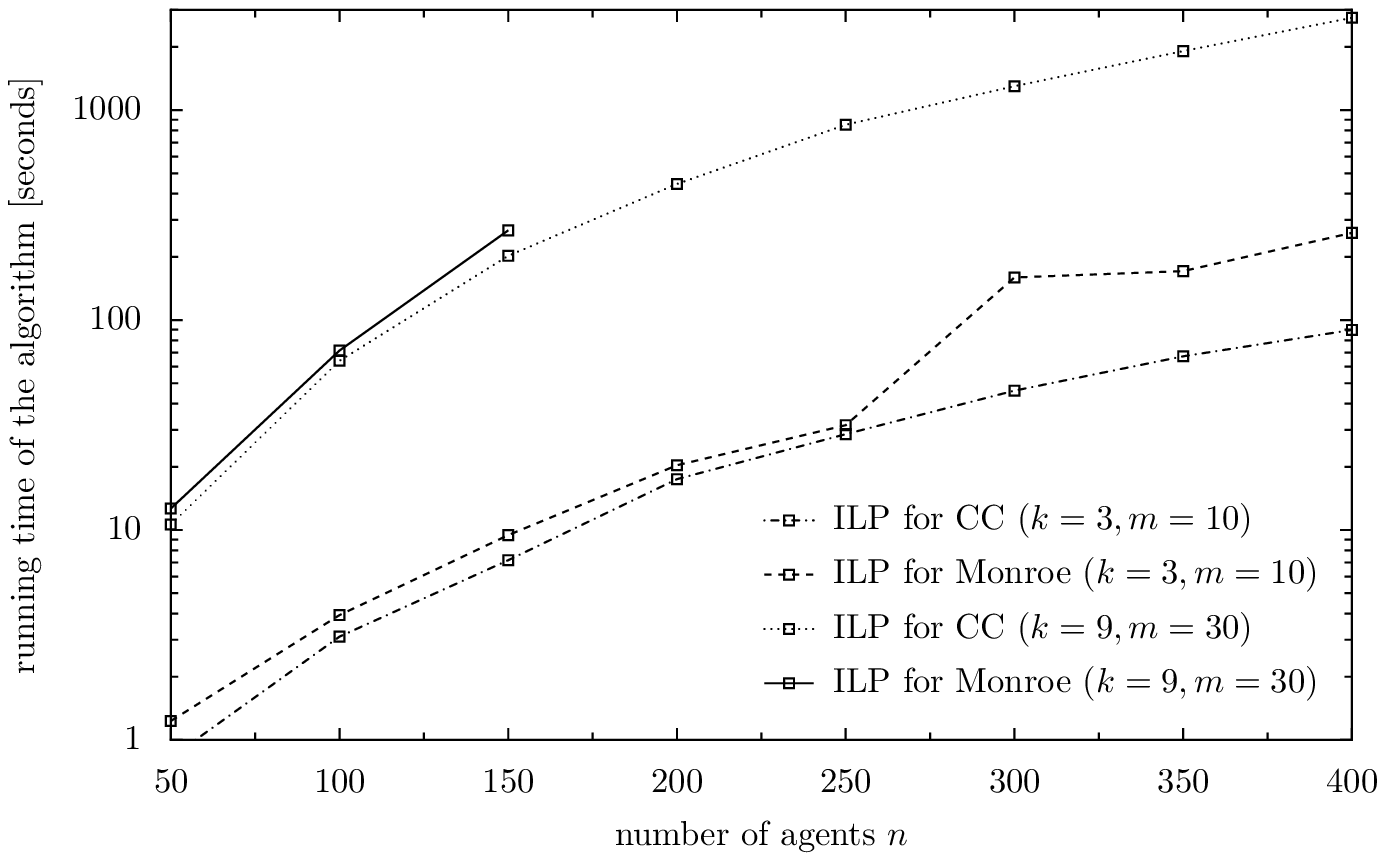}
  \caption{The running time of the standard ILP solver for the
    Monroe and for the Chamberlin--Courant systems.  For
    Monroe's system, for {$K=9, m = 30$}, and for
    {$n \geq 200$} none of the single algorithm execution
    finished within 1 day.}
  \label{fig:ilp_runtime}
\end{figure}

\begin{table}[th!]
\begin{center}
%\small
%\setlength{\tabcolsep}{6pt}
\begin{tabular}{|c|c|c|c|c||c|c|c|}
\cline{3-8}
\multicolumn{2}{c|}{} & \multicolumn{3}{|c||}{$m=10$, $K=3$} & \multicolumn{3}{|c|}{$m=10$, $K=6$} \\
\cline{2-8}
\multicolumn{1}{c|}{} & $n=$ & 2000 & 6000 & 10000 & 2000 & 6000 & 10000\\
\cline{1-8}
\multirow{5}{*}{\begin{sideways} Monroe \end{sideways}}
& A & $0.01$ & $0.03$ & $0.05$  & $0.01$ & $0.04$ & $0.07$ \\
& B & $0.08$ & $0.9$ & $2.3$  & $0.2$ & $1.4$ & $3.6$ \\
& C & $1.1$ & $8$ & $22$  & $2.1$ & $16$ & $37$ \\
& GM & $0.8$ & $7.3$ & $20$ & $1.9$ & $13$ & $52$ \\
& R & $7.6$ & $50$ & $180$  & $6.5$ & $52$ & $140$ \\
\cline{1-8}
\multirow{4}{*}{\begin{sideways} CC \end{sideways}}
& C & $0.02$ & $0.07$ & $0.12$ & $0.05$ & $0.14$ & $0.26$ \\
& GM & $0.003$ & $0.009$ & $0.015$ & $0.003$ & $0.01$ & $0.018$ \\
& P & $0.009$ & $0.032$ & $0.05$ & $0.008$ & $0.02$ & $0.05$ \\
& R & $0.014$ & $0.04$ & $0.065$ & $0.02$ & $0.06$ & $0.11$ \\
\cline{1-8}
%\end{tabular}
\multicolumn{2}{c}{} \\
%\newline

%\begin{tabular}{|c|c|c|c|c||c|c|c|}
\cline{3-8}
\multicolumn{2}{c|}{} & \multicolumn{3}{|c||}{$m=100$, $K=30$} & \multicolumn{3}{|c|}{$m=100$, $K=60$} \\
\cline{2-8}
\multicolumn{1}{c|}{} & $n=$ & 2000 & 6000 & 10000 & 2000 & 6000 & 10000\\
\cline{1-8}
\multirow{5}{*}{\begin{sideways} Monroe \end{sideways}}
& A & $0.5$ & $1.6$ & $2.8$  & $0.9$ & $2.8$ & $4.9$ \\
& B & $0.8$ & $4$ & $9.5$  & $1.7$ & $8$ & $18$ \\
& C & $38$ & $140$ & $299$  & $64$ & $221$ & $419$ \\
& GM & $343$ & $2172$ & $5313$ & $929$ & $5107$ & $13420$ \\
& R & $41$ & $329$ & $830$  & $88$ & $608$ & $1661$ \\
\cline{1-8}
\multirow{4}{*}{\begin{sideways} CC \end{sideways}}
& C & $4.3$ & $11$ & $19$ & $7.5$ & $19$ & $31$ \\
& GM & $0.06$ & $0.2$ & $0.4$ & $0.09$ & $0.3$ & $0.7$ \\
& P & $0.03$ & $0.1$ & $0.26$ & $0.03$ & $0.1$ & $0.2$ \\
& R & $0.06$ & $0.24$ & $0.45$ & $0.1$ & $0.4$ & $0.8$ \\
\cline{1-8}
\end{tabular}
\end{center}

\caption{Example running times of the algorithms [in seconds].}
\label{table:runningTimes}
\end{table}

In our final set of experiments, we have measured running times of our
algorithms on the data set \textsc{Mv}. We have used a machine with
Intel Pentium Dual T2310 1.46GHz processor and 1.5GB of RAM.  In
Figure~\ref{fig:ilp_runtime} we show the running times of the GLPK ILP
solver for the Monroe and for Chamberlin--Courant rules. These running
times are already large for small instances and they are increasing
exponentially with the number of voters. For the Monroe rule, even for
$K=9, m = 30, n=100$ some of the experiments timed out after 1 hour,
and for $K=9, m = 30, n=200$ none of the experiments finished within
one day.  Thus we conclude that the real application of the ILP-based
algorithm is very limited. 

Example running times of the other algorithms for some combinations of
$n$, $m$, and $K$ are presented in Table~\ref{table:runningTimes}.
For the case of CC, essentially all the algorithms are very fast and
the quality of computed solutions is the main criterion in choosing
among them. For the case of Monroe, the situation is more complicated.
While for small elections all the algorithms are practical, for
elections with thousands of voters, using Algorithm~GM becomes
problematic. Indeed, even Algorithm~C can be seen as a bit too slow if
one expects immediate results. On the other hand, Algorithms~A and~B
seem perfectly practical and, as we have seen in the previous
experiments, give high-quality results.

\section{Summary}
\label{sec:conclusions}

We have defined a certain resource allocation problem and have shown
that it generalizes the problem of finding winners for the  multiwinner voting rules of Monroe and of
Chamberlin and Courant. Since it is known that the winners for these voting rules are
hard to compute~\cite{complexityProportionalRepr,budgetSocialChoice,fullyProportionalRepr,sko-yu-fal-elk:c:single-crossing-monroe-cc,sko-fal:t:max-cover},
we focused on finding approximate solutions. We have shown that if we
try to optimize agents' dissatisfaction, then our problems are hard to
approximate up to any constant factor. The same holds for the case
where we focus on the satisfaction of the least satisfied agent. However, for
the case of optimizing total satisfaction, we suggest good
approximation algorithms.  %for the special cases of the resource allocation
%problem pertaining to the Monroe and Chamberlin-Courant voting
%systems. 
In particular, for the Monroe system we suggest a randomized
algorithm that for the Borda score achieves an approximation ratio
arbitrarily close to $0.715$ (and much better in many real-life
settings), and ($1 - \frac{1}{e}$)-approximation algorithm for
arbitrary positional scoring function.  For the Chamberlin-Courant
system, we have shown a polynomial-time approximation scheme (PTAS).

\begin{table}[t!]
\centering
\footnotesize
\begin{tabular}{p{0.2cm}|p{1.6cm}|p{1.6cm}|p{1.6cm}|p{1.6cm}|p{1.6cm}|p{2.0cm}|}
\cline{2-7}
& \multicolumn{2}{|c|}{Monroe} & \multicolumn{2}{|c|}{Chamberlin and Courant}  &\multicolumn{2}{|c|}{General model}\\ \cline{2-7}
& dissat. & satisfaction & dissat. & satisfaction & dissat. & satisfaction \\ \cline{1-7}
\multicolumn{1}{|c|}{\begin{rotate}{270}util.\end{rotate}}
& Inapprox. $\newline$ Theorem~\ref{theorem:noApprox1}
& Good approx.
& Inapprox. $\newline$ Theorem~\ref{theorem:noApprox3}
& Good approx.
& Inapprox. $\newline$ Theorem~\ref{theorem:noApprox1} Theorem~\ref{theorem:noApprox3}
& Open problem \\ \cline{1-7}
\multicolumn{1}{|c|}{\begin{rotate}{270}egal.\end{rotate}}
& Inapprox. $\newline$ Theorem~\ref{theorem:noApprox2}
& Inapprox. $\newline$ Theorem~\ref{theorem:noApprox5}
& Inapprox. $\newline$ Theorem~\ref{theorem:noApprox4}
& Inapprox. $\newline$ Theorem~\ref{theorem:noApprox6}
& Inapprox. $\newline$ Theorem~\ref{theorem:noApprox2} Theorem~\ref{theorem:noApprox4}
& Inapprox. $\newline$ Theorem~\ref{theorem:noApprox5} Theorem~\ref{theorem:noApprox6}
\\ \cline{1-7}
\end{tabular}
\caption{Summary of approximability results for the Monroe and Chamberlin-Courant multiwinner voting systems and for the general resource allocation problem.}
\label{table:summary}
\end{table}

\begin{table}[t!]
\small
\begin{center}
  \setlength{\tabcolsep}{3pt}
  \begin{tabular}{|p{0.45cm}|c|p{4.2cm}|c|c|}
  \hline
  & Algorithm & Approximation & Runtime & Reference \\
  \hline
  \multicolumn{1}{|c|}{\begin{rotate}{270}\vspace{0.2cm}\hspace{0.55cm}Monroe\end{rotate}}
  & A    & $1 - \frac{K-1}{2(m-1)} - \frac{H_K}{K}$ & $Kmn$ & Lemma~\ref{lemma:greedy}  \\
  & B    & as in Algorithm A & $Kmn$$+$$O(\Phi^{S})$ & Lemma~\ref{lemma:greedy}\\
  & C    & as in Algorithm A & $dKmn$$+$$dO(\Phi^{S})$ & Lemma~\ref{lemma:greedy}\\
  & GM   & as in Alg. A for Borda PSF; $1-\frac{1}{e}$ for others   & $KmO(\Phi^{S})$ & Theorem~\ref{thm:gmMonroe} \\
  & R    & $\frac{1}{2}(1 + \frac{K}{m} - \frac{K^2m - K^{3}}{m^3-m^2})$ & $\frac{|\log (1 -\lambda)|}{K\epsilon^2}O(\Phi^{S})$ & Lemma~\ref{lemma:randMonroe} \\
  & AR   & $0.715$  & $\max(\textrm{A}, \textrm{R})$ & Theorem~\ref{thm:combMonroe} \\
  \hline
  \multicolumn{1}{|c|}{\begin{rotate}{270}\hspace{0.55cm}CC\end{rotate}}
  & \multicolumn{3}{|c|}{PTAS} & Theorem~\ref{theorem:ptas} \\
  \cline{2-5}
  & P    & $1 - \frac{2\w(K)}{K}$ & $nm\w(K)$ & Lemma~\ref{lemma:greedyCC} \\
  & GM   & $1-\frac{1}{e}$ & $Kmn$ & Lu and Boutilier~\cite{budgetSocialChoice} \\
  & C    & as in Algorithm GM & $dKm(n$$+$$\log dm)$ & Lu and Boutilier~\cite{budgetSocialChoice}  \\
  & R    & $(1-\frac{1}{K+1})(1 + \frac{1}{m})$ & $\frac{|\log (1 -\lambda)|}{\epsilon^2}n$ & Oren~\cite{ore:p:cc} \\
  \hline  
  \end{tabular}
  \caption{\label{tab:algs}A summary of the algorithms 
%that we study.
studied in this paper. 
The top of the table regards algorithms for Monroe's rule
    and the bottom for the Chamberlin--Courant rule. In column
    ``Approximation'' we give currently known approximation ratio for the
    algorithm under Borda PSF, on profiles with $m$
    candidates and where the goal is to select a committee of size
    $K$. Here, $O(\Phi^{S}) = O(n^2(K +
      \mathrm{log}n))$ is the complexity of finding a partial
    representation function with the algorithm of Betzler et
    al.~\cite{fullyProportionalRepr}. $\w(\cdot)$ denotes Lambert's W-Function.}
\end{center}
\vspace{-0.75cm}
\end{table}

In Table~\ref{table:summary} we present the summary of our
(in)approximability results.  In Table~\ref{tab:algs} we present
specific results regarding our approximation algorithms for the
utilitarian satisfaction-based framework.  In particular, the table
clearly shows that for the case of Monroe, Algorithms B and C are not
much slower than Algorithm A but offer a chance of improved
peformance. Algorithm GM is intuitively even more appealing, but
achieves this at the cost of high time complexity. For the case of
Chamberlin-Courant rule, theoretical results suggest using
Algorithm P (however, see below).

We have provided experimental evaluation of the algorithms for
computing the winner sets both for the Monroe and Chamberlin--Courant rules . While finding solutions for these rules is computationally
hard in the worst case, it turned out that in practice we can obtain
very high quality solutions using very simple algorithms. Indeed, both
for the Monroe and Chamberlin-Courant rules we recommend
using Algorithm~C (or Algorithm~A on very large Monroe elections).
Our experimental evaluation confirms that the algorithms work very
well in case of truncated ballots.  We believe that our results mean
that (approximations of) the Monroe and Chamberlin--Courant rules can
be used in practice.

Our work leads to a number of further research directions. First, it would be
very interesting to find a better upper bound on the quality of 
solutions for the (satisfaction-based) Monroe and Chamberlin--Courant
systems (with Borda PSF) than the simple $n(m-1)$ bound that we use
(where $n$ is the number of voters and $m$ is the number of
candidates). We use a different approach in our randomized algorithm,
but it would be much more interesting to find a deterministic
algorithm that beats the approximation ratios of our algorithms. One
of the ways of seeking such a bound would be to consider Monroe's rule
with ``exponential'' Borda PSF, that is, with PSF of the form, e.g.,
$(2^{m-1}, 2^{m-2}, \ldots, 1)$. For such PSF our  approach in the proof of
Lemma~\ref{lemma:greedy} would not give satisfactory results and so one would
be forced to seek different attacks.  In a similar vein, it would be
interesting to find out if there is a PTAS for Monroe's system.

In our work, we have focused on PSFs that are strictly
increasing/decreasing.  It would also be interesting to study PSFs
which increase/decrease but not strictly, that is allowing some equalities. We have started to work in this direction by
considering the so-called $t$-approval PSF's $\alpha_t$, which (in the
satisfaction-based variant) are defined as follows: $\alpha_t(i) = 1$
if $i \leq t$ and otherwise $\alpha_t(i) = 0$. Results for this case
for the Chamberlin--Courant rule are presented in the paper of Skowron
and Faliszewski~\cite{sko-fal:t:max-cover}.

On a more practical side, it would be interesting to develop our study
of truncated ballots. Our results show that we can obtain very high
approximation ratios even when voters rank only relatively few of
their top candidates. For example, to achieve 90\% approximation ratio
for the satisfaction-based Monroe system in Polish parliamentary
election ($K=460, m=6000$), each voter should rank about $8.7\%$ of
his or her most-preferred candidates. However, this is still over
$500$ candidates. It is unrealistic to expect that the voters  would be
willing to rank this many candidates. Thus, how should one organize
Monroe-based elections in practice, to balance the amount of effort
required from the voters and the quality of the results?

Finally, going back to our general resource allocation problem, we
note that we do not have any positive results for it (the negative
results, of course, carry over from the more restrictive settings). Is
it possible to obtain some good approximation algorithm for the
 resource allocation problem (in the utilitarian satisfaction-based setting) in full generality?

\medskip
\noindent\textbf{Acknowledgements} 
This paper is based on two extended abstracts, presented at IJCAI-2013
and AAMAS-2012.  We would like to thank the reviewers from these two
conferences for very useful feedback.  The authors were supported in
part by Poland’s National Science Center grants
UMO-2012/06/M/ST1/00358, DEC-2011/03/B/ST6/01393, and by AGH
University of Science and Technology grant 11.11.120.865. Piotr
Skowron was also supported by EU’s Human Capital Program "National PhD
Programme in Mathematical Sciences" carried out at the University of
Warsaw

.

\bibliographystyle{plain}
\bibliography{main}

\end{document}